\renewcommand\footnotetextcopyrightpermission[1]{}
\newcommand{\E}{\mathbb{E}} 
\renewcommand{\P}{\mathbb{P}} 
\newcommand{\N}{\mathbb{N}}
\DeclareMathOperator*{\argmax}{arg\,max}
\DeclareMathOperator*{\argmin}{arg\,min}
\DeclarePairedDelimiter{\ceil}{\lceil}{\rceil}
\DeclarePairedDelimiter{\floor}{\lfloor}{\rfloor}
\newtheorem{thm}{Theorem}
\newtheorem{lem}{Lemma}
\newtheorem{clm}{Claim}
\newtheorem{rem}{Remark}
\renewcommand{\@algocf@capt@plain}{above}
\begin{document}

\title{Robust Multi-Agent Multi-Armed Bandits}

\author{Daniel Vial}
\email{dvial@utexas.edu}
\affiliation{%
  \institution{University of Texas at Austin}
  \country{}
}

\author{Sanjay Shakkottai}
\email{sanjay.shakkottai@utexas.edu}
\affiliation{%
  \institution{University of Texas at Austin}
  \country{}
}

\author{R. Srikant}
\email{rsrikant@illinois.edu}
\affiliation{%
  \institution{University of Illinois at Urbana-Champaign}
  \country{}
}

\begin{abstract}
Recent works have shown that agents facing independent instances of a stochastic $K$-armed bandit can collaborate to decrease regret. However, these works assume that each agent always recommends their individual best-arm estimates to other agents, which is unrealistic in envisioned applications (machine faults in distributed computing or spam in social recommendation systems). Hence, we generalize the setting to include $n$ honest and $m$ malicious agents who recommend best-arm estimates and arbitrary arms, respectively. We first show that even with a single malicious agent, existing collaboration-based algorithms fail to improve regret guarantees over a single-agent baseline. We propose a scheme where honest agents learn who is malicious and dynamically reduce communication with (i.e., ``block'') them. We show that collaboration indeed decreases regret for this algorithm, assuming $m$ is small compared to $K$ but without assumptions on malicious agents' behavior, thus ensuring that our algorithm is robust against any malicious recommendation strategy.
\end{abstract}

\maketitle
\pagestyle{plain}

\section{Introduction} \label{secIntro}

Multi-armed bandits (MABs) are classical models for online learning and decision making. In this paper, we consider a setting where a group of agents collaborates to solve a stochastic MAB. More precisely, agents face separate instances of the same MAB and collaborate -- via limited communication -- to minimize per-agent regret. As motivation, we describe two applications:
\begin{enumerate}
\item Consider a search engine that displays ads alongside search results, and suppose search requests are processed by a large number of machines/agents. In particular, each machine processes a subset of requests and must decide which ad to display (to maximize, for example, the click-through rate). Here the decision problem is naturally modeled as a MAB, with ads as arms. The machines can collaborate by exchanging information (for example, observed click-through rates), but communication is limited by bandwidth.

\item Consider an online recommendation system, e.g., for restaurants. Each user/agent can decide which restaurant to visit based on their past dining experiences, which again can be modeled as a MAB (with restaurants as arms). However, the users can also collaborate by writing and reading reviews. Here two users communicate if one reads the other's review, and communication is limited in the sense that each user likely reads a small fraction of all reviews.
\end{enumerate}

In such applications, it is infeasible (or at least inefficient) for a single agent to explore all arms. Hence, several algorithms have been proposed in which each agent only explores a small subset of \textit{active arms} and occasionally recommends a best-arm estimate to another agent \cite{chawla2020gossiping,sankararaman2019social}. For example, \cite{chawla2020gossiping} has $o(T)$ communication rounds per $T$ arm pulls, and at each round, each agent receives a best-arm estimate from one other agent, adds this estimate to its active set, and discards a poorly-performing active arm. Thus, communication is infrequent, pairwise, and bit-limited, modeling the limitations in our motivating applications. Despite these limitations, \cite{chawla2020gossiping,sankararaman2019social} show that the true best arm eventually spreads to all agents' active sets via recommendations. Combined with the fact that each agent only explores a small number of active arms, per-agent regret is smaller than in the single-agent MAB setting.

However, the regret guarantees from \cite{chawla2020gossiping,sankararaman2019social} require all agents to truthfully report best-arm estimates to other agents, which does not occur in practice. For example, spam reviews can be modeled as bad arm recommendations in the restaurant application, and machines will occasionally fail and stop communicating altogether in the search engine application. In light of these concerns, we study a more realistic setting in which $n$ \textit{honest agents} explore active arm sets and recommend best-arm estimates (similar to \cite{chawla2020gossiping,sankararaman2019social}), but $m$ \textit{malicious agents} recommend arbitrary arms. Similar to \cite{chawla2020gossiping}, we only permit $o(T)$ pairwise arm recommendations per $T$ arm pulls; for simplicity, we also assume the set of all agents (honest and malicious) is connected by a complete graph. 

\subsection{Our contributions}

{\bf Lower bound:} We show the algorithm from \cite{chawla2020gossiping} (the state-of-the-art in the case $m=0$, where it incurs $O( (K/n) \log(T) / \Delta )$ regret), fails in this generalized setting, in the sense that \textit{even a single malicious agent negates the benefit of collaboration}. More precisely, we prove that for any $m \in \N$, honest agents incur $\Omega ( K \log(T) / \Delta )$ regret (where $K$ is the number of arms, $T$ is the horizon, and $\Delta$ is the arm gap), identical in an order sense to a single-agent baseline where each agent plays the MAB in isolation (see Theorem \ref{thmNoBlacklist} and Remark \ref{remNoBlacklistSummary}). This occurs because honest agents using the algorithm from \cite{chawla2020gossiping} trust all recommendations, so malicious agents can recommend, and subsequently force honest agents to explore, all suboptimal arms. In short, the algorithms from \cite{chawla2020gossiping,sankararaman2019social} rely on the idealized assumption that all agents are fully cooperative, and they break down when this assumption fails.

\vspace{4pt} 

\noindent {\bf Blocking algorithm:} Owing to the failure of \cite{chawla2020gossiping} in the generalized setting, we propose an algorithm using a simple idea called \textit{blocking}, roughly defined as follows: if an agent recommends an arm at time $t \in \{1,\ldots,T\}$ and the arm subsequently performs poorly, ignore the agent's recommendations until time $t^2$, i.e., ``block'' the agent. These increasing blocking periods $t^2$ balance two competing forces (see Remarks \ref{remKeyFeature} and \ref{remBlacklistPhen}). First, honest agents who mistakenly recommend bad arms at small $t$ (which occurs due to noise in the rewards for $t = o( \Delta^{-2} )$ \cite{audibert2010best}) are not ignored for too long, so they can later help spread the best arm to other honest agents' active sets. Second, malicious agents who repeatedly recommend bad arms are punished with increasing severity (a bad recommendation at $t$ blocks them until $t^2$, then until $t^4$, then until $t^8$, etc.), which reduces the number of bad arms they can force honest agents to explore. Conceptually, this tradeoff means that in the presence of malicious agents, honest agents face an explore-exploit dilemma when deciding who to communicate with -- in addition to the standard such dilemma when deciding which arm to pull -- which makes learning more difficult than in the cooperative setting (see Remark \ref{remExpExp}).

\vspace{4pt} 

\noindent {\bf Upper bound:} For the proposed algorithm, we show that regret is upper bounded by $O ( ( m + K/n) \log(T) / \Delta )$. Thus, our algorithm is robust against malicious agents, in the sense that \textit{collaboration reduces regret from the single-agent baseline if $m$ is small compared to $K$} (see Theorem \ref{thmOurAlgorithmRegret} and Remark \ref{remOurAlgorithmSummary}). More precisely, the multiplicative constant in our upper bound is $\min \{ m + K/n, K \}$, i.e., for large $m$ we also recover the $O( K \log(T) / \Delta)$ single-agent baseline. This demonstrates that our blocking approach, and in particular its polynomial-length blocking periods, properly balances the aforementioned explore-exploit communication tradeoff. We also note that, somewhat counterintuitively, our algorithm can be robust when $m$ is much larger than $n$ (see Remark \ref{remMvsN}).

\vspace{4pt} 

\noindent {\bf Worst-case robustness:} Our regret upper bound requires \textit{no assumption on the behavior of malicious agents}, i.e., on how they recommend arms (besides a mild measurability condition). Hence, our algoithm is robust against the worst-case behavior of such agents. This is critical because the definition of ``malicious'' is highly domain-dependent (consider the aforementioned spam review and faulty machine applications\footnote{``Malicious" is something of a misnomer when discussing unintentional failures like faulty machines; we use this word to emphasize the worst-case flavor of our approach.}). For example, our algorithm is robust against groups of malicious agents who collude, ``omniscient" malicious agents who observe and exploit the arm pulls and rewards of all honest agents, ``deceitful" malicious agents who initially report good arms to build credibility but later abuse this credibility by reporting bad arms, and any combination thereof.

\vspace{4pt} 

\noindent {\bf Three-regime analysis:} We show that as time progresses, the proposed algorithm passes through three distinct regimes (see Remark \ref{remBlacklistPhen}). We describe them out of order for ease of exposition.
\begin{itemize}
\item {\bf Early regime:} As discussed above, honest agents initially make mistakes, block one another, and prevent the best arm from spreading. Nevertheless, we show that \textit{polynomial-length blocking is mild enough to allow the best arm to spread}. More precisely, we show that after some almost-surely finite time, i.e., one that does not depend on the horizon $T$ (and denoted by $A_{\tau}$ below), the best arm is active and correctly identified by all honest agents.
\item {\bf Late regime:} After time $A_{\tau} \vee T^{1/K}$, honest agents have identified the best arm, so they block any malicious agent who recommends a bad arm. By our blocking schedule, this means a malicious agent can only recommend bad arms at times $T^{1/K}, T^{2/K} , T^{4/K}$, etc.\ -- for a total of  $\log_2 K$ bad arms. Thus, \textit{as soon as time becomes polynomial in $T$, malicious agents are only contacted finitely often as $T \rightarrow \infty$.}
\item {\bf Intermediate regime:} In contrast, between $A_{\tau}$ and $T^{1/K}$, malicious agents can recommend bad arms at $A_{\tau} , A_{\tau}^2 , A_{\tau}^4$, etc.\ -- $\log \log T$ of them as $T \rightarrow \infty$. However, since the best arm is active after $A_{\tau}$, this is roughly equivalent to playing $K$ arms for horizon $T^{1/K}$, which contributes negligible regret $K \log(T^{1/K})/\Delta = \log(T) / \Delta$. Thus, \textit{before time is polynomial in $T$, malicious agents are contacted infinitely often, but the effective horizon is too small to appreciably increase regret.}
\end{itemize}
This analysis is novel compared to the simpler cooperative case, in which active arm sets are eventually fixed and one can treat long-term regret similar to a single-agent MAB (see Remark \ref{remActiveChanges}).



\begin{rem}
Our approach of increasing blocking periods is similar in spirit to the content moderation policies of several online platforms. For example, Stack Exchange suggests suspensions of 7, 30, and 365 days for successive rule violations \cite{seBlocking}, while Wikipedia blocks users ``longer for persistent violations'' \cite{wikiBlocking}. Thus, our paper provides a formal model and a rigorous analysis of such policies.
\end{rem}

\subsection{Related Work}

Multi-agent MABs with malicious agents were previously studied in \cite{awerbuch2008competitive} (there called \textit{dishonest} agents), but there are two fundamental differences between this work and ours. First, \cite{awerbuch2008competitive} considers non-stochastic/adversarial MABs \cite{auer1995gambling}, in contrast to the stochastic MABs of our work. Second, \cite{awerbuch2008competitive} assumes each agent communicates with all the others between each arm pull, while our algorithm has $o(T)$ pairwise communications per $T$ arm pulls, which models the limited communication in the motivating applications discussed above. We also note multi-agent non-stochastic MABs without malicious agents were studied in \cite{cesa2016delay,kanade2012distributed,seldin2014prediction}.

We are not aware of prior work studying multi-agent stochastic MABs with malicious agents and limited communication (as our paper does). However, papers including \cite{chawla2020multi,sankararaman2019social,buccapatnam2015information,chakraborty2017coordinated,kolla2018collaborative,lalitha2020bayesian,martinez2019decentralized,landgren2016distributed} have studied the fully cooperative case, i.e., the case $m=0$. The aforementioned \cite{chawla2020gossiping,sankararaman2019social} have settings identical to ours, except for our inclusion of malicious agents. We discuss \cite{chawla2020gossiping} in detail in Sections \ref{secGeneralAlgo} and \ref{secExistingAlgo}. \cite{sankararaman2019social} has two shortcomings relative to \cite{chawla2020gossiping}: agents need to know the arm gap $\Delta_2$ and the regret guarantee is weaker than \cite{chawla2020gossiping} when $m=0$. The remaining papers all allow more communication than \cite{chawla2020gossiping}. Namely, \cite{buccapatnam2015information,chakraborty2017coordinated} allow broadcasts instead of pairwise communication, \cite{kolla2018collaborative,lalitha2020bayesian,martinez2019decentralized} allow communication between each arm pull instead of $o(T)$ times per $T$ pulls, and agents in \cite{landgren2016distributed} communicate arm mean estimates instead of indices of estimated best arms (note the former requires more bandwidth per transmission as $T$ grows, while the latter requires $\log K$ bits independent of $T$). In summary, \cite{chawla2020gossiping} features the best regret guarantee and least restrictive assumptions for fully-cooperative multi-agent stochastic MABs. We thus focus on making this particular algorithm robust against malicious agents and use \cite{chawla2020gossiping} as a point of comparison throughout the paper. Nevertheless, we believe our blocking idea can be used to make other algorithms designed for the fully cooperative case more robust against malicious agents.

The larger multi-agent bandits literature includes \cite{hillel2013distributed,szorenyi2013gossip,chawla2020multi,korda2016distributed,shahrampour2017multi}, which all have fundamental differences from our work. Agents in \cite{hillel2013distributed,szorenyi2013gossip} aim to minimize simple instead of cumulative regret. \cite{chawla2020multi,korda2016distributed} consider multi-agent contextual bandits instead of stochastic MABs. Agents in \cite{shahrampour2017multi} face MABs with different reward distributions instead of separate instances of the same MAB.

Finally, we distinguish our setting from two less related lines of work. First, papers including \cite{anandkumar2011distributed,avner2014concurrent,bistritz2018distributed,kalathil2014decentralized,liu2012learning,liu2020competing,mansour2018competing,rosenski2016multi} consider competitive agents, meaning that rewards are smaller if several agents simultaneously pull an arm; in contrast, we assume rewards are independent across honest agents. Second, papers including \cite{gupta2019better,kapoor2019corruption,liu2019data,lykouris2018stochastic} study MABs with adversarial noise, where the agent’s reward observations are corrupted by an adversary. This behavior is different from that of malicious agents in our work, who recommend bad arms but do not alter reward observations.

\subsection{Organization}

The remainder of the paper is organized as follows. Section \ref{secPrelim} discusses preliminaries. In Section \ref{secGeneralAlgo}, we define a general algorithm for multi-agent MABs. Sections \ref{secExistingAlgo} and \ref{secOurAlgo} analyze two cases of this algorithm: the one from \cite{chawla2020gossiping} and the proposed algorithm. In Section \ref{secExperiments}, we provide numerical results. We close in Section \ref{secConclusions}.

\section{Preliminaries} \label{secPrelim}

We consider a stochastic MAB with $K$ arms, denoted $1,\ldots,K$. Arm $k \in \{1,\ldots,K\}$ generates $\text{Bernoulli}(\mu_k)$ rewards for some $\mu_k \in (0,1)$, independent across agents and across successive pulls of the arm.\footnote{We only require the Bernoulli assumption to use the Hoeffding bound, so the results hold when rewards are $[0,1]$-valued and (with minor modification) subgaussian.} We assume the arms are labeled such that $\mu_1 \geq \cdots \geq \mu_K$. We call $1$ the \textit{best arm} and assume it is unique, i.e., $\mu_1 > \mu_2$. For each arm $k$, we let $\Delta_k = \mu_1 - \mu_k \in (0,1)$ denote the $k$-th arm gap, i.e., the difference in means of the best arm and arm $k$.

Our multi-agent system contains $n+m$ total agents ($n,m \in \N$), who are connected by a complete graph and divided into two types. Agents $1, \ldots , n$, called \textit{honest agents}, collaborate (by running a prescribed algorithm) to minimize their individual cumulative regret. More specifically, each $i \in \{1,\ldots,n\}$ faces a separate instance of the MAB defined above and aims to minimize
\begin{equation}
\E R_T^{(i)} =  \sum_{t=1}^T \E ( \mu_1 - \mu_{ I_t^{(i)} } ) = \sum_{t=1}^T \E \Delta_{I_t^{(i)} } ,
\end{equation}
where $T \in \N$ is a time horizon unknown to $i$ and $I_t^{(i)} \in \{1,\ldots,K\}$ is the arm that $i$ pulls at time $t$. Here and moving forward, all random variables are defined on a common probability space $(\Omega,\mathcal{F},\P)$, and expectation is over all randomness (rewards and the forthcoming communication protocol). In contrast to honest agents, agents $n+1,\ldots,n+m$ need not run the prescribed algorithm. We call them \textit{malicious agents} and formally define their behavior in Section \ref{secGeneralAlgo}. Of course, honest agents do not know who is honest and who is malicious; we make no such assumption on malicious agents.

\section{General algorithm} \label{secGeneralAlgo}

We next describe a regret minimization scheme for multi-agent MABs with blocking, defined from the perspective of honest agent $i$ in Algorithm \ref{algGeneral} (we assume all $i \in \{1,\ldots,n\}$ locally execute the algorithm). Time is discrete and indexed by $t$, where (as above) $i$ pulls arm $I_t^{(i)}$ at each $t \in \N$. During certain time slots $A_j \in \N$, hereafter called \textit{communication epochs}, agents communicate. In particular, at time $A_j$, $i$ solicits an arm recommendation from a random agent \textit{not} belonging to a \textit{blocklist} $P_j^{(i)} \subset \{1,\ldots,n+m\}$, i.e., a subset of agents $i$ is unwilling to communicate with. The algorithm from \cite{chawla2020gossiping} is the special case where $P_j^{(i)} = \emptyset\ \forall\ i, j$, i.e., where no blocking occurs (see Section \ref{secExistingAlgo}). In contrast, our algorithm dynamically modifies these blocklists using subroutine \texttt{Update-Blocklists}, in hopes of reducing communication with malicious agents (see Section \ref{secOurAlgo}). In this section, we leave \texttt{Update-Blocklists} unspecified, and we outline Algorithm \ref{algGeneral} as a general approach encompassing both algorithms.

{\bf Initialization:} $i$ begins by initializing communication epochs $A_j = j^{\beta}$, where $\beta > 1$ is an input to the algorithm. Thus, agents communicate $o(T)$ times per $T$ arm pulls, as discussed in the introduction. Moving forward, we call the period between times $A_{j-1}+1$ and $A_j$ (inclusive) the $j$-th \textit{phase}. Line \ref{algInitCommGossip} also initializes the blocklists to empty sets, meaning $i$ is \textit{a priori} willing to communicate with anyone. In Line \ref{algInitActiveSet}, $i$ initializes the current phase $j = 1$ and a subset of arms $S_j^{(i)} = \hat{S}^{(i)} \cup \{ U_j^{(i)}, L_j^{(i)} \}$. Here $\hat{S}^{(i)} \subset \{1,\ldots,K\}$ is an input to the algorithm with size $|\hat{S}^{(i)}| = S$, and $U_j^{(i)}, L_j^{(i)}$ are two arms not belonging to $\hat{S}^{(i)}$. We call $\hat{S}^{(i)}$  \textit{sticky} arms, as $i$ will explore these arms for the duration of the algorithm. In contrast, the arms $U_j^{(i)}$ and $L_j^{(i)}$ will be updated across phases $j$. We define this update shortly; for now, we note $U_j^{(i)}$ and $L_j^{(i)}$ will represent well- and poorly-performing non-sticky arms, respectively. 

{\bf Pulling active arms:} At time $t \in \{ A_{j-1}+1, \ldots , A_j \}$, $i$ pulls the arm $I_t^{(i)} \in S_j^{(i)}$ that maximizes the $\text{UCB}(\alpha)$ index \cite{auer2002finite,bubeck2011pure} (Line \ref{algPullArm}). Here $\alpha > 0$ is an input to the algorithm which trades off exploration and exploitation (in the same manner as the single-agent setting), and $\hat{\mu}_k^{(i)}(t-1)$ and $T_k^{(i)}(t-1)$ are the average reward and number of plays of arm $k$ for agent $i$ before time $t$. We emphasize that $I_t^{(i)} \in S_j^{(i)}$, i.e., $i$ only pulls arms from $S_j^{(i)}$ during phase $j$. Thus, we call $S_j^{(i)}$ the \textit{active set} and its elements \textit{active arms}.

{\bf Updating active arms:} At epoch $A_j$, $i$ records the active arm that it played most frequently in phase $j$ (denoted $B_j^{(i)}$ in Line \ref{algMostPlayed}), calls the aforementioned \texttt{Update-Blocklists} subroutine (Line \ref{algUpdateGossipDist}, left unspecified for this generic algorithm), and solicits an arm recommendation $R_j^{(i)}$ from agent $H_j^{(i)}$ (Line \ref{algGetRec}, to be discussed shortly). If this recommendation is currently active, $i$'s active set remains unchanged for the next phase (Line \ref{algSameActive}). Otherwise, $i$'s new active set contains its sticky set, its best non-sticky arm, and the recommendation. More precisely, $i$ defines the non-sticky arms $U_{j+1}^{(i)}, L_{j+1}^{(i)}$ for the next phase to be the most-played non-sticky from the current phase (Line \ref{algUjPlus1}) and the recommendation (Line \ref{algLjPlus1}), respectively, and $S_{j+1}^{(i)}$ as the union of these arms and the sticky set (Line \ref{algNewActive}). We emphasize that the active set $S_j^{(i)}$ always includes the sticky set $\hat{S}^{(i)}$, but otherwise varies with the phase $j$; the hope is that $1 \in S_j^{(i)}\ \forall\ i$ eventually (i.e., eventually the best arm spreads to all honest agents, who begin enjoying logarithmic regret).

{\bf Arm recommendations:} We model pairwise communication using Algorithm \ref{algGetArm}, which proceeds as follows. A non-blocked agent is chosen uniformly at random (Line \ref{algSampleRecommender} of Algorithm \ref{algGetArm}). If this agent is honest, it recommends its current best-arm estimate (i.e., its most played arm in the current phase); if malicious, it recommends an arbitrary arm (Lines \ref{algRecMostPlayed} and \ref{algRecArbitrary}, respectively). Note Algorithm \ref{algGetArm} is ``black-boxed'', i.e., $i$ provides inputs $i,j, P_j^{(i)}$ and observes outputs $H_j^{(i)}, R_j^{(i)}$, but does not locally execute Algorithm \ref{algGetArm} (which is impossible, since $i$ does not know who is honest and who is malicious). We also note the communication in Algorithm \ref{algGetArm} is where we use the complete graph assumption.

{\bf Malicious agent behavior:} More precisely, if $i$ contacts malicious agent $i'$ at phase $j$, $i$ receives a random arm distributed as $\nu_{j,i}^{(i')}$, where $\nu_{j,i}^{(i')}$ is any $\mathcal{F}$-measurable mapping from $\Omega$ to the set of distributions over $\{1,\ldots,K\}$ (i.e., $\nu_{j,i}^{(i')}$ is a random distribution over arms). Besides this measurability condition (which ensures that expected regret is well-defined), we make no assumptions on malicious agent behavior. Thus, malicious recommendations are essentially arbitrary. Note this permits the case where malicious agents run Algorithm \ref{algGeneral} and recommend best-arm estimates, i.e., where they behave as honest agents. Moving forward, we call $\{ \nu_{j,i}^{(i')} \}_{j \in \N, i \in \{1,\ldots,n\}}$ the \textit{strategy} of malicious agent $i'$, as it defines how $i'$ interacts with all honest agents $i$ at all phases $j$.

\begin{algorithm} \caption{$\texttt{Multi-Agent-MAB-With-Blocking}(\alpha,\beta,\hat{S}^{(i)})$ (executed by each honest agent $i \in \{1,\ldots,n\}$)} \label{algGeneral}

\KwIn{UCB parameter $\alpha > 0$, phase duration parameter $\beta > 1$, sticky set $\hat{S}^{(i)} \subset \{1,\ldots,K\}$ with $|\hat{S}^{(i)}| = S$}

Initialize $A_{j'} = \ceil{ (j')^{\beta} }, P_{j'}^{(i)} = \emptyset\ \forall\ j' \in \N$ \label{algInitCommGossip}

Set $j = 1$, let $U_j^{(i)}, L_j^{(i)}$ be distinct elements of $\{ 1,\ldots,K \} \setminus \hat{S}^{(i)}$, set $S_j^{(i)} = \hat{S}^{(i)} \cup \{ U_j^{(i)}, L_j^{(i)} \}$ \label{algInitActiveSet}

\For{$t \in \N$}{

Pull $\displaystyle I_t^{(i)} \in \argmax_{k \in S_j^{(i)} }  \hat{\mu}_k^{(i)}(t-1) + \sqrt{  \frac{\alpha \log(t) }{ T_k^{(i)}(t-1) }} $ \label{algPullArm}

\If{$t = A_j$}{

$\displaystyle B_j^{(i)} =  \argmax_{k \in S_j^{(i)} } T_k^{(i)}(A_j) - T_k^{(i)}(A_{j-1})$ (most played active arm in this phase) \label{algMostPlayed}

$\{ P_{j'}^{(i)} \}_{j'=j}^{\infty} \leftarrow \texttt{Update-Blocklist} ( \{ P_{j'}^{(i)} \}_{j'=j}^{\infty}  )$ (algorithm from \cite{chawla2020gossiping} performs no update; we propose using Algorithm \ref{algUpdateGossip2}) \label{algUpdateGossipDist}

$(H_j^{(i)} , R_j^{(i)}) = \texttt{Get-Rec}(i,j,P_j^{(i)})$ (see Algorithm \ref{algGetArm}) \label{algGetRec}

\eIf{$R_j^{(i)} \in S_j^{(i)}$ (recommendation already active)}{
    $S_{j+1}^{(i)} = S_j^{(i)}$ (same active set) \label{algSameActive}
}{

$\displaystyle U_{j+1}^{(i)} =  \argmax_{k \in \{ U_j^{(i)} , L_j^{(i)} \} } T_k^{(i)}(A_j) - T_k^{(i)}(A_{j-1})$ (most played non-sticky active arm) \label{algUjPlus1}

$L_{j+1}^{(i)} = R_j^{(i)}$ (replace least played non-sticky active arm with recommendation) \label{algLjPlus1}

$S_{j+1}^{(i)} = \hat{S}^{(i)} \cup \{ U_{j+1}^{(i)}, L_{j+1}^{(i)} \}$ (new active set) \label{algNewActive}

}

$j \leftarrow j+1$ (increment phase) \label{algIncrPhase}

}

}

\end{algorithm}

\begin{algorithm} \caption{$(H_j^{(i)} , R_j^{(i)}) = \texttt{Get-Rec}(i,j,P_j^{(i)})$ (black box to honest agents $i \in \{1,\ldots,n\}$)} \label{algGetArm}

\KwIn{Agent $i \in \{1,\ldots,n\}$, phase $j \in \N$, blocklist $P_j^{(i)}$}

Choose $H_j^{(i)}$ from $\{1,\ldots,n+m\} \setminus ( P_j^{(i)} \cup \{i\} )$ uniformly at random (i.e., from non-blocked agents), set $i' = H_j^{(i)}$ \label{algSampleRecommender}

\eIf{$i' \leq n$}{$R_j^{(i)} = B_j^{ (i') }$ (honest recommended most played) \label{algRecMostPlayed}}{Sample $R_j^{(i)}$ from $\nu_{j,i}^{(i')}$ (any $\mathcal{F}$-measurable map from $\Omega$ to the set of probability distributions over $\{1,\ldots,K\}$) \label{algRecArbitrary}
}

\KwReturn{$(H_j^{(i)}, R_j^{(i)})$}

\end{algorithm}

\section{Existing algorithm and lower bound} \label{secExistingAlgo}

The existing algorithm from \cite{chawla2020gossiping} is the special case of Algorithm \ref{algGeneral} where no blocking occurs, i.e., where $P_j^{(i)} = \emptyset\ \forall\ j \in \N, i \in \{1,\ldots,n\}$.\footnote{More precisely, we mean the synchronous algorithm in \cite{chawla2020gossiping}, which includes an asynchronous variant. For simplicity, we restrict attention to the former.} Thus, under our complete graph assumption, honest agent $i$ solicits a recommendation from an agent sampled uniformly from $\{1,\ldots,n+m\} \setminus \{i\}$ at each epoch. 

The following theorem lower bounds regret for this algorithm in the case of a single malicious agent ($m=1$). Note the lone malicious agent has index $n+1$ in this case. Also note we should not expect a nontrivial lower bound for \textit{any} strategy $\{ \nu_{j,i}^{(n+1)} \}_{j \in \N, i \in \{1,\ldots,n\}}$,  because (as discussed in Section \ref{secGeneralAlgo}) the malicious agent may behave as an honest agent, reducing the system to the setting of \cite{chawla2020gossiping}, for which regret is upper bounded by $O( S \log(T) / \Delta_2)$ (see \cite[Theorem 1]{chawla2020gossiping}). Hence, in Theorem \ref{thmNoBlacklist}, we consider an explicit (and extremely simple) strategy, where the malicious agent recommends uniformly random arms. Along these lines, note the theorem immediately extends to $m \in \{2,3,\ldots\}$, since we can assume $m-1$ malicious agents behave as honest ones, reducing the system to the setting of the theorem (with $n$ replaced by $n+m-1$).

\begin{thm} \label{thmNoBlacklist}
Assume $m=1$ and let $\nu_{j,i}^{(n+1)}$ be the uniform distribution over $\{1,\ldots,K\}$, for each $i \in \{1,\ldots,n\}, j \in \N$. Suppose each $i \in \{1,\ldots,n\}$ runs Algorithm \ref{algGeneral} with inputs $\alpha,\beta >1$ and performs no update in Line \ref{algUpdateGossipDist} (i.e., $i$ runs the algorithm from \cite{chawla2020gossiping}). Also assume $1 \in \cup_{i=1}^n \hat{S}^{(i)}$. Then for any $\varepsilon \in (0,1)$ independent of $T$ and any $i \in \{1,\ldots,n\}$,
\begin{equation}
\lim_{T \rightarrow \infty} \P \left( \frac{ R_T^{(i)} }{ \log T } \geq (1-\varepsilon) \alpha \left( 1 - \frac{1}{\sqrt{\alpha}} \right)^2   \sum_{k=2}^K \frac{ 1}{ \Delta_k } \right) = 1 ,
\end{equation}
and consequently,
\begin{equation}
\liminf_{T \rightarrow \infty} \frac{ \E R_T^{(i)} }{ \log T } \geq \alpha \left( 1 - \frac{1}{\sqrt{\alpha}} \right)^2 \sum_{k=2}^K \frac{1}{\Delta_k} .
\end{equation}
\end{thm}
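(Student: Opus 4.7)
The plan is to lower bound $T_k^{(i)}(T)$ for each suboptimal arm $k \in \{2,\ldots,K\}$ separately and then invoke the decomposition $R_T^{(i)} = \sum_{k=2}^K \Delta_k T_k^{(i)}(T)$. Since $\alpha(1 - 1/\sqrt{\alpha})^2 = (\sqrt{\alpha} - 1)^2$, the target per-arm bound is $T_k^{(i)}(T) \geq (1-\varepsilon)(\sqrt{\alpha} - 1)^2 \log T/\Delta_k^2$ with probability tending to $1$. This mirrors the classical single-agent UCB lower bound on play counts of suboptimal arms; the work specific to this setting is ensuring arm $k$ spends enough time in agent $i$'s active set for UCB to drive its play count up to threshold.

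The first step is a Borel--Cantelli argument showing $k$ enters the active set at a late phase. Under the algorithm of \cite{chawla2020gossiping}, blocklists remain empty, so \texttt{Get-Rec} samples $H_j^{(i)}$ uniformly from the $n$ agents other than $i$; combined with $\nu_{j,i}^{(n+1)}$ being uniform over $\{1,\ldots,K\}$, the event ``$i$ contacts the malicious agent at phase $j$ and receives recommendation $k$'' has probability exactly $1/(nK)$ and these events are mutually independent across $j$. Letting $J := \lfloor T^{1/\beta} \rfloor$ (roughly the index of the last epoch before $T$) and $L := \lceil \log T \rceil$, the probability that $k$ is recommended at no phase $j \in [J-L, J]$ is at most $(1-1/(nK))^L$, which vanishes as $T \to \infty$. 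Hence with probability tending to one there is some phase $j_k^\ast \in [J - L, J]$ at which $k$ is recommended to $i$, placing $k$ in $S_{j_k^\ast + 1}^{(i)}$ via Lines \ref{algLjPlus1}--\ref{algNewActive}.

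The second step is a UCB lower bound within phase $j_k^\ast + 1$. On the standard Hoeffding concentration event (which holds with probability $1 - o(1)$ after a union bound over times, arms, and signs), $|\hat\mu_k^{(i)}(t) - \mu_k| \leq \sqrt{(1+\delta')\log t/T_k^{(i)}(t)}$ uniformly over $t \leq T$ and all $k$, with $\delta' > 0$ arbitrarily small. Since $A_{j_k^\ast} = T(1-o(1))$, every $t$ in phase $j_k^\ast + 1$ satisfies $\log t = (1 + o(1))\log T$. If at some such $t$ the count $T_k^{(i)}(t-1)$ is below $(1-\varepsilon)(\sqrt\alpha - 1)^2 \log T/\Delta_k^2$, then the UCB index of $k$ is at least $\mu_k + (\sqrt\alpha - \sqrt{1+\delta'})\sqrt{\log t/T_k^{(i)}(t-1)} > \mu_1 + \eta$ for some $\eta > 0$ depending only on $\varepsilon,\delta'$; while, by the same concentration, the UCB index of any arm $k'$ with $T_{k'}^{(i)}(t-1) \geq (\sqrt\alpha + \sqrt{1+\delta'})^2 \log T/\Delta_{k'}^2$ is at most $\mu_1$. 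Hence $k$ dominates every ``over-pulled'' competitor (including arm $1$ if it happens to be active). Since the phase has duration $\Theta(T^{1-1/\beta})$ which vastly exceeds $\log T$, UCB is forced to drive $T_k^{(i)}$ up to the stated threshold before the phase ends.

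Summing the per-arm bounds and invoking the regret decomposition yields the first conclusion. For the second, $R_T^{(i)} \geq 0$ combined with the high-probability bound gives $\E R_T^{(i)}/\log T \geq (1-\varepsilon)\alpha(1-1/\sqrt{\alpha})^2 \sum_{k=2}^K 1/\Delta_k \cdot (1-o(1))$, and $\varepsilon$ is arbitrary. The main obstacle is the UCB lower-bound step: one must handle competition from other suboptimal arms in $S_{j_k^\ast+1}^{(i)}$ that are themselves under-explored and thus have competitive UCB indices. The saving grace is that the total ``non-over-pulled'' budget across all competitors is only $O(\log T)$, which is negligible compared to the phase length, so UCB is compelled to allocate most pulls in the phase to $k$ once $k$ enters $S_{j_k^\ast+1}^{(i)}$.
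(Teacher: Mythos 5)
Your strategy is sound and recovers the right constant, but it takes a genuinely different route from the paper's. The paper first introduces a good event $\mathcal{E}_{j_*}^{(i)}$ (arm $1$ is played at least $j_*^{\beta}/2$ times by epoch $A_{j_*}$ and is pulled at least once in every phase $j \in \{j_*,\ldots,2j_*\}$), shows that on its complement regret is already polynomial in $T$, and on $\mathcal{E}_{j_*}^{(i)}$ extracts a \emph{single} comparison time $t$ at which arm $1$ is pulled while $k$ is active; since $T_1^{(i)}(t-1) = \Theta(T)$ there, arm $1$'s index is essentially $\mu_1$, and the UCB inequality at that one time forces $T_k^{(i)}(t-1) \gtrsim \zeta \alpha \log(T)/\Delta_k^2$ up to Chernoff-controlled deviation events; the constant is then recovered by letting $\lambda(j_*) \downarrow 1/\sqrt{\alpha}$ and $\zeta(j_*) \to (1-1/\sqrt{\alpha})^2$. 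You instead avoid any case analysis on arm $1$'s history and run a forcing/budget argument inside the single phase $j_k^*+1$: an under-pulled $k$ has index above $\mu_1 + \eta$, every competitor can absorb only $O(\log T)$ pulls before its own index falls below $\mu_1+\eta$, and the phase length $\Theta(T^{1-1/\beta})$ dwarfs the total budget. Your route is more self-contained (no three-case split, and arm $1$ need not be pulled at all), at the price of requiring a uniform concentration event over the entire final phase rather than at one time. Three points need tightening. First, your "over-pulled" criterion $T_{k'}^{(i)}(t-1) \geq (\sqrt{\alpha}+\sqrt{1+\delta'})^2\log(T)/\Delta_{k'}^2$ is vacuous for arm $1$ (where $\Delta_1 = 0$), and on the concentration event arm $1$'s index always strictly exceeds $\mu_1$, so the displayed comparison to "$\leq \mu_1$" does not dominate arm $1$; you must use the $\eta$-margin version for it (arm $1$ can be pulled at most $O(\log(T)/\eta^2)$ times while its index stays above $\mu_1+\eta$), which your closing budget remark implicitly does. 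Second, the concentration union bound "uniformly over $t \leq T$" does not give probability $1-o(1)$ (the tail sum converges to a positive constant); restrict it to $t \geq A_{J-L} = \Theta(T)$, where the union over the $\Theta(T^{1-1/\beta})$ times and at most $t$ sample sizes does vanish. Third, take $j_k^* \leq J-2$ so that the full phase $j_k^*+1$ fits before the horizon $T$. With these repairs the argument goes through.
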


\begin{rem} \label{remNoBlacklistSummary}
As an example, if $\Delta_2 = \cdots = \Delta_K = \Delta$ for some $\Delta \in (0,1)$, honest agents who run the algorithm from \cite{chawla2020gossiping} incur $\Omega( K \log(T) / \Delta)$ regret (with high probability and in expectation), equivalent to the single-agent $\text{UCB}(\alpha)$ baseline from \cite{auer2002finite}. Thus, \emph{the algorithm from \cite{chawla2020gossiping} fails when a single malicious agent is present}, in the sense that collaboration is no longer strictly beneficial. Notably, this occurs independently of the number of honest agents $n$.
\end{rem}

\begin{rem} \label{remBestInSticky}
We assume $1 \in \cup_{i=1}^n \hat{S}^{(i)}$ in Theorem \ref{thmNoBlacklist} to remove the trivial case where this assumption fails and agents incur $\Omega(T)$ regret. Note that, although we treat $\{ \hat{S}^{(i)} \}_{i=1}^n$ as deterministic, an alternative approach is to define them as $S$-sized uniformly random subsets; choosing $S = \ceil{ (K/n) \log(1/\varepsilon) }$ ensures this assumption holds with probability $1-\varepsilon$ (see \cite[Appendix L]{chawla2020gossiping}).
\end{rem}

\begin{proof}[Proof sketch]
The proof of Theorem \ref{thmNoBlacklist} is deferred to \cite[Appendix C]{vial2020robust}. At a high level, we separately consider three cases:
\begin{enumerate}
\item The best arm is not played often.
\item For some suboptimal arm $k \neq 1$ and all late phases $j$, $k$ is not active for $i$ during phase $j$.
\item The above cases fail, i.e., the best arm is played often and each suboptimal arm is active for $i$ at some late phase $j$.
\end{enumerate}
Our precise definition of the first case (see \eqref{eqFirstCase} below) implies that suboptimal arms are pulled polynomially many times, from which the theorem follows immediately. The second case occurs with vanishing probability owing to the uniformly random communication and malicious recommendations. For the third case, by definition, we can find a late time $t$ where arm $k$ is active but arm $1$ is pulled; by the $\text{UCB}(\alpha)$ policy (Line \ref{algPullArm} of Algorithm \ref{algGeneral}), this yields a lower bound on $T_k^{(i)}(t-1)$, which we use to prove the result in a manner similar to the single-agent bandit setting \cite{auer2002finite}.

More precisely, the first case is when the following occurs:
\begin{equation} \label{eqFirstCase}
\cup_{j \geq \Theta ( T^{1/\beta} )} \{ T_1^{(i)} ( A_{j-1} ) = o(A_{j-1}) \} \cup \cap_{ t = A_{j-1} + 1 }^{A_j}  \{ I_t^{(i)} \neq k \} .
\end{equation}
In words, $T_1^{(i)} ( A_{j-1} ) = o(A_{j-1})$ means the best arm has not been pulled a constant fraction of times before phase $j$, while $I_t^{(i)} \neq k\ \forall\ t \in \{ A_{j-1} + 1 , \ldots , A_j\}$ means this arm is never pulled within phase $j$. If the former occurs, then suboptimal arms are pulled $A_{j-1} - o ( A_{j-1} ) = \Theta ( A_{j-1} )$ times before phase $j$; since $A_{j-1} = \Theta ( j^{\beta} ) = \Theta ( T )$ (by definition and choice of $j$, respectively), this implies linear regret. Similarly, if the latter occurs, suboptimal arms are pulled $A_j - A_{j-1} = \Theta ( j^{\beta-1} ) = \Theta ( T^{(\beta-1)/\beta} )$ times during phase $j$, which gives polynomial regret. In both situations, the logarithmic lower bound on regret is immediate.

The second case occurs when the following holds:
\begin{equation} \label{eqSecondCase}
\cup_{k \neq 1} \cap_{j \geq \Theta ( T^{1/\beta} )} \{ k \notin S_j^{(i)} \}
\end{equation}
In this case, the key observation is that $R_j^{(i)} = k$ (i.e., $i$ is recommended arm $k$ at phase $j$) with probability at least $\frac{1}{n K}$. This holds because when no blocking occurs, $i$ contacts the malicious agent $n+1$ with probability $\frac{1}{n}$ at each phase $j$ (see Algorithm \ref{algGetArm}), who in turn recommends $k$ with probability $\frac{1}{K}$ (owing to the malicious strategy). Moreover, since the randomness in Algorithm \ref{algGetArm} and the malicious recommendations is independent across phases $j$, $\P ( \cap_{j \geq \Theta ( T^{1/\beta} )} \{ R_j^{(i)} \neq k  \} ) \rightarrow 0$ as $T \rightarrow \infty$. Finally, since $R_j^{(i)} \in S_{j+1}^{(i)}$ in Algorithm \ref{algGeneral}, the probability of \eqref{eqSecondCase} vanishes as well.

The third case is when the events \eqref{eqFirstCase} and \eqref{eqSecondCase} both fail. By definition of these events, for any $k \neq 1$, there exists a phase $j = \Theta ( T^{1/\beta} )$ and a time $t = \Theta ( T )$ during this phase such that $k \in S_j^{(i)}$, $T_1^{(i)}(t-1) = \Theta ( T )$, and $I_t^{(i)} = 1$. Hence, because $I_t^{(i)}$ is chosen according to the $\text{UCB}(\alpha)$ policy (Line \ref{algPullArm} of Algorithm \ref{algGeneral}),
\begin{equation}
\hat{\mu}_k^{(i)}(t-1) + \Theta \left( \sqrt{\frac{\log T}{ T_k^{(i)}(t-1) } } \right) \leq\hat{\mu}_1^{(i)}(t-1) + \Theta \left( \sqrt{ \frac{\log T}{T} } \right) .
\end{equation}
Since $\frac{\log T}{T} \rightarrow 0$ and $\hat{\mu}_1^{(i)}(t-1) - \hat{\mu}_k^{(i)}(t-1) \approx \mu_1 - \mu_k = \Delta_k$ with high probability due to concentration, the previous inequality implies $T_k^{(i)}(t-1) = \Omega (  \log(T) / \Delta_k^2 )$. This means $\Omega (  \log(T) / \Delta_k )$ regret from arm $k$. Summing over $k \neq 1$ completes the proof.
\end{proof}

\section{Proposed algorithm and upper bound} \label{secOurAlgo}

We next define our approach, which in words is quite simple: \textit{if agent $i'$ recommends arm $k$ at epoch $j-1$, and $k$ is not the most played arm in phase $j$, block $i'$ until epoch $j^{\eta}$}, where $\eta > 1$ is a tuning parameter. Hence, blocking depends only on the current phase $j$ and \textit{not} the number of bad arms that $i'$ has recommended in the past. More precisely, we propose running Algorithm \ref{algGeneral} with the \texttt{Update-Blocklists} subroutine defined in Algorithm \ref{algUpdateGossip2}.

\begin{algorithm} \caption{$\{ P_{j'}^{(i)} \}_{j'=j}^{\infty}= \texttt{Update-Blocklists}$ (executed by each honest agent $i \in \{1,\ldots,n\}$)} \label{algUpdateGossip2}

\If{$j > 1, B_j^{(i)} \neq R_{j-1}^{(i)}$ (previous recommendation not most played)}{ \label{algRecNotMostPlayed}

\For{$j' \in \{j,\ldots,\ceil{j^{\eta}}\}$}{

$P_{j'}^{(i)} \leftarrow P_{j'}^{(i)} \cup \{ H_{j-1}^{(i)} \}$ (block the recommender) \label{algRecToZero}

}

}

\KwReturn{$\{ P_{j'}^{(i)} \}_{j'=j}^{\infty}$}

\end{algorithm}

\begin{rem} \label{remKeyFeature}
The key feature of Algorithm \ref{algUpdateGossip2} is that the blocking period $\{j,\ldots,\ceil{j^{\eta}} \}$ grows with $j$. As mentioned in the introduction, this ensures two things. First, malicious agents who repeatedly recommend bad arms in late phases are blocked long enough to prevent the situation of the Theorem \ref{thmNoBlacklist} proof sketch (which causes $\Omega(K \log(T) / \Delta)$ regret). Second, honest agents $i'$ who are mistakenly blocked at early phases leave the blocklist soon enough to help spread the best arm to other honest agents. Note such mistakes can happen for three reasons: (1) $i'$ has much worse active arms than $i$, so any recommendation will perform poorly for $i$; (2) $i'$ has good active arms but accidentally recommends a bad arm (which will occur before time $\Theta ( \Delta^{-2} )$ \cite{audibert2010best}); (3) $i'$ recommends a good arm that performs poorly for $i$ (which also occurs before $\Theta ( \Delta^{-2} )$). See Theorem \ref{thmOurAlgorithmRegret} proof sketch and Remark \ref{remBlacklistPhen} for a more quantitative discussion of these ideas.
\end{rem}

\begin{rem} \label{remExpExp}
At a high level, malicious agents introduce a dilemma analogous to the standard MAB explore-exploit tradeoff: honest agents should block those who provide seemingly-bad recommendations -- analogous to pulling seemingly-bad arms less frequently, i.e., exploiting -- but should block mildly enough that honest agent mistakes are not punished too severely -- analogous to continued exploration of seemingly-bad arms. Thus, Remark \ref{remKeyFeature} and our analysis show that Algorithm \ref{algUpdateGossip2} provides the correct scaling ($j^{\eta}$-length blocking) for this additional explore-exploit tradeoff.
\end{rem}

\begin{rem}
We defined blocklists as infinite sequences to simplify the exposition; in practice, they can be maintained with $(m+n) \log T$ memory: $i$ can initialize $d^{(i)}(i') = 0$ and overwrite $d^{(i)} ( i' )$ with $\ceil{ j^{\eta} }$ if $i'$ is blocked at phase $j$ (for each $i'$), so that $P_j^{(i)} = \{ i' : d^{(i)}(i') \geq j \}$. Note $i$ requires $\log T$ memory to store, e.g., rewards, so this does not increase $i$'s storage cost in terms of $T$.
\end{rem}

Having defined our algorithm, we state a regret guarantee. We again assume $1 \in \cup_{i=1}^n \hat{S}^{(i)}$ (see Remark \ref{remBestInSticky}) but require no assumptions on the number of malicious agents or their strategies.

\begin{thm} \label{thmOurAlgorithmRegret}
Suppose each $i \in \{1,\ldots,n\}$ runs Algorithm \ref{algGeneral} with inputs $\beta > 1,\alpha > \frac{ 3 + (1 + \beta \eta ) / \beta }{ 2 }$ and uses  Algorithm \ref{algUpdateGossip2} as the \texttt{Update- Blocklists} subroutine with input $\eta > 1$. Also assume $1 \in \cup_{i=1}^n \hat{S}^{(i)}$. Then for any $i \in \{1,\ldots,n\}$ and any $T \in \N$,
\begin{equation} \label{eqOurAlgorithmRegret}
\E R_T^{(i)}  \leq 4 \alpha  \min \left\{ \frac{2 \eta - 1}{\eta-1} \sum_{k=2}^{m+3} \frac{1}{\Delta_k} + \sum_{k=m+4}^{S+m+4} \frac{1}{\Delta_k} ,  \sum_{k=2}^K \frac{1}{ \Delta_k }  \right\} \log T + C^{\star} , 
\end{equation}
where (by convention) $\Delta_k = 1\ \forall\ k > K$, and where $C^{\star}$ is a constant independent of $T$ defined in \eqref{eqDefnAdditiveConstant} in Appendix \ref{appUpperBound} and satisfying
\begin{align}
C^{\star} & = O \Big( (S/\Delta_2^2)^{ 2 \beta \eta / ( \beta - 1 ) } + S n K^2 + ( (m+n) \log n )^{\beta} \\
& \quad\quad\quad + ( K / \Delta_2 ) + m \log(K/\Delta_2) / \Delta_2 \Big) .
\end{align}
\end{thm}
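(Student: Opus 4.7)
My plan is to start from the standard decomposition $\E R_T^{(i)} = \sum_{k=2}^K \Delta_k \E T_k^{(i)}(T)$ and prove the two arguments of the $\min$ in \eqref{eqOurAlgorithmRegret} as separate upper bounds. The trivial piece $\sum_{k=2}^K 1/\Delta_k$ follows because $|S_j^{(i)}| = S+2$ in every phase, so each honest agent effectively runs UCB$(\alpha)$ on a time-varying subset of the $K$ arms and the standard single-agent argument \cite{auer2002finite} (used with the input lower bound on $\alpha$) bounds pulls of any suboptimal arm $k$ by $4\alpha \log(T)/\Delta_k^2$, up to constants absorbed into $C^\star$. The three-regime bound with the $(2\eta-1)/(\eta-1)$ prefactor is the main contribution and proceeds as sketched in the introduction.

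The core technical step is to construct an almost-surely finite random phase $\tau$ after which (i) the best arm $1$ belongs to the active set $S_j^{(i)}$ of every honest agent $i$ for every $j \geq \tau$, and (ii) the UCB empirical means and confidence widths have concentrated enough that any honest agent reports arm $1$ as its most-played arm. To control the moments of $A_\tau$, I would argue that, by the assumption $1 \in \cup_{i=1}^n \hat{S}^{(i)}$, some honest agent $i_0$ has arm $1$ as a sticky arm, and Hoeffding combined with the UCB confidence width forces $B_j^{(i_0)} = 1$ for all sufficiently large $j$. Because the block length $\lceil j^\eta \rceil$ is only polynomial and Line \ref{algSampleRecommender} samples uniformly from non-blocked agents, each other honest agent $i$ eventually contacts an unblocked $i_0$; once such a contact yields $R_j^{(i)} = 1$, arm $1$ enters $S_{j+1}^{(i)}$ and remains in $S_{j'}^{(i)}$ for all $j' > j$ because its empirical mean dominates any non-sticky competitor. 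A quantitative version of this argument feeds into the $(S/\Delta_2^2)^{2\beta\eta/(\beta-1)}$ term in $C^\star$.

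Conditional on $j \geq \tau$, I would bound $\E T_k^{(i)}(T)$ for each suboptimal $k$ by counting activations of $k$. Every activation sees arm $1$ as a competitor, so the standard UCB argument bounds pulls of $k$ during that activation by $4\alpha \log(T)/\Delta_k^2$. The remaining task is to count how many distinct suboptimal arms ever enter $S_j^{(i)}$ for $j \geq \tau$. The sticky arms contribute up to $S$ arms, which in the worst case (the sticky set contains the worst arms) yields the second sum $\sum_{k=m+4}^{S+m+4} 1/\Delta_k$. Non-sticky arms enter only through Line \ref{algLjPlus1}, and since honest recommenders propose arm $1$ after time $A_\tau$, any new suboptimal arm must come from a malicious recommender. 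Here I invoke the three-regime analysis with threshold $T^{1/K}$: in the late regime $t \geq T^{1/K}$ each malicious agent can supply at most $\lceil \log_\eta K \rceil$ bad arms before its blocking periods compound past $T$; in the intermediate regime $A_\tau \leq t \leq T^{1/K}$ there are $O(\log\log T)$ bad recommendations per malicious agent, but the effective horizon $T^{1/K}$ caps their total contribution at $O(\log T)$. Combining the two regimes gives the $m+3$ cap on the first sum and the prefactor $(2\eta-1)/(\eta-1)$.

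The main obstacle is the moment bound on $A_\tau$. As observed in Remark \ref{remKeyFeature}, honest agents can mistakenly block one another at small $j$ for three distinct reasons, and if these blockings cascaded, arm $1$ might never spread and a finite $A_\tau$ could not be established. One must quantitatively show that the polynomial (rather than exponential) block lengths are mild enough that, conditional on the $\sigma$-algebra generated by past contacts and rewards, every honest agent contacts the ``carrier'' $i_0$ during an unblocked phase with sufficient probability, using the independence of the uniform sampling in Line \ref{algSampleRecommender} across phases. Handling this coupling between the gossip dynamics, the blocking schedule, and the shifting active sets, and then splicing the resulting moment bound into a UCB analysis on a time-varying arm set, is what makes the argument substantially more delicate than the cooperative case of \cite{chawla2020gossiping}.
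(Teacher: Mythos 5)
Your overall architecture matches the paper's: the same random phase $\tau$ (built from a stabilization time $\tau_{stab}$ after which any honest agent holding arm $1$ reports it, plus a spreading time driven by uniform contacts with a ``carrier'' whose sticky set contains arm $1$), the same three-regime split with threshold roughly $T^{1/K}$, the same moment bound $\E A_\tau < \infty$ exploiting that polynomial blocking expires fast enough for the carrier to be re-contacted, and the same fallback UCB-on-a-time-varying-active-set argument for the second branch of the $\min$.

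There is, however, one genuine gap: the mechanism you describe for the late regime --- ``each malicious agent can supply at most $\lceil \log_\eta K\rceil$ bad arms before its blocking periods compound past $T$'' --- only yields a count of $O(m\log K)$ distinct non-sticky suboptimal arms, hence a bound of order $(S+m\log K)\log(T)/\Delta$. The paper explicitly notes at the end of its own proof sketch that this is \emph{weaker} than the claimed bound \eqref{eqOurAlgorithmRegret}, whose first branch involves only $m+2$ non-sticky arm gaps with the prefactor $(2\eta-1)/(\eta-1)$. The missing refinement is the windowed argument of Lemma \ref{lemLateNonSticky}: partition phases after $T^{\gamma/\beta}$ into geometric windows $\{\lceil T^{\gamma\eta^{l-1}/\beta}\rceil,\ldots,\lceil T^{\gamma\eta^{l}/\beta}\rceil\}$, show via the pigeonhole/blocking argument (Claim \ref{clmLateNonStickyIndicatorSum}) that at most $m+2$ distinct non-sticky suboptimal arms can \emph{ever become active} within any single window (because a malicious agent blocked at phase $j \geq T^{\gamma\eta^{l-1}/\beta}$ stays blocked until $j^\eta \geq T^{\gamma\eta^{l}/\beta}$, i.e., past the end of the window), and then sum the per-window UCB costs $\log(A_{\lceil T^{\gamma\eta^{l}/\beta}\rceil}\wedge T)$ as a geometric series (Claim \ref{clmLateNonStickyLogSum}) to obtain the factor $\frac{2\eta-1}{\eta-1}\log T$ rather than $\log K\cdot\log T$. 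Without this step your plan proves a correct but strictly weaker statement than Theorem \ref{thmOurAlgorithmRegret}; you would also still need the choice $\gamma = \Delta_2/(K\Delta_{S+m+4})$ to absorb the intermediate regime's $K\gamma\log T/\Delta_2$ contribution into the $\Delta_{S+m+4}^{-1}\log T$ term.
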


\begin{rem} \label{remOurAlgorithmSummary}
Letting $S = \ceil{ (K/n) \log(1/\varepsilon) }$ (see Remark \ref{remBestInSticky}) and $\Delta_2 = \cdots = \Delta_K = \Delta$, Theorem \ref{thmOurAlgorithmRegret} shows regret scales as $O( \min \{ m+K/n , K \} \log(T) / \Delta )$ for our algorithm. Note this improves over the $O( K \log (T) / \Delta )$ regret of the single-agent baseline whenever $m < K (1-1/n)$. Thus, \emph{if the number of malicious agents is small compared to the number of arms, honest agents benefit from collaboration}. In contrast, even $m=1$ malicious agent nullifies this benefit for the existing algorithm (see Remark \ref{remNoBlacklistSummary}). This choice of $S$ does require knowledge of $n$, but knowledge of some lower bound $n' \leq n$ such that $m + K / n' = o(K)$ suffices. Equivalently, we can assume knowledge of $n+m$ (as in \cite{chawla2020gossiping}) and a lower bound on $n/(n+m)$ (e.g., honest agents know at least half of all agents are honest). Finally, we suspect the regret's linear dependence on $m$ is unavoidable, because malicious agents can behave like honest ones until late in the algorithm. Thus, $o(m)$ dependence requires blocking malicious agents while they are indistinguishable from honest ones, which increases blocking among honest agents and may prevent the best arm from spreading. 
\end{rem}

\begin{rem} \label{remMzero}
In the setting of Remark \ref{remOurAlgorithmSummary}, regret is $O( (K/n) \log(T)/\Delta )$ when $m=0$, which matches the $m=0$ regret from \cite{chawla2020gossiping}. We do have an additional multiplicative constant $(2\eta-1)/(\eta-1)$, but this can be removed by separately analyzing the cases $m=0$ and $m>0$ (see Remark \ref{remSeparateCases}). Our second-order term $C^\star$ is worse due to accidental blocking of malicious agents early in the algorithm. However, this seems inevitable for an algorithm that simultaneously works in the cases $m=0$ and $m>0$, without prior knowledge of the case.
\end{rem}

\begin{rem} \label{remMvsN}
Our algorithm can improve over the single-agent baseline even when $m \gg n$. For example, in the setting of Remark \ref{remOurAlgorithmSummary}, its regret is $O(K^{1-\lambda} \log(T)/\Delta)$ when $n \propto K^{\lambda}$ and $m \propto K^{ 1-\lambda }$ for some $\lambda \in (0,1)$. Note $m$ is polynomial in $n$ in this case, and the exponent can be made arbitrarily large by choosing $\lambda$ small. While stylized, this regime is interesting because honest agents are initially overwhelmed with malicious agent recommendations, which can be arbitrarily bad. Nevertheless, Theorem \ref{thmOurAlgorithmRegret} implies the best arm will eventually spread among honest agents, and honest agents will eventually block malicious ones. This is somewhat counterintuitive, as one may have expected us to need a bound on $m$ in terms of $n$ to bound regret.
\end{rem}

\begin{rem}
Our algorithm has two key parameters: $\eta$, which controls the blocking duration, and $\beta$, which controls the frequency of communication. In Theorem \ref{thmOurAlgorithmRegret}, we see the $\log T$ term decreases with $\eta$ but is independent of $\beta$. Intuitively, this means long-term regret is smaller when blocking is more aggressive, but is insensitive to the frequency of communication. The second-order term grows with $\eta$, because aggressive blocking delays the best arm from spreading among honest agents. In contrast, this term's dependence on $\beta$ is more complicated. On the one hand, $\Delta_2^{-4 \beta \eta / (\beta-1) }$ is the time before honest agents can reliably identify the best arm in a phase, which decreases as the phase length (i.e., as $\beta$) grows. On the other hand, the term $((m+n)\log n)^{\beta}$ is the additional time for the best arm to spread, which increases in $\beta$. See early regime in proof sketch for more details.
\end{rem}

\begin{rem}
By choosing the blocking parameter $\eta$ to be small and tightening the analysis, the $\Delta^{- 4 \beta \eta / (\beta-1) }$ dependence on the arm gap in Theorem \ref{thmOurAlgorithmRegret} can be improved and made close to $\Delta^{-2 \beta / (\beta-1)}$ (see \cite[Remark 13]{vial2020robust}), which matches the best known bound when $m=0$ \cite[Corollary 2]{chawla2020gossiping}. We note that improving this dependence to $\Delta^{-1}$, which would imply $O(\sqrt{T})$ regret for worst case $\Delta$, remains an open problem even without malicious agents.
\end{rem}

\begin{proof}[Proof sketch]
We prove Theorem \ref{thmOurAlgorithmRegret} in Appendix \ref{appUpperBound} but here describe the key ideas assuming $\Delta_2 = \cdots = \Delta_K = \Delta$ (to simplify the notation) and $S + m < K$ (so the theorem improves over the single-agent baseline). We first define a random phase $\tau$ such that 
\begin{equation} \label{eqTauMain}
1 \in S_j^{(i)},\ B_j^{(i)} = 1\ \forall\ i \in \{1,\ldots,n\}\ \forall\ j  \geq \tau ,
\end{equation}
i.e., the best arm is active and most played for all honest $i$ and all phases $j \geq \tau$ (see \eqref{eqDefnTau} in Appendix \ref{appUpperBound} for the formal definition). We then bound regret incurred in three regimes defined in terms of $\tau$.

{\bf Early regime:} This regime contains all phases before $\tau$, i.e., the first $A_{\tau}$ arm pulls. Since $A_{\tau} \triangleq \tau^{\beta}$, we can trivially bound regret in this regime by $\E \tau^{\beta}$. Our goal is to show $\E \tau^{\beta} < \infty$ as $T \rightarrow \infty$, so this regime only contributes to the constant $C^\star$. Toward this end, we first define a random phase $\tau_{stab} \leq \tau$ such that
\begin{equation} \label{eqTauStabMain}
1 \in S_j^{(i)} \Rightarrow B_j^{(i)} = 1\ \forall\ i \in \{1,\ldots,n\}\ \forall\ j \geq  \tau_{stab} ,
\end{equation}
i.e., the best arm is most played \textit{if} it is active (see \eqref{eqDefnTauStab} in Appendix \ref{appUpperBound}). Next, let $i^*$ be an honest agent with the best arm in its sticky set, i.e., $1 \in \hat{S}^{(i^*)}$ (such an agent exists by assumption). The key observation is that if $i \neq i^*$ contacts $i^*$ at phase $j \geq \tau_{stab}$, i.e., if $H_j^{(i)} = i^*$, then $i^*$ will (by definition of $\tau_{stab}$) recommend arm $1$, which $i$ will add to its active set $S_j^{(i)}$ (if not already present). Combined with \eqref{eqTauMain} and \eqref{eqTauStabMain}, this implies
\begin{equation} \label{eqTauToRec}
\tau \leq \max_{i \in \{1,\ldots,n\} \setminus \{i^* \} } \inf \{ j \geq \tau_{stab} : H_j^{(i)} = i^* \} .
\end{equation}
Now because of the uniform sampling in Algorithm \ref{algGetArm}, $H_j^{(i)} = i^*$ occurs every $O(m+n)$ phases on average, unless $i^*$ has been blocked. However, even if $i$ blocks $i^*$ just before $\tau_{stab}$, $i$ will un-block $i^*$ by phase $\tau_{stab}^{\eta}$, and $H_j^{(i)} = i^*$ will occur within $O(m+n)$ additional phases. This allows us to show that with high probability, $\inf \{ j \geq \tau_{stab} : H_j^{(i)} = i^* \} = O ( \tau_{stab}^{\eta} )$, where here $O(\cdot)$ hides $n$ and $m$. Combined with \eqref{eqTauToRec}, and bounding the maximum by a sum, we obtain $\E \tau^{\beta} \leq O ( \E \tau_{stab}^{\eta \beta} )$. Thus, it only remains to show $\E \tau_{stab}^{\eta \beta} < \infty$. This amounts to showing that if the best arm is active for phase $j$, it is most played within that phase, with high probability as $j \rightarrow \infty$. This in turn follows from classical results for best arm identification \cite{bubeck2011pure}, and the fact that the phase length $A_j - A_{j-1} = \Theta ( j^{\beta-1} )$ grows with $j$. We note the definition of $\tau_{stab}$ is taken from \cite{chawla2020gossiping}, but our analysis differs as we require a stronger result ($\E \tau_{stab}^{\eta \beta} < \infty$ instead of $\E \tau_{stab}^{\beta} < \infty$ in \cite{chawla2020gossiping}), owing to the fact that honest agents can mistakenly block one another in our algorithm.

{\bf Late regime:} The late regime (hereafter LR) contains phases $j \in \{ \max \{ T^{\phi} , \tau \} , \ldots , T^{1/\beta} \}$, where $\phi \in (0,1/\beta)$ will be chosen later (independent of $T$) and $T^{1/\beta}$ is the phase ending at time $A_{T^{1/\beta}} \triangleq T$. The key observation is that if malicious agent $i'$ recommends a suboptimal arm $k \neq 1$ to honest agent $i$ at such a phase $j$, $k$ will not be most played by $i$ (since $j \geq \tau$ and by definition $\tau$), so $i$ will block $i'$ until phase $j^{\eta}$. After phase $j^{\eta}$, $i'$ can again recommend a suboptimal arm, but $i$ will again block $i'$, this time until phase $j^{\eta^2}$. Iterating this argument, we see $i'$ can only recommend suboptimal arms at phases that scale as $\{ T^{\phi \eta^l} \}_{l=0}^{l_1}$, where $l_1 = - \log_{\eta}  (\beta \phi)$ indexes the last such phase in the LR (since $T^{ \phi \eta^{l_1} } = T^{1/\beta}$). Thus, \textit{irrespective of the horizon $T$}, each malicious agent can recommend only $l_1$ suboptimal arms in the LR. Combined with the fact that the LR begins at phase $\tau$ (after which honest agents only recommend the best arm), this means $i$ only explores $S+ l_1 m$ suboptimal arms during the LR: $S$ sticky arms and $l_1$ recommendations from each of $m$ malicious agents. Thus, the LR is roughly equivalent to an $(S+l_1 m)$-armed bandit. Using classical bounds from \cite{auer2002finite}, this implies that $i$ incurs $O( (S+l_1 m) \log(T) / \Delta )$ LR regret.

{\bf Intermediate regime:} The remaining phases $\tau, \ldots ,T^{\phi}$ are the intermediate regime (IR). Since this regime also starts after $\tau$, the argument from the LR shows that any malicious agent $i'$ can only recommend suboptimal arms at phases that scale as $\{ \tau^{\eta^l} \}_{l=0}^{ l_2 }$, where $l_2 = \log_{\eta} \log_{\tau} T^{1/\phi}$ is the last phase before the LR. However, since $\phi$ was assumed to be independent of $T$ in the LR, $l_2 \rightarrow \infty$ as $T \rightarrow \infty$, so the key result from the LR (that malicious $i'$ can only recommend finite suboptimal arms) fails. Hence, we concede that malicious agents may force $i$ to explore all suboptimal arms during in the IR. However, since the best arm is always active for $i$ and $t \leq A_{T^{\phi}} \triangleq T^{\phi \beta}$ in the IR, this is no worse than playing all $K$ arms for horizon $T^{\phi \beta}$, which means $O(K \log ( T^{\phi \beta} ) / \Delta)$ IR regret.

{\bf Finishing the proof:} In summary, we have argued
\begin{equation}
\E R_T^{(i)} = O(1) +  O ( K \log ( T^{\phi \beta} ) / \Delta) + O ( ( S + l_1 m ) \log(T) / \Delta ) ,
\end{equation}
where the three terms account for the early, intermediate, and late regimes, respectively. Choosing $\phi = 1 / ( K \beta )$ and recalling $l_1 = - \log_{\eta} ( \beta \phi ) = O ( \log K )$, we obtain
\begin{equation}
\E R_T^{(i)} = O ( ( S + m \log K )  \log(T) / \Delta ) .
\end{equation}
(Note this is worse than the bound reported in the theorem; in the actual proof, we tighten the analysis to avoid the $\log K$ factor.)
\end{proof}

\begin{rem} \label{remBlacklistPhen}
In short, our algorithm relies on three phenomena. First, for phases independent of $T$, polynomial-length blocking is mild enough that the best arm spreads (see early regime in proof sketch). Second, repeatedly blocking malicious agents means each recommends finitely many suboptimal arms at phases polynomial in $T$ (see late regime). Third, while blocking cannot eliminate malicious agents in between these regimes, the effective horizon $T^{1/K}$ is too small to appreciably increase regret (see intermediate regime).
\end{rem}

\begin{rem} \label{remActiveChanges}
In the absence of malicious agents, \cite[Proposition 1]{chawla2020gossiping} shows $1 \in S_{\tau'}^{(i)}$ and $S_j^{(i)} = S_{\tau'}^{(i)}\ \forall\ j \geq \tau'$, for some almost-surely finite phase $\tau'$; in words, active sets remain fixed after $\tau'$. This allows the authors to treat regret after phase $\tau'$ as in the single-agent bandit setting (with the actual set of arms $\{1,\ldots,K\}$ replaced by the fixed active set $S_{\tau'}^{(i)}$). With the introduction of malicious agents, active sets may change infinitely often as $T \rightarrow \infty$ (see intermediate/late regimes of proof sketch), which necessitates a more refined analysis.
\end{rem}

\section{Experiments} \label{secExperiments}

In this section, we illustrate our analysis with numerical results on synthetic and real datasets.

\subsection{Synthetic data}

For the arm means, we choose $\mu_1 = 0.95, \mu_2 = 0.85$ (so that $\Delta_2 = 0.1$) and sample $\mu_3, \ldots, \mu_K$ from $[0,0.85]$ uniformly. We fix $n = 25$, $\beta = 2$, and $\alpha = 4$. Note the existing algorithm has good empirical performance with similar parameters when $m=0$ (see \cite[Section 7]{chawla2020gossiping}). We choose $S = \ceil{K/n}$ (see Remark \ref{remBestInSticky}) and resample uniformly random sticky sets until $1 \in \cup_{i=1}^n  \hat{S}^{(i)}$. We consider two malicious agent strategies: a \textit{uniform strategy} and an \textit{omniscient strategy}, where $\nu_{j,i}^{(i')}$ is uniform over $\{1,\ldots,K\}$ and $\argmin_{ k' \in \{2,\ldots,K\} \setminus S_j^{(i)} } T_{k'}^{(i)} ( A_j )$, respectively.\footnote{\textit{Omniscient} refers to the fact that malicious agents exploit private information.} Note that the omniscient strategy recommends whichever inactive suboptimal arm has been played least thus far, which forces honest agents to continue exploring all suboptimal arms. 

In Figure \ref{figSynthetic}, we set $m=10, K = 100, \eta = 2$ and plot mean and standard deviation of regret over $50$ trials. We compare the proposed algorithm to the existing one from \cite{chawla2020gossiping}, a baseline with no communication between agents, and an oracle baseline where honest agents know and block malicious agents \textit{a priori}. Our algorithm performs closer to the oracle than the no communication baseline; the opposite is true for \cite{chawla2020gossiping}. Moreover, our algorithm incurs less than half the regret of the existing algorithm. This improvement occurs across various choices of $m$, $K$, and $\eta$; see \cite[Appendix D]{vial2020robust}. Results are roughly similar for the uniform and omniscient strategies. The most notable difference is the ``S-curve" for the existing algorithm in the latter case. We believe this occurs because the omniscient strategy more aggressively forces honest agents to play under-explored non-active arms. Being under-explored, these arms are likely to be played more than the best one in the subsequent phase, which causes honest agents to discard the best arm at $T \approx 6 \times 10^4$ and $T \approx 4 \times 10^4$ in Figures \ref{figSynthetic} and \ref{figReal}, respectively. This leads to the ``bump" in regret near those values of $T$.

\begin{figure}
\centering \includegraphics[width=2.9in]{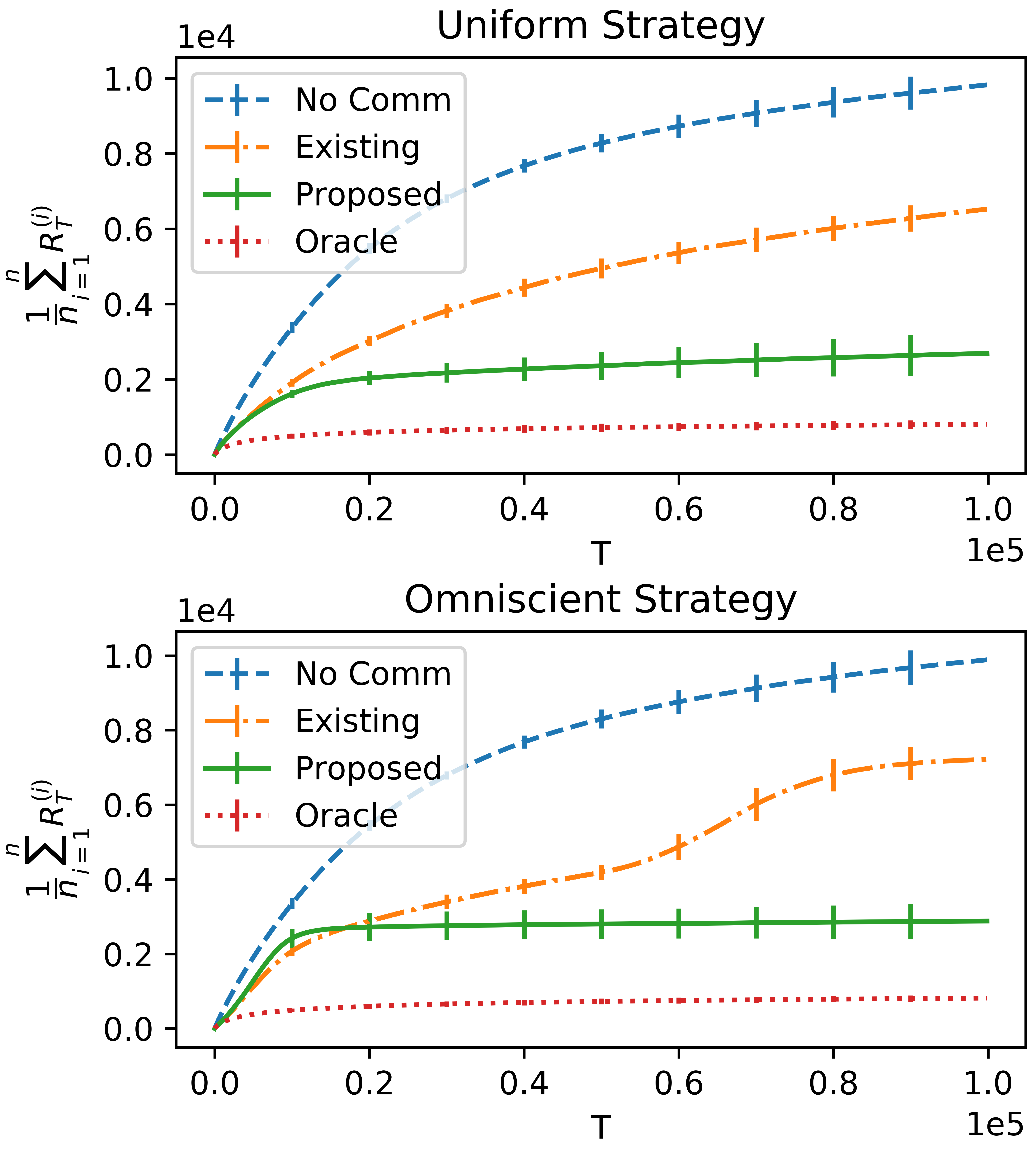} \caption{Regret for synthetic data, $m = 10, K = 100, \eta = 2$} \label{figSynthetic}
\end{figure}

\subsection{Real data}

For the same choices of $n$, $\beta$, $\alpha$, and malicious agent strategy, we test the four algorithms on the MovieLens dataset \cite{harper2015movielens}. We view movies as arms and derive arm means in a manner similar to \cite{chawla2020gossiping,sankararaman2019social}. First, we extract a matrix containing movie ratings by users with the same age, gender, and occupation, while also ensuring each user has rated $\geq 30$ movies and each movie has been rated $\geq 30$ times by the set of users. Next, we use matrix completion \cite{HaMaLeZa15} to estimate the missing entries of this matrix. From this estimated matrix, we map ratings to Bernoulli rewards by defining arm means as the fraction of ratings $\geq 4$ on a scale of $1$ to $5$ (i.e., we assume a user enjoyed a movie and gained a unit reward if he/she rated it $4$ or $5$ stars). Figure \ref{figReal} shows results similar to the synthetic case for $m=15, K = 100, \eta = 2$; \cite[Appendix D]{vial2020robust} again contains results for other $m$, $K$, and $\eta$ values.

\begin{figure}
\centering \includegraphics[width=2.9in]{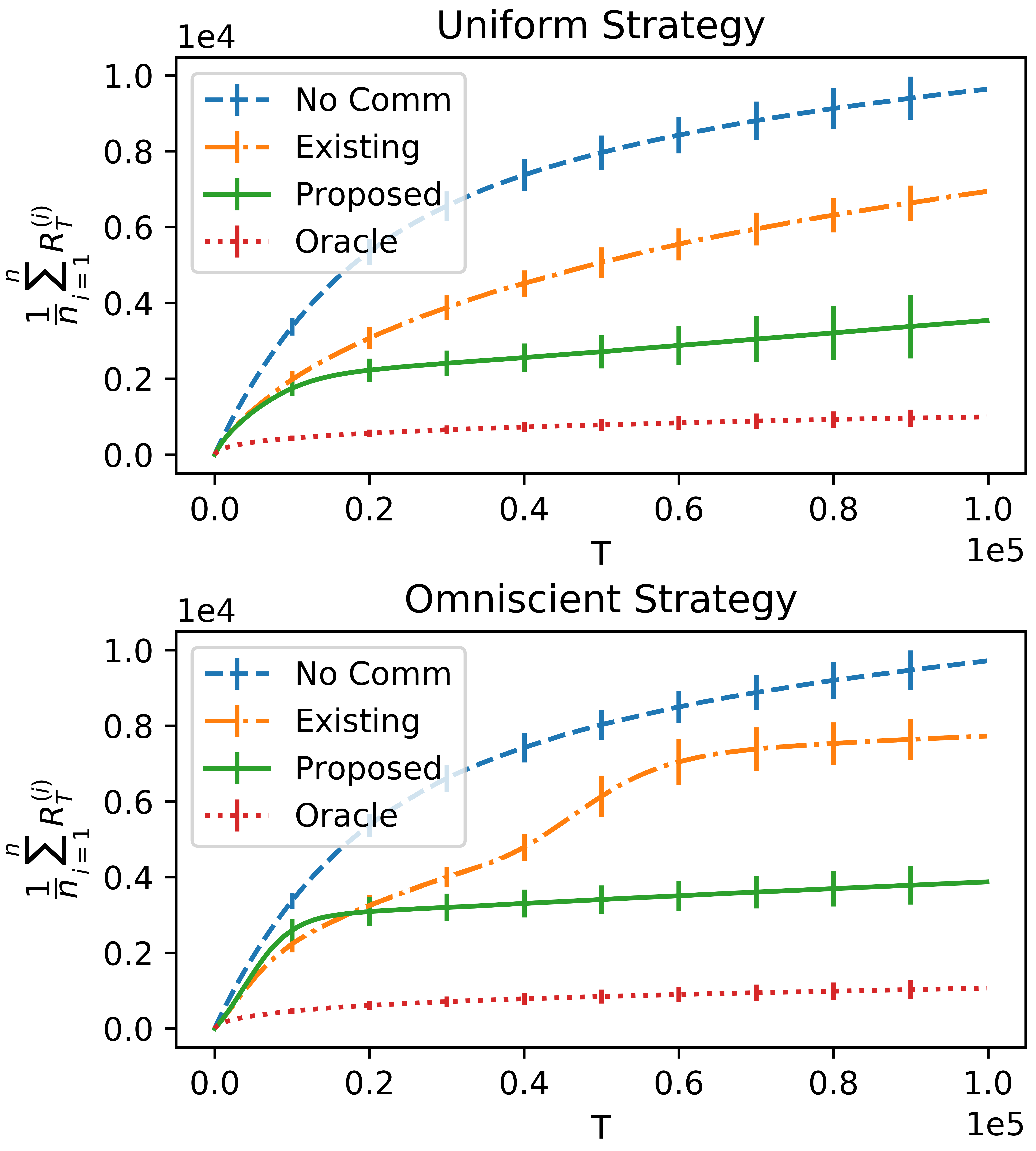} \caption{Regret for real data, $m = 15, K = 100, \eta = 2$} \label{figReal}
\end{figure}

\section{Conclusion} \label{secConclusions}

We studied a setting in which $n$ honest agents collaborate to minimize regret from a $K$-armed bandit and $m$ malicious agents disrupt this collaboration. We showed that even if $m=1$, existing algorithms fail to leverage the benefit of collaboration in this setting. We thus proposed an algorithm based on blocking. For the proposed algorithm, we showed regret is smaller than the single-agent baseline whenever $m$ is small compared to $K$, ensuring robustness against any malicious behavior.

\iftoggle{arxiv}{
\section*{Acknowledgments}

We thank Ronshee Chawla and Abishek Sankararaman for assistance with experiments. This work was partially supported by ONR Grant N00014-19-1-2566, NSF Grant SATC 1704778, NSF Grant CCF 1934986, ARO Grant W911NF-17-1-0359, and ARO Grant W911NF-19-1-0379.
}

\bibliographystyle{ACM-Reference-Format}
\bibliography{references}

\appendix

\section{Proof of Theorem \ref{thmOurAlgorithmRegret}} \label{appUpperBound}

Our proof uses a construction from \cite[Appendix B.1]{chawla2020gossiping} that we define here. First, let $\mathcal{S}^{(i)} = \{ W \subset \{1,\ldots,K\} : |W| = S+2, \hat{S}^{(i)} \cup \{ 1 \}  \subset W \}$ denote the $(S+2)$-sized sets of arms containing the sticky set $\hat{S}^{(i)}$ and the best arm $1$. For $i \in \{1,\ldots,n\}$, $j \in \N$, $W = \{ w_1, \ldots , w_{S+2} \} \in \mathcal{S}^{(i)}$, and $a = (a_1, \ldots , a_{S+2}) \in 
(\N \cup \{0\})^{S+2}$, let
\begin{align}
 \xi_j^{(i)} ( W , a )  = \{  S_j^{(i)} = W , & [ T_{w_l}^{(i)} ( A_{j-1} ) ]_{l = 1}^{S+2} = a, B_j^{(i)} \neq 1 \} 
\end{align}
be the event that honest agent $i$'s active set is $W$ at phase $j$, arm $w_l \in W$ was played $a_l$ times before phase $j$ began (for each $l$), and $1$ is \textit{not} the most played arm during phase $j$. Also define
\begin{equation}
\Xi_j^{(i)} = \cup_{W \in \mathcal{S}^{(i)}} \cup_{ a \in ( \N \cup \{0\} )^{S+2} } \xi_j^{(i)} ( W , a )
\end{equation}
to be the union (over active sets and histories of plays) of all such events. Let $\chi_j^{(i)} = 1 ( \Xi_j^{(i)} )$, where $1(\cdot)$ is the indicator function. Thus, $\chi_j^{(i)} = 0$ implies the best arm $1$ is not active for $i$ at phase $j$, or it is active and it is most played (under any history of plays). Using $\chi_j^{(i)}$, define the random variables
\begin{gather}
\textstyle \tau_{stab}^{(i)} = \inf \{ j \in \N : \chi_{j'}^{(i)} = 0\ \forall\ j' \geq j \} ,\\ 
\textstyle \tau_{stab} = \max_{i \in \{1,\ldots,n\} }  \tau_{stab}^{(i)} . \label{eqDefnTauStab}
\end{gather}
Thus, at the $\tau_{stab}$-th phase, and at all phases thereafter, the best arm $1$ will be the most played for any honest agent with this arm in its active set. Finally, let
\begin{gather}
\tau_{spr}^{(i)} = \inf \{ j \geq \tau_{stab} : 1 \in S_j^{(i)} \}  - \tau_{stab} , \\
\textstyle \tau_{spr} = \max_{i \in \{1,\ldots,n\} } \tau_{spr}^{(i)} , \quad \tau = \tau_{stab}+ \tau_{spr} . \label{eqDefnTau}
\end{gather}
Thus, at the $\tau$-th phase, and at all phases thereafter, the active set contains the best arm $1$ and this arm is most played, for all honest agents. Note the definition of $\tau$ implies the following property:
\begin{equation} \label{eqBestAfterTau}
1 \in S_j^{(i)},\ B_j^{(i)} = 1\ \forall\ j \in \{ \tau, \tau+1,\ldots\}\ \forall\ i \in \{1,\ldots,n\} .
\end{equation}
This holds inductively: $1 \in S_{\tau_{spr}^{(i)}+\tau_{stab}}^{(i)}$ by definition of $\tau_{spr}^{(i)}$, so $B_{\tau_{spr}^{(i)}+\tau_{stab}}^{(i)} = 1$ by definition of $\tau_{stab}$, so $1 \in S_{\tau_{spr}^{(i)}+\tau_{stab}+1}^{(i)}$ by Algorithm \ref{algGeneral}, etc. By definition of $\tau$, \eqref{eqBestAfterTau} follows.

Next, we let $\overline{S}^{(i)} = \{2,\ldots,K\} \cap \hat{S}^{(i)}$ and $\underline{S}^{(i)} = \{2,\ldots,K\} \setminus \hat{S}^{(i)}$ denote the suboptimal sticky and non-sticky arms for agent $i$, respectively, and we let $\gamma \in (0,1)$ be a constant to be chosen later. Then by upper bounding regret before $A_{\tau}$ as linear in time,
\begin{align} \label{eqRegretDecomp}
\E R_T^{(i)} \leq \E A_{\tau} & \textstyle + \sum_{k \in \overline{S}^{(i)}} \Delta_k \E \sum_{t=A_{\tau}+1}^T 1 ( I_t^{(i)} = k ) \\
&  \textstyle + \sum_{k \in \underline{S}^{(i)}} \Delta_k \E \sum_{t=A_{\tau}+1}^{A_{\ceil{T^{\gamma/\beta}}} \wedge T} 1 ( I_t^{(i)} = k ) \\
& \textstyle + \sum_{k \in \underline{S}^{(i)}} \Delta_k \E \sum_{t=A_{\ceil{T^{\gamma/\beta}} \vee \tau }+1}^T 1 ( I_t^{(i)} = k ) .
\end{align}
In words, the first term accounts for regret incurred at early times, i.e., before all agents are aware of the best arm and only recommend it moving forward. The remaining terms account for regret incurred from sticky arms at later times, from non-sticky arms at intermediate times, and from non-sticky arms at later times. The following lemmas bound these terms; see \cite[Appendix B]{vial2020robust} for proofs.

\begin{lem}[Early] \label{lemEarly}
For any $\beta > 1, \eta > 1, \alpha > \frac{ 3 + (1 + \beta \eta ) / \beta }{ 2 }$,
\begin{align}
\E A_{\tau} & \textstyle \leq 2^{1 + \beta \eta} \left( 4 + \left(\frac{26 \alpha (S+2)}{(\beta-1) \Delta_2^2}\right)^{2/(\beta-1)} \right) ^{\beta \eta}  \\
& \textstyle\quad + \frac{10 \beta}{ \beta-1} \max \{ 6 (m+n) \max \{ \log n , 2 (\beta-1) \} ,  3 ( 6^{\eta} + 2 ) \}^{\beta}  \\
& \textstyle \quad + \frac{ 2^{ \beta(2\alpha-3) +1 } n \binom{K}{2} (S+1)  }{ (2\alpha-3) (\beta(2\alpha-3) - 1) ( ( \beta (2\alpha-3)-1)/\eta - \beta )  }  .
\end{align}
\end{lem}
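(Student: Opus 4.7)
The plan is to control $\E A_{\tau} \leq 2\E\tau^{\beta}$ by decomposing $\tau = \tau_{stab}+\tau_{spr}$ and separately bounding $\E\tau_{stab}^{\beta\eta}$ and the spreading time $\tau_{spr}$. Because $\eta>1$ and $\tau_{stab}\geq 1$, once we establish $\tau_{spr}\leq \tau_{stab}^{\eta}+X$ for some ``small'' random variable $X$, we get $\tau\leq 2\tau_{stab}^{\eta}\vee 2X$, and hence $\tau^{\beta}\leq 2^{\beta\eta}\tau_{stab}^{\beta\eta}+(2X)^{\beta}$. This matches the structure of the lemma: $\E\tau_{stab}^{\beta\eta}$ will be bounded by a deterministic expression plus a tail term, producing Term~1 and Term~3 respectively, while $\E X^{\beta}$ will produce Term~2.

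For the spreading bound, I fix an honest agent $i^\star$ with $1\in\hat S^{(i^\star)}$ (which exists by assumption) and argue three things. First, by the definition of $\tau_{stab}$, every time any $j\geq\tau_{stab}$ satisfies $H_j^{(i)}=i^\star$, the response $R_j^{(i)}=B_j^{(i^\star)}=1$ is returned, so arm $1$ is injected into $S_{j+1}^{(i)}$; a straightforward induction using Line~\ref{algUjPlus1}--\ref{algNewActive} and the fact that $B_{j'}^{(i)}=1$ whenever $1\in S_{j'}^{(i)}$ (again by $\tau_{stab}$) then shows $1\in S_{j''}^{(i)}$ for all $j''\geq j+1$. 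Second, any block that $i$ has placed on $i^\star$ before phase $\tau_{stab}$ expires by phase $\tau_{stab}^{\eta}$ thanks to the $\lceil j^{\eta}\rceil$ blocking schedule of Algorithm~\ref{algUpdateGossip2}, so $i^\star\notin P_{j}^{(i)}$ for all $j\geq \tau_{stab}^{\eta}$. Third, $\texttt{Get-Rec}$ samples $H_j^{(i)}$ uniformly from at most $m+n-1$ candidates, so the waiting time for $H_j^{(i)}=i^\star$ after $\tau_{stab}^{\eta}$ is stochastically dominated by a geometric of mean $m+n$; a union bound over the $n-1$ honest agents other than $i^\star$ gives, with probability $1-1/\mathrm{poly}(n)$, $\tau_{spr}\leq \tau_{stab}^{\eta}-\tau_{stab}+c(m+n)\log n$. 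The failure event is handled using the crude fact that on that event we can still control moments by a polynomial tail, which after choosing constants yields the $\frac{10\beta}{\beta-1}\max\{6(m+n)\max\{\log n,2(\beta-1)\},3(6^{\eta}+2)\}^{\beta}$ structure of Term~2.

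For $\E\tau_{stab}^{\beta\eta}$, the tool is the pointwise identity $\tau_{stab}^{\beta\eta}\leq\sum_{j\geq 1}(j+1)^{\beta\eta-1}\mathbb{1}(\tau_{stab}\geq j)\leq\sum_{j\geq 1}(j+1)^{\beta\eta-1}\sum_{j'\geq j}\chi_{j'}^{(i)}$ summed over $i$, which after rearrangement reduces the problem to bounding $\sum_{j}j^{\beta\eta}\P(\chi_j^{(i)}=1)$. To bound $\P(\chi_j^{(i)}=1)$, I will split by $(W,a)$ as in the $\xi_j^{(i)}(W,a)$ definition: the phase length $A_j-A_{j-1}=\Theta(j^{\beta-1})$ is enough that once $j\geq C(S/\Delta_2^{2})^{1/(\beta-1)}$ and the cumulative plays are moderate, a UCB within-phase best-arm-identification argument \cite{bubeck2011pure,auer2002finite} forces arm $1$ to be most played except on a Hoeffding-concentration failure event of probability $j^{-\beta(2\alpha-3)}$; the assumption $\alpha>\tfrac{3+(1+\beta\eta)/\beta}{2}$ is precisely the threshold that makes $\sum_{j}j^{\beta\eta}\cdot j^{-\beta(2\alpha-3)}<\infty$, producing the denominator $(\beta(2\alpha-3)-1)/\eta-\beta$ in Term~3. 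The deterministic threshold on $j$ above which this analysis kicks in becomes the $\bigl(4+(26\alpha(S+2)/((\beta-1)\Delta_2^{2}))^{2/(\beta-1)}\bigr)^{\beta\eta}$ piece of Term~1, multiplied by the $2^{1+\beta\eta}$ from $A_{\tau}\leq 2\tau^{\beta}\leq 2(2\tau_{stab}^{\eta})^{\beta}$.

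The main obstacle will be the infinite union bound over histories $a\in(\N\cup\{0\})^{S+2}$, since $\xi_j^{(i)}(W,a)$ partitions by exact play counts. My resolution is a truncation argument: if any coordinate of $a$ exceeds a threshold proportional to $j^{\beta-1}/\Delta_2^{2}$, then either the best arm has been pulled enough that any Hoeffding comparison forces it to be most played, or no other arm can accumulate enough additional pulls within this single phase to catch it, and either way $\chi_j^{(i)}=0$ deterministically; the surviving range of $a$ contributes only an $S+1$ factor (one for the index of the most-pulled arm in $W\setminus\{1\}$), giving the $n\binom{K}{2}(S+1)$ numerator of Term~3. The secondary subtlety is that $\tau_{stab}$ is a ``last'' phase with $\chi_j^{(i)}=1$ rather than a first hitting time, which is exactly why the integration-by-parts bound $\E\tau_{stab}^{\beta\eta}\leq\sum_{j}(j+1)^{\beta\eta}\P(\chi_j^{(i)}=1)$ is needed and why the condition on $\alpha$ cannot be relaxed without improving the UCB tail exponent.
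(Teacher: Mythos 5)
Your proposal is correct and follows essentially the same route as the paper's proof: the paper also splits via $\P(\tau \geq j) \leq \P(\tau_{stab} \geq f(j)) + \P(\tau_{stab} < f(j), \tau \geq j)$ with $f(j) = \Theta(j^{1/\eta})$ (the tail-sum form of your $\tau \lesssim \tau_{stab}^{\eta} \vee X$ decomposition), uses exactly your block-expiry-by-$\tau_{stab}^{\eta}$ plus geometric-waiting argument for the spreading term, and obtains the $\tau_{stab}$ tail from a union bound over $\chi_{j'}^{(i)}$ with the same summability condition on $\alpha$. The only organizational difference is that the paper imports the per-phase bound $\P(\chi_{j'}^{(i)}=1) \lesssim \binom{K}{2}(S+1)A_{j'-1}^{-(2\alpha-3)}$ from \cite[Lemma 8]{chawla2020gossiping} rather than re-deriving the truncation over histories $a$ (where, as a small caution, large play counts do not make $\chi_j^{(i)}=0$ deterministically but only up to the same concentration failure event you already invoke).
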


\begin{lem}[Late, sticky] \label{lemLateSticky}
\begin{align}
& \textstyle \sum_{k \in \overline{S}^{(i)}} \Delta_k \E \sum_{t=A_{\tau}+1}^T 1 ( I_t^{(i)} = k ) \\
&  \textstyle \leq  \sum_{k \in \overline{S}^{(i)}} \frac{4 \alpha \log T}{ \Delta_k } + 2  | \overline{S}^{(i)} | \sum_{t=1}^{\infty} t^{2(1-\alpha)} .
\end{align}
\end{lem}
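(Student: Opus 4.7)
The plan is to reduce to the classical UCB$(\alpha)$ regret analysis by exploiting two structural facts: sticky arms $k \in \overline{S}^{(i)}$ satisfy $k \in S_j^{(i)}$ for every phase $j$ by construction, and by \eqref{eqBestAfterTau} the best arm satisfies $1 \in S_j^{(i)}$ for every $j \geq \tau$. Thus, for every $t > A_\tau$, both arms $1$ and $k$ belong to the current active set, and any occurrence of $\{I_t^{(i)} = k\}$ forces (via Line~\ref{algPullArm}) the UCB index of $k$ to be at least that of $1$. Once this reduction is in place, the entire effect of the multi-agent/malicious structure has been absorbed into the lower summation limit $A_\tau + 1$, and what remains mirrors a single-agent two-arm UCB comparison.

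First, I would fix $k \in \overline{S}^{(i)}$, set $u_k = \lceil 4 \alpha \log(T) / \Delta_k^2 \rceil$, and split
\begin{align}
\sum_{t=A_\tau+1}^T 1(I_t^{(i)} = k) & = \sum_{t=A_\tau+1}^T 1(I_t^{(i)} = k, T_k^{(i)}(t-1) < u_k) \\
& \quad + \sum_{t=A_\tau+1}^T 1(I_t^{(i)} = k, T_k^{(i)}(t-1) \geq u_k) .
\end{align}
The first piece is deterministically bounded by $u_k$, since the indicator turns off once $k$ has been pulled $u_k$ times; weighting by $\Delta_k$ and summing over $k$ produces the leading $\sum_{k \in \overline{S}^{(i)}} 4 \alpha \log(T) / \Delta_k$ term of the lemma. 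For the second piece, the condition $T_k^{(i)}(t-1) \geq u_k$ forces $\sqrt{\alpha \log(t)/T_k^{(i)}(t-1)} \leq \Delta_k / 2$, and the UCB comparison with arm $1$ then forces at least one of the concentration failures
\begin{equation}
\hat{\mu}_1^{(i)}(t-1) + \sqrt{\tfrac{\alpha \log t}{T_1^{(i)}(t-1)}} \leq \mu_1, \qquad \hat{\mu}_k^{(i)}(t-1) \geq \mu_k + \sqrt{\tfrac{\alpha \log t}{T_k^{(i)}(t-1)}} .
\end{equation}

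Union-bounding over the unknown counts $T_1^{(i)}(t-1), T_k^{(i)}(t-1) \in \{1,\ldots,t\}$ and invoking Hoeffding's inequality (via the Bernoulli assumption of Section~\ref{secPrelim}) gives that each such configuration has probability at most $t^{-2\alpha}$, yielding at most $2 t^{2(1-\alpha)}$ per time $t$ under the deliberately coarse double sum over both counts. Summing over $t$ and extending the limit to $\infty$ gives an expected count of at most $2 \sum_{t=1}^\infty t^{2(1-\alpha)}$ per arm; weighting by $\Delta_k \leq 1$ and summing over $k \in \overline{S}^{(i)}$ yields the lemma's second term. The only real obstacle is this concentration bookkeeping: a sharper peeling argument would give the tighter $t^{1-2\alpha}$ factor, but the coarser $t^{2(1-\alpha)}$ suffices since the theorem's hypothesis on $\alpha$ already ensures the sum converges. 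Beyond this, no genuinely new ideas are required compared to the single-agent UCB analysis of \cite{auer2002finite}.
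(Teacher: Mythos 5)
Your proposal is correct and follows essentially the same route as the paper's proof in Appendix B.1: the identical split on whether $T_k^{(i)}(t-1)$ exceeds $4\alpha\log(T)/\Delta_k^2$, the observation that $t > A_\tau$ together with \eqref{eqBestAfterTau} puts arm $1$ in the active set so the UCB index comparison applies, and the classical Auer-style double union bound giving $2t^{2(1-\alpha)}$ per time step. The only (immaterial) difference is that you spell out the concentration argument that the paper delegates to \cite{auer2002finite}.
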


\begin{lem}[Intermediate, non-sticky] \label{lemInterNonSticky}
For any $\gamma \in (0,1)$,
\begin{align}
& \textstyle \sum_{k \in \underline{S}^{(i)}} \Delta_k \E \sum_{t=A_{\tau}+1}^{A_{\ceil{T^{\gamma/\beta}}} \wedge T} 1 ( I_t^{(i)} = k ) \\
& \textstyle \leq \sum_{k \in \underline{S}^{(i)}}  \frac{4 \alpha \log ( A_{\ceil{T^{\gamma/\beta}}} \wedge T ) }{ \Delta_k }  + 2 |\underline{S}^{(i)} | \sum_{ t = 1 }^{ A_{ \ceil{ T^{ \gamma  / \beta } } } } t^{ 2(1-\alpha) } \\
& \textstyle \leq \frac{4 \alpha \gamma K \log T }{ \Delta_2 } + \frac{ 8 \alpha \beta K }{ \Delta_2 } + 2 |\underline{S}^{(i)} | \sum_{ t = 1 }^{ A_{ \ceil{ T^{ \gamma  / \beta } } } } t^{ 2(1-\alpha) }  .
\end{align}
\end{lem}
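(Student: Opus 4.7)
The plan is to apply a standard single-agent UCB($\alpha$) analysis to each non-sticky suboptimal arm $k \in \underline{S}^{(i)}$, leveraging property \eqref{eqBestAfterTau}---that the best arm $1$ is active and most-played at every honest agent for every phase $j \geq \tau$---to reduce the multi-agent analysis on the time interval $(A_\tau, T']$ to a single-agent one. Here and below, let $T' = A_{\lceil T^{\gamma/\beta}\rceil} \wedge T$ for brevity. The multi-agent and malicious-agent features of the model do not enter this lemma directly; they have already been absorbed into the definition of $\tau$, whose moments are controlled by Lemma \ref{lemEarly}.

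For the first inequality, fix $k \in \underline{S}^{(i)}$ and any $t \in (A_\tau, T']$. If $I_t^{(i)} = k$, then both $1$ and $k$ lie in the active set $S_j^{(i)}$ of the phase containing $t$ (the former by \eqref{eqBestAfterTau}, the latter by the definition of $I_t^{(i)}$), so Line \ref{algPullArm} of Algorithm \ref{algGeneral} forces
\begin{equation}
\hat{\mu}_k^{(i)}(t-1) + \sqrt{\alpha\log(t)/T_k^{(i)}(t-1)} \geq \hat{\mu}_1^{(i)}(t-1) + \sqrt{\alpha\log(t)/T_1^{(i)}(t-1)} .
\end{equation}
A routine rearrangement (as in \cite{auer2002finite}) shows that at such a $t$ at least one of the following must hold: (i) $T_k^{(i)}(t-1) \leq 4\alpha\log(T')/\Delta_k^2$, (ii) arm $1$'s empirical mean lies below $\mu_1$ by at least its confidence radius, or (iii) arm $k$'s empirical mean lies above $\mu_k$ by at least its confidence radius. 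Event (i) caps the per-arm play count at $\lceil 4\alpha\log(T')/\Delta_k^2 \rceil$; for (ii) and (iii), Hoeffding's inequality together with a union bound over the possible integer values of $T_1^{(i)}(t-1)$ (resp.\ $T_k^{(i)}(t-1)$) gives per-event probability at most $t^{2(1-\alpha)}$. Since these probability bounds are uniform in $t$, I may upper-bound the random-lower-limit sum $\sum_{t=A_\tau+1}^{T'}$ by $\sum_{t=1}^{T'}$, which entirely absorbs the randomness of $\tau$ and yields
\begin{equation}
\E \sum_{t=A_\tau+1}^{T'} 1(I_t^{(i)}=k) \leq \frac{4\alpha\log T'}{\Delta_k^2} + 2\sum_{t=1}^{T'} t^{2(1-\alpha)} .
\end{equation}
Multiplying by $\Delta_k$, summing over $k \in \underline{S}^{(i)}$, and using $\Delta_k \leq 1$ in the concentration term produces the first inequality.

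For the second inequality, the work is purely algebraic. Since $A_j = \lceil j^{\beta}\rceil$, we have $A_{\lceil T^{\gamma/\beta}\rceil} \leq (T^{\gamma/\beta}+1)^{\beta} + 1 \leq 2^{\beta+1} T^{\gamma}$ whenever $T \geq 1$, hence $\log T' \leq \gamma\log T + (\beta+1)\log 2 \leq \gamma\log T + 2\beta$. Substituting into the first summand, bounding $|\underline{S}^{(i)}| \leq K$, and lower-bounding each $\Delta_k$ by $\Delta_2$ for $k \neq 1$ yields the displayed $(4\alpha\gamma K/\Delta_2)\log T + 8\alpha\beta K/\Delta_2$ bound. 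The only notable obstacle in this proof is the concentration bookkeeping needed to match the precise rate $t^{2(1-\alpha)}$ quoted in the lemma, which requires carefully unioning Hoeffding's inequality over the admissible integer counts of $T_1^{(i)}$ and $T_k^{(i)}$ rather than using the cruder $t^{1-2\alpha}$ bound; this is standard and poses no conceptual difficulty.
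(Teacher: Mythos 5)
Your proof is correct and follows essentially the same route as the paper: the first inequality is the standard UCB($\alpha$) count/concentration split applied on $(A_\tau,\, A_{\ceil{T^{\gamma/\beta}}}\wedge T]$, made valid by \eqref{eqBestAfterTau} guaranteeing that arm $1$ is active throughout this range (the paper literally reuses the proof of Lemma \ref{lemLateSticky} with $T$ replaced by $A_{\ceil{T^{\gamma/\beta}}}\wedge T$ and $\overline{S}^{(i)}$ by $\underline{S}^{(i)}$), and the second inequality is the same algebra via $\log A_{\ceil{T^{\gamma/\beta}}} < \gamma\log T + 2\beta$ together with $\Delta_2 \leq \Delta_k$ and $|\underline{S}^{(i)}| < K$. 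The only quibble is your closing remark, which has the comparison backwards---$t^{1-2\alpha}$ is the sharper rate and $t^{2(1-\alpha)}$ the cruder one arising from the double union bound over both counts---but either rate suffices for the stated bound.
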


\begin{lem}[Late, non-sticky] \label{lemLateNonSticky}
For any $\gamma \in (0,1)$,
\begin{align}
& \textstyle \sum_{k \in \underline{S}^{(i)}} \Delta_k \E \sum_{t=A_{\ceil{T^{\gamma/\beta}} \vee \tau }+1}^T 1 ( I_t^{(i)} = k )  \\
& \textstyle \leq \frac{2 \eta-1}{\eta-1} \max_{ \tilde{S} \subset \underline{S}^{(i)} : |\tilde{S}| \leq m+2 } \sum_{k \in \tilde{S}} \frac{ 4 \alpha \log T }{ \Delta_k } \\
& \textstyle \quad +  \frac{ 8 \alpha \beta \log_{\eta}(1/\gamma) (m+2) }{\Delta_2} + 2 |\underline{S}^{(i)} | \sum_{ t = 1 + A_{ \ceil{ T^{ \gamma  / \beta } } } }^{\infty} t^{ 2(1-\alpha) } .
\end{align}
\end{lem}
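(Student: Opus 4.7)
The plan is to use the structure of the blocking schedule to decompose the late-regime regret into contributions from each malicious agent's bad recommendations, bounding each contribution by a per-window UCB estimate whose geometric sum produces the factor $\frac{2\eta-1}{\eta-1}$. The key preliminary observation is that for $j \geq \tau$, property~\eqref{eqBestAfterTau} gives $B_j^{(i')} = 1$ for every honest $i'$; since $1 \in S_j^{(i)}$, any such recommendation $R_j^{(i)} = 1$ satisfies $R_j^{(i)} \in S_j^{(i)}$ and leaves the active set unchanged (Line~\ref{algSameActive}). Hence in the late regime every ``refresh'' of the non-sticky pair $\{U_j^{(i)}, L_j^{(i)}\}$ is caused by a malicious agent. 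Moreover, if malicious $i'$ recommends a suboptimal arm at phase $\tilde{j}_l$, then $B_{\tilde{j}_l+1}^{(i)} = 1 \neq R_{\tilde{j}_l}^{(i)}$, so Algorithm~\ref{algUpdateGossip2} blocks $i'$ until phase $\lceil \tilde{j}_l^{\eta}\rceil$, enforcing $\tilde{j}_{l+1} \geq \tilde{j}_l^{\eta}$. Iterating from $\tilde{j}_0 \geq \lceil T^{\gamma/\beta}\rceil$ and using $\tilde{j}_l \leq T^{1/\beta}$ in the late regime, each malicious agent causes at most $l_1 + 1$ refreshes, where $l_1 \triangleq \lfloor \log_\eta(1/\gamma)\rfloor$.

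Let $k_l^{(i')}$ denote the arm that $i'$ recommends at phase $\tilde{j}_l$ (if any). The arm $k_l^{(i')}$ is a valid pull target only between $\tilde{j}_l$ and the time it is replaced, which is at most $\tilde{j}_{l+1} \leq \tilde{j}_l^{\eta}$ (or the end of the late regime if $l = l_1$). A standard UCB-$\alpha$ argument (as in \cite{auer2002finite}) applied at the stopping time $A_{\tilde{j}_{l+1}}$ yields $\E T_{k_l^{(i')}}(A_{\tilde{j}_{l+1}}) \leq 4\alpha \log(A_{\tilde{j}_{l+1}})/\Delta_{k_l^{(i')}}^{2}$ plus a $\sum_t 2 t^{2(1-\alpha)}$ concentration term. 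Using $\log A_{\tilde{j}_{l+1}} \leq \beta \log \tilde{j}_{l+1} \leq \gamma \eta^{l+1}\log T + O(\beta)$ (from $\tilde{j}_{l+1} \leq (\lceil T^{\gamma/\beta}\rceil)^{\eta^{l+1}}$) and writing $\Delta_{\min(i')} \triangleq \min_l \Delta_{k_l^{(i')}}$, the per-agent regret contribution satisfies
\begin{equation}
\sum_{l=0}^{l_1-1} \frac{4\alpha \gamma \eta^{l+1}\log T}{\Delta_{k_l^{(i')}}} + \frac{4\alpha \log T}{\Delta_{k_{l_1}^{(i')}}} \leq \frac{4\alpha \log T}{\Delta_{\min(i')}}\left( \frac{\eta}{\eta - 1} + 1 \right) = \frac{2\eta - 1}{\eta - 1}\cdot \frac{4\alpha \log T}{\Delta_{\min(i')}},
\end{equation}
where the geometric-sum step uses $\gamma \sum_{l=0}^{l_1-1}\eta^{l+1} \leq \eta/(\eta-1)$, which follows from $\gamma \eta^{l_1} \leq 1$.

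Summing over the $m$ malicious agents plus the (at most) two initial non-sticky arms at phase $\lceil T^{\gamma/\beta}\rceil \vee \tau$ (each admitting a direct UCB bound of $4\alpha\log T/\Delta_k$) produces the principal $\frac{2\eta-1}{\eta-1}\max_{\tilde{S} \subset \underline{S}^{(i)}:\,|\tilde{S}|\leq m+2}\sum_{k \in \tilde{S}}4\alpha\log T/\Delta_k$ term. The $O(\beta)$ slack from each of the $\approx l_1(m+2)$ per-window estimates accumulates into the additive $\frac{8\alpha\beta\log_\eta(1/\gamma)(m+2)}{\Delta_2}$ term, and the UCB concentration failures aggregate to $2|\underline{S}^{(i)}|\sum_{t = 1+A_{\lceil T^{\gamma/\beta}\rceil}}^{\infty} t^{2(1-\alpha)}$, matching the lemma. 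The main technical obstacle is the per-window bookkeeping: the UCB bound formally double-counts pulls of arms active across multiple disjoint windows (either when $i'$ re-recommends the same arm after a later unblock, or when two malicious agents happen to recommend the same arm), so the aggregate upper bound must be traced carefully to absorb slack into the $O(\log_\eta(1/\gamma)\beta)$ term without inflating the $\log T$-coefficient. A secondary issue is the subcase where $1 \in \hat{S}^{(i)}$ so that both $U_j^{(i)}$ and $L_j^{(i)}$ remain suboptimal after $\tau$; the same window decomposition still applies, but the two slots must be tracked separately when attributing recommendations.
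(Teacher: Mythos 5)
Your high-level intuition matches the paper's (blocking forces a malicious agent's successive bad recommendations to occur at phases growing like $j \mapsto j^{\eta}$, which caps the count at roughly $\log_{\eta}(1/\gamma)$ and produces the $\frac{2\eta-1}{\eta-1}$ factor via a geometric series), but your decomposition is genuinely different from the paper's and has gaps that are not merely bookkeeping. The paper partitions the late regime into \emph{deterministic} windows with phase boundaries $\ceil{T^{\gamma\eta^{l-1}/\beta}}$ to $\ceil{T^{\gamma\eta^{l}/\beta}}$, and proves an almost-sure combinatorial fact (Claim \ref{clmLateNonStickyIndicatorSum}): at most $m+2$ \emph{distinct} suboptimal non-sticky arms can be active during any one window, because a block issued at a phase $j \geq T^{\gamma\eta^{l-1}/\beta}$ lasts until $j^{\eta} \geq T^{\gamma\eta^{l}/\beta}$, i.e., past the end of the window, so by pigeonhole each malicious agent injects at most one new arm per window. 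The UCB threshold in window $l$ is then the deterministic quantity $4\alpha\log(A_{\ceil{T^{\gamma\eta^l/\beta}}}\wedge T)/\Delta_k^2$, and the geometric sum is over these fixed window endpoints. Your decomposition instead anchors windows at the \emph{random} recommendation times $\tilde{j}_l$ of each individual malicious agent, and this breaks in three places.

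First, your geometric-sum step uses $\tilde{j}_{l+1} \leq (\ceil{T^{\gamma/\beta}})^{\eta^{l+1}}$, but the blocking mechanism only yields the \emph{lower} bound $\tilde{j}_{l+1} \geq \tilde{j}_l^{\eta}$; nothing forces a malicious agent to recommend as soon as it is unblocked, so $\log A_{\tilde{j}_{l+1}}$ can be as large as $\log T$ for every $l$, and the claimed bound $\gamma\eta^{l+1}\log T + O(\beta)$ on the $l$-th window's log-length is false. Second, the active lifetime of the arm $k_l^{(i')}$ is not bounded by $i'$'s own next recommendation time $\tilde{j}_{l+1}$: the arm is evicted only when some agent (possibly a different one) makes a new out-of-active-set recommendation and $k_l^{(i')}$ loses the most-played comparison in Line \ref{algUjPlus1}, and when $1 \in \hat{S}^{(i)}$ a suboptimal arm can persist in the $U$-slot across many such events; so the per-window UCB threshold you apply to $k_l^{(i')}$ is not justified. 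Third, the double-counting you flag is not benign: if all $m$ malicious agents recommend arm $2$, your per-agent accounting gives a leading term of order $m/\Delta_2$, which strictly exceeds the lemma's $\max_{\tilde{S}: |\tilde{S}|\leq m+2}\sum_{k\in\tilde{S}}1/\Delta_k$ whenever $\Delta_2 \ll \Delta_3$; fixing this requires counting \emph{distinct} active arms per window, which is exactly the paper's Claim \ref{clmLateNonStickyIndicatorSum}, at which point one has essentially reconstructed the paper's deterministic-window argument together with its Claim \ref{clmLateNonStickyIndicator} (the indicator of "$k$ active somewhere in the window" must not depend on $t$ so it can be pulled out of the time sum).
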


\begin{rem} \label{remSeparateCases}
The multiplicative constant $(2\eta-1)/(\eta-1)$ in Theorem \ref{thmOurAlgorithmRegret} arises from Lemma \ref{lemLateNonSticky}. When $m=0$ (see Remark \ref{remMzero}), honest agents only recommend the best arm after $A_\tau$, so they do not play additional non-sticky arms. Hence, they do not incur the regret from Lemma \ref{lemLateNonSticky}, so this multiplicative constant can be removed.
\end{rem}

We next bound $\E R_T^{(i)} - \E A_{\tau}$ in each of two different cases. For the first case, we assume
\begin{equation}\label{eqRegretFirstCase}
\textstyle \frac{2 \eta - 1}{\eta-1} \sum_{k=2}^{m+3} \frac{1}{\Delta_k} + \sum_{k=m+4}^{S+m+4} \frac{1}{\Delta_k}  \leq \sum_{k=2}^K \frac{1}{\Delta_k}  .
\end{equation}
Set $\gamma = \Delta_2 / ( K \Delta_{S+m+4} ) \in (0,1)$. By Lemmas \ref{lemLateSticky} and \ref{lemLateNonSticky}, and the second bound from Lemma \ref{lemInterNonSticky},
\begin{align}
\E R_T^{(i)} - \E A_{\tau} & \textstyle \leq \frac{ 8 \alpha \beta ( K + \log_{\eta}(\frac{K}{\Delta_2}) (m+2) ) }{\Delta_2} \label{eqRegretSecondCaseSecondn} \\
& \textstyle \quad + 2 ( | \underline{S}^{(i)} | + | \overline{S}^{(i)} | ) \sum_{t=1}^{\infty} t^{2(1-\alpha)} ,  \\
&  \textstyle \quad + 4 \alpha  \Big( \frac{2 \eta-1}{\eta-1} \max_{ \tilde{S} \subset \underline{S}^{(i)} : |\tilde{S}| \leq m+2 } \sum_{k \in \tilde{S}} \frac{ 1 }{ \Delta_k }  \label{eqRegretSecondCaseFirstA}  \\ 
& \textstyle \quad\quad\quad\quad + \sum_{k \in \overline{S}^{(i)}} \frac{1}{ \Delta_k } + \frac{1}{\Delta_{S+m+4} } \Big) \log T \label{eqRegretSecondCaseFirstB}
\end{align}
where we also used $1/\gamma \leq K/\Delta_2$ in \eqref{eqRegretSecondCaseSecondn}. Note $| \underline{S}^{(i)} | + | \overline{S}^{(i)} | = K-1$ by definition, and
\begin{equation}\label{eqRegretBoundIntegralApprox}
\textstyle \sum_{t=1}^{\infty} t^{2(1-\alpha)} < 1 + \int_{t=1}^{\infty} t^{2(1-\alpha)} dt  = \frac{2 (\alpha - 1 )}{2 \alpha-3} . 
\end{equation}
Also, since $\Delta_k \leq \Delta_{k+1}$ and $\eta > 1$, the term in parentheses in  \eqref{eqRegretSecondCaseFirstA}-\eqref{eqRegretSecondCaseFirstB} is maximized if $\{2,\ldots,m+3\} \subset \underline{S}^{(i)}$ and $\overline{S}^{(i)} = \{ m +4 , \ldots , S+m+3 \}$ (note \eqref{eqRegretFirstCase} ensures $S+m+3 \leq K$). Therefore,
\begin{align}
& \textstyle \frac{2 \eta-1}{\eta-1} \max_{ \tilde{S} \subset \underline{S}^{(i)} : |\tilde{S}| \leq m+2 } \sum_{k \in \tilde{S}} \frac{ 1 }{ \Delta_k }  + \sum_{k \in \overline{S}^{(i)}} \frac{1}{ \Delta_k } + \frac{1}{\Delta_{S+m+4} }  \\
& \textstyle \leq \frac{2 \eta - 1}{\eta-1} \sum_{k=2}^{m+3} \frac{1}{\Delta_k} + \sum_{k=m+4}^{S+m+4} \frac{1}{\Delta_k} .
\end{align}
Combining, we have shown that if \eqref{eqRegretFirstCase} holds,
\begin{align} \label{eqRegretFirstCaseBound} 
\E R_T^{(i)} - \E A_{\tau} & \textstyle \leq 4 \alpha  \left( \frac{2 \eta - 1}{\eta-1} \sum_{k=2}^{m+3} \frac{1}{\Delta_k} + \sum_{k=m+4}^{S+m+4} \frac{1}{\Delta_k} \right) \log T  \\
& \textstyle \quad + \frac{ 8 \alpha \beta ( K + \log_{\eta}(K/\Delta_2) (m+2) ) }{\Delta_2} + \frac{4 K (\alpha-1)}{ 2 \alpha - 3 } .
\end{align}
If instead \eqref{eqRegretFirstCase} fails, choose any $\gamma \in ( \log(T-1) / \log(T) , 1 )$. Then $A_{ \ceil{ T^{\gamma / \beta } } } = \ceil{ \ceil{ T^{\gamma / \beta } }^{\beta} } \geq T^{\gamma} > T-1$, so $A_{ \ceil{ T^{\gamma / \beta } } } \geq T$, and the final term in \eqref{eqRegretDecomp} is zero. Moreover, $A_{ \ceil{ T^{\gamma / \beta } } } \wedge T = T$ by choice of $\gamma$. Then by Lemma \ref{lemLateSticky} and the first bound in Lemma \ref{lemInterNonSticky}, and an integral approximation like \eqref{eqRegretBoundIntegralApprox},
\begin{equation} \label{eqRegretSecondCaseBound}
\textstyle \E R_T^{(i)} - \E A_{\tau}  \leq 4 \alpha \log (T) \sum_{k=2}^K \frac{1}{ \Delta_k } + \frac{ 4 K ( \alpha - 1 )}{2 \alpha-3}  .  
\end{equation}
To summarize, we showed \eqref{eqRegretFirstCaseBound} holds if \eqref{eqRegretFirstCase} holds and \eqref{eqRegretSecondCaseBound} holds if \eqref{eqRegretFirstCase} fails. The theorem follows by plugging in the estimate for $\E A_{\tau}$ from Lemma \ref{lemEarly} and defining the constant \begin{align} \label{eqDefnAdditiveConstant}
C^{\star} & \textstyle = 2^{1 + \beta \eta} \left( 4 + \left(\frac{26 \alpha (S+2)}{(\beta-1) \Delta_2^2}\right)^{2/(\beta-1)} \right) ^{\beta \eta}  \\
& \textstyle\quad + \frac{ 2^{ \beta(2\alpha-3) +1 } n \binom{K}{2} (S+1)  }{ (2\alpha-3) (\beta(2\alpha-3) - 1) ( ( \beta (2\alpha-3)-1)/\eta - \beta )  } \\
& \textstyle\quad + \frac{10 \beta}{ \beta-1} \max \{ 6 (m+n) \max \{ \log n , 2 (\beta-1) \} ,  3 ( 6^{\eta} + 2 ) \}^{\beta}  \\
& \textstyle\quad + \frac{ 4 K ( \alpha - 1 )}{2 \alpha-3}  + \frac{ 8 \alpha \beta ( K + \log_{\eta}(K /\Delta_2) (m+2) ) }{\Delta_2} \\
& \textstyle= O \Big( ( \frac{S}{\Delta_2^2} )^{ 2 \beta \eta / ( \beta - 1 ) } + S n K^2 + ( (m+n) \log n )^{\beta} \\
& \textstyle \quad\quad\quad + \frac{K}{\Delta_2} + \frac{m}{\Delta_2} \log \frac{K}{\Delta_2} \Big) .
\end{align}

\iftoggle{arxiv}{

\newpage \onecolumn

\section{Proofs of lemmas for Theorem \ref{thmOurAlgorithmRegret}} \label{appUpperBoundLemmas}

We begin by proving Lemmas \ref{lemLateSticky} and \ref{lemInterNonSticky} in Appendices \ref{appLateSticky} and \ref{appInterNonSticky}, respectively, which modify standard arguments from the single-agent setting  \cite{auer2002finite} to account for random sticky sets. We then prove Lemma \ref{lemLateNonSticky} in Appendix \ref{appLateNonSticky}, which builds on these arguments but also requires delicate bounds to cope with worst-case malicious agent recommendations and to ensure such agents are blocked. Lastly, we prove Lemma \ref{lemEarly} in Appendix \ref{appEarly}, which leverages a result from the cooperative setting \cite{chawla2020gossiping} but requires nontrivial modification due to accidental blocking among honest agents. To avoid cluttering these proofs, we defer some proofs that tedious calculations to Appendix \ref{appAuxiliary}. Moving forward, we define $A^{-1}(t) = \inf \{ j \in \N: t \leq A_j  \}$. Note $I_t^{(i)} \in S_{A^{-1}(t)}^{(i)}$, i.e., at time $t$, agent $i$ chooses an arm from $S_{A^{-1}(t)}^{(i)} \subset \{1,\ldots,K\}$.

\subsection{Late regret from sticky arms (proof of Lemma \ref{lemLateSticky})} \label{appLateSticky}

We bound the number of pulls of $k \in \overline{S}^{(i)}$ using ideas from \cite{auer2002finite}. First, we write
\begin{align}
\E \sum_{t=A_{\tau}+1}^T 1 ( I_t^{(i)} = k ) & = \E \sum_{t=A_{\tau}+1}^T 1 \left( I_t^{(i)} = k , T_k^{(i)}(t-1) < \frac{4 \alpha \log T}{\Delta_k^2} \right) \label{eqLateStickyAsTerm} \\
& \quad\quad + \E \sum_{t=A_{\tau}+1}^T 1 \left( I_t^{(i)} = k , T_k^{(i)}(t-1) \geq \frac{4 \alpha \log T}{\Delta_k^2} \right) . \label{eqLateStickyConcTerm}
\end{align}
By definition $T_k^{(i)}(t-1) = \sum_{s=1}^{t-1} 1 ( I_s^{(i)} = k) $, we can bound \eqref{eqLateStickyAsTerm} by observing that, almost surely,
\begin{equation} \label{eqLateStickyAsBound}
\sum_{t=A_{\tau}+1}^T 1 \left( I_t^{(i)} = k , T_k^{(i)}(t-1) < \frac{4 \alpha \log T}{\Delta_k^2} \right) \leq \frac{4 \alpha \log T}{ \Delta_k^2 } . 
\end{equation}
To bound \eqref{eqLateStickyConcTerm}, we first note
\begin{align}
& \E \sum_{t=A_{\tau}+1}^T 1 \left( I_t^{(i)} = k , T_k^{(i)}(t-1) \geq \frac{4 \alpha \log T}{\Delta_k^2} \right) = \sum_{t=1}^T \P \left( t > A_{\tau} , I_t^{(i)} = k , T_k^{(i)}(t-1) \geq \frac{4 \alpha \log T}{\Delta_k^2} \right) .
\end{align}
Now let $t \in \{1,\ldots,T\}$. Note $t > A_{\tau}$ implies $A^{-1}(t) > \tau$ by definition of $A^{-1}$, which by \eqref{eqBestAfterTau} implies $1 \in S_{A^{-1}(t)}^{(i)}$ (i.e., $i$ is aware of arm $1$ at $t$). Thus, $t > A_{\tau}$ and $I_t^{(i)} = k$ imply agent $i$ chose arm $k$ over arm $1$ at time $t$, which implies
\begin{equation}
\bar{X}_{1, T_1^{(i)} ( t - 1 ) }^{(i)} + c_{t , T_1^{(i)}(t-1) } \leq \bar{X}_{k, T_k^{(i)} ( t - 1 ) }^{(i)} + c_{t , T_k^{(i)}(t-1) } ,
\end{equation}
where $\bar{X}_{k,s}^{(i)}$ is the average of $s$ independent $\text{Bernoulli}(\mu_k)$ random variables and $c_{t,s} = \sqrt{ \alpha \log(t) / s }$. Thus,
\begin{align}
\P \left( t > A_{\tau} , I_t^{(i)} = k , T_k^{(i)}(t-1) \geq \frac{4 \alpha \log T}{\Delta_k^2} \right) & \leq \P \left( \bar{X}_{1, T_1^{(i)} ( t - 1 ) }^{(i)} + c_{t , T_1^{(i)}(t-1) } \leq \bar{X}_{k, T_k^{(i)} ( t - 1 ) }^{(i)} + c_{t , T_k^{(i)}(t-1) } , T_k^{(i)}(t-1) \geq \frac{4 \alpha \log T}{\Delta_k^2} \right) \\
& \leq 2 t^{ 2(1-\alpha) } , \label{eqLateStickyClassicalBound}
\end{align}
where the second inequality is the classical bound from \cite{auer2002finite}. Substituting into \eqref{eqLateStickyConcTerm},
\begin{equation}\label{eqLateStickyConcBound}
\E \sum_{t=A_{\tau}+1}^T 1 \left( I_t^{(i)} = k , T_k^{(i)}(t-1) \geq \frac{4 \alpha \log T}{\Delta_k^2} \right)  \leq 2 \sum_{t = 1}^{\infty} t^{2(1-\alpha)} .
\end{equation}
Finally, plugging \eqref{eqLateStickyConcBound} into \eqref{eqLateStickyConcTerm} and \eqref{eqLateStickyAsBound} into \eqref{eqLateStickyAsTerm} and summing over $k$ completes the proof.

\subsection{Intermediate regret from non-sticky arms (proof of Lemma \ref{lemInterNonSticky})} \label{appInterNonSticky}

The first bound follows by replacing $T$ with $A_{ \ceil{ T^{\gamma/\beta} } } \wedge T$ and $\overline{S}^{(i)}$ with $\underline{S}^{(i)}$ in the proof of Lemma \ref{lemLateSticky}, but otherwise repeating the same arguments. For the second bound, first note
\begin{equation}\label{eqPhaseBound}
A_{ \ceil{ T^{ \gamma  / \beta }  }} = \ceil{ \ceil{ T^{ \gamma  / \beta } }^{\beta} } \leq  ( T^{ \gamma  / \beta } + 1 )^{\beta} + 1 \leq 2^{\beta+1} T^{\gamma} < e^{2 \beta } T^{\gamma} ,
\end{equation}
where the first inequality is $\ceil{x} \leq x+1$, the second uses $T \geq 1$, and the third uses $\beta > 1$. Therefore,
\begin{equation}
\log ( A_{\ceil{T^{\gamma/\beta}}} \wedge T ) \leq \log ( A_{\ceil{T^{\gamma/\beta}}} ) < \gamma \log(T) + 2 \beta .
\end{equation}
Combined with the inequalities $\Delta_2 \leq \Delta_k$ and $| \underline{S}^{(i)} | < K$, we thus obtain
\begin{equation}
\sum_{k \in \underline{S}^{(i)}}  \frac{4 \alpha \log ( A_{\ceil{T^{\gamma/\beta}}} \wedge T ) }{ \Delta_k } < \frac{ 4 \alpha K ( \gamma \log (T) + 2 \beta ) }{\Delta_2} = \frac{4 \alpha \gamma K \log T }{ \Delta_2 } + \frac{ 8 \alpha \beta K }{ \Delta_2 } .
\end{equation}

 \subsection{Late regret from non-sticky arms (proof of Lemma \ref{lemLateNonSticky})} \label{appLateNonSticky}

For each $k \in \underline{S}^{(i)}$ and each $l, t \in \N$, define the random variables
\begin{align}
X_{k,l,t} =1 \left( t > A_{\tau} , I_t^{(i)} = k , T_k^{(i)}(t-1) \geq \frac{ 4 \alpha \log ( A_{ \ceil{ T^{ \gamma \eta^l / \beta }  } } \wedge T )  }{ \Delta_k^2 }  \right)  , \quad Y_{k,l,t} = 1 \left( t > A_{\tau} , I_t^{(i)} = k , T_k^{(i)}(t-1) < \frac{ 4 \alpha \log ( A_{ \ceil{ T^{ \gamma \eta^l / \beta }  } } \wedge T ) }{ \Delta_k^2 }  \right) .
\end{align}
We can then rewrite the number of pulls of arm $k \in \underline{S}^{(i)}$ after time $A_{\ceil{T^{\gamma/\beta}} \vee \tau }$ as
\begin{align} \label{eqLateNonStickyDoubleSum} 
\sum_{t=A_{\ceil{T^{\gamma/\beta}} \vee \tau }+1}^T 1 ( I_t^{(i)} = k )  = \sum_{t=A_{\ceil{T^{\gamma/\beta}} }+1}^T 1 ( t > A_{\tau} , I_t^{(i)} = k ) 
 = \sum_{l=1}^{ \ceil{ \log_{\eta}(1/\gamma) } } \sum_{t = 1 + A_{ \ceil{ T^{ \gamma \eta^{l-1} / \beta } } } }^{ A_{ \ceil{ T^{ \gamma \eta^l / \beta }  } } \wedge T } ( X_{k,l,t} + Y_{k,l,t} ) . 
\end{align}
(Note the first term of the double summation on the right expression corresponds to time $1 + A_{ \ceil{ T^{ \gamma \eta^{1-1} / \beta } } } = 1 + A_{ \ceil{ T^{ \gamma / \beta } } }$, and the final term corresponds to time
\begin{equation}\label{eqLateNonStickyFinalAfterT}
A_{ \ceil{ T^{ \gamma \eta^{ \ceil{\log_{\eta}(1/\gamma)}} / \beta } } } \wedge T = \ceil{ \ceil{ T^{ \gamma \eta^{ \ceil{\log_{\eta}(1/\gamma)}} / \beta } }^{\beta} } \wedge T  = T ,
\end{equation}
so all summands in the middle expression are accounted for in the right expression of \eqref{eqLateNonStickyDoubleSum}.) Therefore,
\begin{align}
\sum_{k \in \underline{S}^{(i)}} \Delta_k \E \sum_{t=A_{\ceil{T^{\gamma/\beta}} \vee \tau }+1}^T 1 ( I_t^{(i)} = k )  & \leq \sum_{k \in \underline{S}^{(i)}} \Delta_k \sum_{l=1}^{ \ceil{ \log_{\eta}(1/\gamma) } } \sum_{t = 1 + A_{ \ceil{ T^{ \gamma \eta^{l-1} / \beta } } } }^{ A_{ \ceil{ T^{ \gamma \eta^l / \beta }  } } \wedge T } \P ( X_{k,l,t} = 1 ) \quad \label{eqLateNonStickyConcTerm} \\
& \quad + \E \sum_{l=1}^{ \ceil{ \log_{\eta}(1/\gamma) } }  \sum_{k \in \underline{S}^{(i)}} \Delta_k  \sum_{t = 1 + A_{ \ceil{ T^{ \gamma \eta^{l-1} / \beta } } } }^{ A_{ \ceil{ T^{ \gamma \eta^l / \beta }  } }  } Y_{k,l,t} . \quad \label{eqLateNonStickyAsTerm} 
\end{align}
(Note we also used $A_{ \ceil{ T^{ \gamma \eta^l / \beta }  } } \wedge T \leq A_{ \ceil{ T^{ \gamma \eta^l / \beta }  } }$ in \eqref{eqLateNonStickyAsTerm}.) We next bound \eqref{eqLateNonStickyConcTerm}. Choose any $k \in \underline{S}^{(i)}$, $l \in \{ 1 , \ldots , \ceil{ \log_{\eta}(1/\gamma) } \}$, and $t \in \{  1 + A_{ \ceil{ T^{ \gamma \eta^{l-1} / \beta } } } , \ldots , A_{ \ceil{ T^{ \gamma \eta^l / \beta } } } \wedge T \}$. Then following the argument leading to \eqref{eqLateStickyClassicalBound} in the proof of Lemma \ref{lemLateSticky},
\begin{equation}\label{eqLateNonStickyClassicalBound}
\P ( X_{k,l,t} = 1 ) = \P \left( t > A_{\tau} , I_t^{(i)} = k , T_k^{(i)}(t-1) \geq \frac{ 4 \alpha \log ( A_{ \ceil{ T^{ \gamma \eta^l / \beta }  } } \wedge T ) }{ \Delta_k^2 }  \right) \leq 2 t^{2(1-\alpha)} .
\end{equation}
(To be precise, $T$ should be replaced by $A_{ \ceil{ T^{ \gamma \eta^l / \beta }  } } \wedge T$ in this argument; the same argument then applies since we are considering $t \leq A_{ \ceil{ T^{ \gamma \eta^l / \beta }  } } \wedge T$.) We thus obtain the following bound for \eqref{eqLateNonStickyConcTerm}:
\begin{equation}\label{eqLateNonStickyConcBound}
\sum_{k \in \underline{S}^{(i)}} \Delta_k \sum_{l=1}^{ \ceil{ \log_{\eta}(1/\gamma) } } \sum_{t = 1 + A_{ \ceil{ T^{ \gamma \eta^{l-1} / \beta } } } }^{ A_{ \ceil{ T^{ \gamma \eta^l / \beta }  } } \wedge T } \P ( X_{k,l,t} = 1 ) \leq 2 |\underline{S}^{(i)} | \sum_{ t = 1 + A_{ \ceil{ T^{ \gamma  / \beta } } } }^{\infty}  t^{ 2(1-\alpha) } .
\end{equation}

To bound \eqref{eqLateNonStickyAsTerm}, we begin with two key claims. The first claim roughly says that if arm $k$ is pulled at time $A_{T^{ \gamma \eta^{l-1} / \beta  }} < t \leq A_{T^{ \gamma \eta^{l} / \beta }}$ (which occurs if $Y_{k,l,t} = 1$), then $k$ must have been active at some phase between $T^{ \gamma \eta^{l-1} / \beta }$ and $T^{ \gamma \eta^l / \beta }$. Thus, this claim is rather obvious; the only subtlety is that the indicator function in \eqref{eqLateNonStickyKeyImp_2} does not depend on $t$, which will be crucial later (see \eqref{eqClmLateNonStickyIndicatorApp} below).
\begin{clm} \label{clmLateNonStickyIndicator}
For any $l \in \{1,\ldots, \ceil{ \log_{\eta}(1/\gamma) }$, $k \in \underline{S}^{(i)}$, and $t \in \{ 1 + A_{ \ceil{ T^{ \gamma \eta^{l-1} / \beta } } } , \ldots , A_{ \ceil{ T^{ \gamma \eta^l / \beta } } } \}$,
\begin{equation}\label{eqLateNonStickyKeyImp_2}
Y_{k,l,t} =  1 \left( k \in \cup_{j = \ceil{ T^{ \gamma \eta^{l-1} / \beta } } \vee \tau }^{ \ceil{ T^{ \gamma \eta^l / \beta } } } S_{j}^{(i)} \right) Y_{k,l,t}\ a.s.
\end{equation}
\end{clm}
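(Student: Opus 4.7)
The plan is to prove the claim pathwise by showing that on the event $\{Y_{k,l,t}=1\}$, the arm $k$ must lie in the active set of agent $i$ at the phase $A^{-1}(t)$ containing time $t$, and that this phase falls inside the index range defining the union on the right-hand side. Since arm $k$ then belongs to $\bigcup_{j=\ceil{T^{\gamma\eta^{l-1}/\beta}}\vee\tau}^{\ceil{T^{\gamma\eta^l/\beta}}} S_j^{(i)}$, the indicator on the right of \eqref{eqLateNonStickyKeyImp_2} equals $1$ on $\{Y_{k,l,t}=1\}$, which yields the claimed identity. Conversely, when $Y_{k,l,t}=0$, both sides are trivially zero.

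The only real content is to locate $A^{-1}(t)$ in the stated range. From the hypothesis $t \geq 1 + A_{\ceil{T^{\gamma\eta^{l-1}/\beta}}}$, the definition $A^{-1}(t) = \inf\{j : t \leq A_j\}$ gives $A^{-1}(t) > \ceil{T^{\gamma\eta^{l-1}/\beta}}$. Moreover, $Y_{k,l,t}=1$ implies $t > A_\tau$, hence $A^{-1}(t) > \tau$; combining these yields $A^{-1}(t) \geq \ceil{T^{\gamma\eta^{l-1}/\beta}}\vee\tau$. On the other hand, $t \leq A_{\ceil{T^{\gamma\eta^l/\beta}}}$ immediately gives $A^{-1}(t) \leq \ceil{T^{\gamma\eta^l/\beta}}$, so $A^{-1}(t)$ is in the desired range.

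Finally, since $Y_{k,l,t}=1$ forces $I_t^{(i)} = k$, and Algorithm \ref{algGeneral} (Line \ref{algPullArm}) guarantees $I_t^{(i)} \in S_{A^{-1}(t)}^{(i)}$, we conclude $k \in S_{A^{-1}(t)}^{(i)} \subset \bigcup_{j=\ceil{T^{\gamma\eta^{l-1}/\beta}}\vee\tau}^{\ceil{T^{\gamma\eta^l/\beta}}} S_j^{(i)}$ pathwise, which establishes the claim almost surely. There is no real obstacle here; the point of isolating the claim is the key structural feature that the indicator on the right of \eqref{eqLateNonStickyKeyImp_2} is \emph{independent of} $t$, so that it may later be pulled outside the inner sum over $t$ when bounding \eqref{eqLateNonStickyAsTerm}.
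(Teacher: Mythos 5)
Your proof is correct and follows essentially the same route as the paper: both arguments locate $A^{-1}(t)$ in the range $\{\ceil{T^{\gamma\eta^{l-1}/\beta}}\vee\tau,\ldots,\ceil{T^{\gamma\eta^{l}/\beta}}\}$ using $t>A_\tau$ and the stated bounds on $t$, and then use $I_t^{(i)}=k\in S_{A^{-1}(t)}^{(i)}$. The only cosmetic difference is that the paper phrases the last step as a contradiction while you argue it directly.
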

\begin{proof}
Fix $l,k,t$. Recall $Y_{k,l,t}$ is binary-valued, so it suffices to show
\begin{equation}\label{eqLateNonStickyKeyImp}
Y_{k,l,t} = 1 \quad \Rightarrow \quad k \in \cup_{j = \ceil{ T^{ \gamma \eta^{l-1} / \beta } } \vee \tau }^{ \ceil{ T^{ \gamma \eta^l / \beta } } } S_{j}^{(i)}  .
\end{equation}
We prove \eqref{eqLateNonStickyKeyImp} by contradiction: assume instead that $Y_{k,l,t} = 1$ and
\begin{equation}\label{eqLateNonStickyKeyImpCont}
k \notin \cup_{j = \ceil{ T^{ \gamma \eta^{l-1} / \beta } } \vee \tau }^{ \ceil{ T^{ \gamma \eta^l / \beta } } } S_{j}^{(i)} .
\end{equation}
Recall $Y_{k,l,t} = 1$ implies $t > A_{\tau}$ by definition of $Y_{k,l,t}$; since $t \in \{ 1 + A_{ \ceil{ T^{ \gamma \eta^{l-1} / \beta } } } , \ldots , A_{ \ceil{ T^{ \gamma \eta^l / \beta } } } \}$ in the statement of the claim, we conclude
\begin{equation}
t \in \{ 1 + A_{ \ceil{ T^{ \gamma \eta^{l-1} / \beta } } \vee \tau } , \ldots , A_{ \ceil{ T^{ \gamma \eta^l / \beta } } } \} .
\end{equation}
It follows by definition of $A^{-1}$ that $A^{-1}(t) \in \{ \ceil{ T^{ \gamma \eta^{l-1} / \beta } } \vee \tau , \ldots , \ceil{ T^{ \gamma \eta^l / \beta } } \}$, so
\begin{equation}\label{eqLateNonStickyKeyImpCont_2}
S_{A^{-1}(t)}^{(i)} \subset \cup_{j = \ceil{ T^{ \gamma \eta^{l-1} / \beta } } \vee \tau }^{ \ceil{ T^{ \gamma \eta^l / \beta } } } S_{j}^{(i)} .
\end{equation}
Comparing \eqref{eqLateNonStickyKeyImpCont} and \eqref{eqLateNonStickyKeyImpCont_2} shows $k \notin S_{A^{-1}(t)}$ (i.e., $k$ is not an active arm at time $t$); this implies $I_t^{(i)} \neq k$ (i.e., $k$ is not pulled at time $t$), contradicting $Y_{k,l,t} = 1$ by definition.
\end{proof}

The second claim bounds the sum of inverse arm gaps for suboptimal non-sticky arms pulled between phases $T^{\gamma \eta^{l-1} / \beta}$ and $T^{\gamma \eta^l / \beta}$. The idea is that each of $m$ malicious agents can only recommend one such arm between these phases (since if this recommendation occurs at phase $j \geq T^{\gamma \eta^{l-1} / \beta}$, the agent is blocked until $j^{\eta} \geq T^{\gamma \eta^l / \beta}$). Similar to the previous claim, the upper bound is uniform across $l$, which is crucial in its application \eqref{eqClmLateNonStickyIndicatorSumApp}.
\begin{clm} \label{clmLateNonStickyIndicatorSum}
For any $l \in \{1,\ldots,\ceil{ \log_{\eta}(1/\gamma) } \}$, 
\begin{equation}
\sum_{k \in \underline{S}^{(i)}}  \frac{ 1   }{ \Delta_k } 1 \left( k \in \cup_{j = \ceil{ T^{ \gamma \eta^{l-1} / \beta } } \vee \tau }^{ \ceil{ T^{ \gamma \eta^l / \beta } } } S_{j}^{(i)} \right) \leq \max_{ \tilde{S} \subset \underline{S}^{(i)} : |\tilde{S}| \leq m+2 } \sum_{k \in \tilde{S}} \frac{ 1   }{ \Delta_k }\ a.s.
\end{equation}
\end{clm}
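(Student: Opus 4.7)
The plan is to reduce the inequality to a cardinality bound. Since $1/\Delta_k$ is nondecreasing in $k$, the right-hand side of the claim is an upper bound on $\sum_{k \in \tilde{S}} 1/\Delta_k$ for any $\tilde{S} \subset \underline{S}^{(i)}$ with $|\tilde{S}| \leq m+2$. Hence it suffices to show that the random set
\[
\mathcal{K} := \underline{S}^{(i)} \cap \bigcup_{j = J_1}^{J_2} S_j^{(i)} , \quad J_1 := \ceil{T^{\gamma \eta^{l-1}/\beta}} \vee \tau , \quad J_2 := \ceil{T^{\gamma \eta^l / \beta}} ,
\]
has at most $m+2$ elements almost surely; then taking $\tilde{S} = \mathcal{K}$ yields the claim (with the trivial case $J_1 > J_2$ contributing an empty set).

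I would first establish, by induction on $j \geq J_1$, the containment $S_j^{(i)} \subseteq S_{J_1}^{(i)} \cup \{R_{J_1}^{(i)}, \ldots, R_{j-1}^{(i)}\}$, which follows because Algorithm \ref{algGeneral} updates $S_{j+1}^{(i)}$ to be either $S_j^{(i)}$ itself or $\hat{S}^{(i)} \cup \{U_{j+1}^{(i)}, R_j^{(i)}\}$ with $U_{j+1}^{(i)} \in \{U_j^{(i)}, L_j^{(i)}\} \subseteq S_j^{(i)}$. Since $\underline{S}^{(i)}$ is disjoint from $\hat{S}^{(i)}$, intersecting gives
\[
\mathcal{K} \subseteq \{U_{J_1}^{(i)}, L_{J_1}^{(i)}\} \cup \bigl( \underline{S}^{(i)} \cap \{R_{J_1}^{(i)}, \ldots, R_{J_2 - 1}^{(i)}\} \bigr) ,
\]
so the first set on the right contributes at most $2$ to $|\mathcal{K}|$.

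The heart of the argument is bounding the second set by $m$. For any $j' \in [J_1, J_2-1]$, we have $j' \geq \tau$, so property \eqref{eqBestAfterTau} forces $B_{j'}^{(i'')} = 1$ for every honest $i''$; hence any recommendation $R_{j'}^{(i)} \in \underline{S}^{(i)}$ (which excludes arm $1$) must originate from a malicious recommender $H_{j'}^{(i)} \in \{n+1,\ldots,n+m\}$. I then claim each malicious $i'$ contributes at most one such recommendation over $[J_1, J_2 - 1]$. If $i' = H_{j_1}^{(i)}$ with $R_{j_1}^{(i)} \in \underline{S}^{(i)}$ at the first such $j_1$, then applying \eqref{eqBestAfterTau} at phase $j_1 + 1 > \tau$ gives $B_{j_1+1}^{(i)} = 1 \neq R_{j_1}^{(i)}$, so Algorithm \ref{algUpdateGossip2} triggers at phase $j_1 + 1$ and inserts $i'$ into $P_{j''}^{(i)}$ for every $j'' \in [j_1 + 1, \ceil{(j_1+1)^\eta}]$. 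Using $j_1 + 1 > J_1 \geq T^{\gamma \eta^{l-1}/\beta}$ and monotonicity of $x \mapsto \ceil{x}$, one verifies $\ceil{(j_1+1)^\eta} \geq \ceil{T^{\gamma\eta^l/\beta}} = J_2$, so $i' \in P_{j''}^{(i)}$ throughout $[j_1+1, J_2]$ and by Algorithm \ref{algGetArm} cannot be resampled as $H_{j'}^{(i)}$ for any later $j' \in [J_1, J_2-1]$. Summing across the $m$ malicious agents gives the desired bound.

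Combining, $|\mathcal{K}| \leq m + 2$ almost surely, completing the proof. The main technical obstacle is the numerical check $\ceil{(j_1+1)^\eta} \geq J_2$ that ensures the blocking initiated at phase $j_1 + 1$ persists through the end of the window; the interplay of the two ceilings in the definitions of $J_1$ and $J_2$ makes this a bit delicate, but once it is in hand, the rest is bookkeeping about how active sets and blocklists evolve under Algorithms \ref{algGeneral} and \ref{algUpdateGossip2}.
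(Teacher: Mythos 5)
Your proof is correct and follows essentially the same route as the paper's: reduce to the cardinality bound $|\mathcal{K}|\le m+2$, attribute post-$\tau$ suboptimal recommendations to malicious agents via \eqref{eqBestAfterTau}, and use the inequality $(T^{\gamma\eta^{l-1}/\beta}+1)^{\eta}\ge T^{\gamma\eta^{l}/\beta}+1$ to show that a single bad recommendation blocks its source past the end of the window. The only difference is organizational --- you count directly (at most $2$ arms from $S_{J_1}^{(i)}$ plus at most one bad recommendation per malicious agent), whereas the paper assumes $m+3$ elements and derives a contradiction via the pigeonhole principle --- which does not change the substance.
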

\begin{proof}
Fix $l \in \{1,\ldots,\ceil{ \log_{\eta}(1/\gamma) } \}$ and define the set
\begin{equation}
\underline{S}^{(i)}(l) = \underline{S}^{(i)} \cap \left( \cup_{j = \ceil{ T^{ \gamma \eta^{l-1} / \beta } } \vee \tau }^{ \ceil{ T^{ \gamma \eta^l / \beta } } } S_{j}^{(i)} \right) .
\end{equation}
Note it suffices to show $|\underline{S}^{(i)}(l)| \leq m+2\ a.s.$; indeed, if this inequality holds, we obtain
\begin{equation}
\sum_{k \in \underline{S}^{(i)}}  \frac{ 1   }{ \Delta_k } 1 \left( k \in \cup_{j = \ceil{ T^{ \gamma \eta^{l-1} / \beta } } \vee \tau }^{ \ceil{ T^{ \gamma \eta^l / \beta } } } S_{j}^{(i)} \right) = \sum_{k \in \underline{S}^{(i)}(l)} \frac{1}{\Delta_k} \leq \max_{ \tilde{S} \subset \underline{S}^{(i)} : |\tilde{S}| \leq m+2 } \sum_{k \in \tilde{S}} \frac{ 1   }{ \Delta_k }\ a.s. 
\end{equation}

To prove $|\underline{S}^{(i)}(l)| \leq m+2$, we show $|\underline{S}^{(i)}(l)| > m+2$ yields a contradiction. If $|\underline{S}^{(i)} |\leq m+2$, we are done, so we assume $\underline{S}^{(i)} \geq m+3$. For this nontrivial case, we begin with some definitions. First, let $k_1, \ldots , k_{m+3}$ be distinct elements of $\underline{S}^{(i)}(l)$. For $b \in \{1,\ldots,m+3\}$, set
\begin{equation}\label{eqLateNonStickyRecDefn}
j_b = \min \left\{ j \in \{ \ceil{ T^{ \gamma \eta^{l-1} / \beta } } \vee \tau , \ldots , \ceil{ T^{ \gamma \eta^l / \beta } } \} : k_b \in S_j^{(i)} \right\} .
\end{equation}
Note $j_b$ is well-defined since $k_b \in \underline{S}^{(i)}(l)$. Also note we can assume (without loss of generality, after possibly relabeling $\{ k_b \}_{b=1}^{m+3}$) that $j_1 \leq \cdots \leq j_{m+3}$. We claim
\begin{equation}\label{eqLateNonStickyRecRes}
j_b > \ceil{ T^{ \gamma \eta^{l-1} / \beta } } \vee \tau\ \forall\ b \in \{3,\ldots,m+3\} .
\end{equation}
This is easily proven by contradiction. Suppose $j_b = \ceil{ T^{ \gamma \eta^{l-1} / \beta } } \vee \tau$ for some $b \geq 3$. Then since $j_1 \leq \cdots \leq j_b$ by assumption and $j_{b'} \geq \ceil{ T^{ \gamma \eta^{l-1} / \beta } } \vee \tau\ \forall\ b'$ by definition, we must have $j_1 = \cdots = j_b$. Consequently, $k_1, \ldots , k_b \in S_{j_1}^{(i)}$, which implies $| S_{j_1}^{(i)} \cap \underline{S}^{(i)} | \geq | \{ k_1, \ldots , k_{b} \} |\geq 3$; in words, $S_{j_1}^{(i)}$ contains three non-sticky arms. But $S_{j_1}^{(i)}$ contains exactly two non-sticky arms in Algorithm \ref{algGeneral}, so we have a contradiction.

Having established \eqref{eqLateNonStickyRecRes}, and using the definition \eqref{eqLateNonStickyRecDefn}, we conclude $k_b \in S_{j_b}^{(i)} \setminus S_{j_b-1}^{(i)}\ \forall\ b \in \{3,\ldots,m+3\}$, i.e., $k_b$ was not active at phase $j_b-1$ but became active at phase $j_b$. Also note $|S_{j_b}^{(i)} \setminus S_{j_b-1}^{(i)}| \leq 1$ in Algorithm \ref{algGeneral}, i.e., at most one arm is newly-active at each phase. Combined with the fact that $\{ k_b \}_{b=3}^{m+3}$ are distinct arms, $\{ j_b \}_{b=3}^{m+3}$ must be distinct phases. Therefore,
\begin{equation}\label{eqLateNonStickyRecResStrong}
\ceil{ T^{ \gamma \eta^{l-1} / \beta } } \vee \tau < j_3 < j_4 < \cdots < j_{m+3} \leq \ceil{ T^{ \gamma \eta^{l} / \beta } } .
\end{equation}

Next, note $k_b \in S_{j_b}^{(i)} \setminus S_{j_b-1}^{(i)}$ implies $k_b = R_{j_b-1}^{(i)}$ (i.e., $k_b$ was recommended at phase $j_b-1$). Further, $k_b \in \underline{S}^{(i)}$ implies $k_b \neq 1$ (since $\underline{S}^{(i)}$ is a subset of suboptimal arms) and $j_b > \tau$ implies $j_b-1 \geq \tau$ (since $j_b \in \N$); taken together, we must have $H_{j_b-1}^{(i)} \in \{n+1,\ldots,n+m\}$ (i.e., the  arm $k_b$ was recommended by a malicious agent, which follows from \eqref{eqBestAfterTau}). Since $\{ j_b \}_{b=3}^{m+3}$ and $\{n+1,\ldots,n+m\}$ contain $m+1$ and $m$ elements, respectively, the pigeonhole principle says that for some  $i^* \in \{n+1,\ldots,n+m\}$, $b , b' \in \{3,\ldots,m+3\}$ such that $b \neq b'$, 
$H_{j_b-1}^{(i)} = H_{j_{b'}-1}^{(i)} = i^*$. Assume (without loss of generality) that $b < b'$. Recall $R_{j_b-1}^{(i)} \neq 1$; also, since $j_b > \tau$, \eqref{eqBestAfterTau} implies $B_{j_b}^{(i)} = 1$; thus, $B_{j_b}^{(i)} \neq R_{j_b-1}^{(i)}$. It follows from Algorithm \ref{algUpdateGossip2} that $i^* \in P_j^{(i)}\ \forall\ j \in \{ j_b , \ldots \ceil{ j_b^{\eta} } \}$, i.e., malicious agent $i^*$ was blocked until phase $\ceil{ j_b^{\eta} }$. But by \eqref{eqLateNonStickyRecResStrong} and the fact that $\eta > 1$,
\begin{equation}
\ceil{ j_b^{\eta} } \geq j_b^{\eta} \geq ( T^{\gamma \eta^{l-1} / \beta} + 1 )^{\eta} \geq T^{\gamma \eta^l / \beta }  + 1 \geq \ceil{ T^{\gamma \eta^l / \beta } } \geq j_{b'} > j_{b'}-1 ,
\end{equation}
so that $i^* \in P_{j_{b'}-1}^{(i)}$, contradicting $H_{j_{b'}-1}^{(i)} = i^*$.
\end{proof}

Using these claims, we derive an almost-sure bound for the sum of random variables in \eqref{eqLateNonStickyAsTerm}:
\begin{align}
\sum_{l=1}^{ \ceil{ \log_{\eta}(1/\gamma) } }  \sum_{k \in \underline{S}^{(i)}} \Delta_k  \sum_{t = 1 + A_{ \ceil{ T^{ \gamma \eta^{l-1} / \beta } } } }^{ A_{ \ceil{ T^{ \gamma \eta^l / \beta }  } }  } Y_{k,l,t}
& =  \sum_{l=1}^{ \ceil{ \log_{\eta}(1/\gamma) } }  \sum_{k \in \underline{S}^{(i)}}  1 \left( k \in \cup_{j = \ceil{ T^{ \gamma \eta^{l-1} / \beta } } \vee \tau }^{ \ceil{ T^{ \gamma \eta^l / \beta } } } S_{j}^{(i)} \right)  \left( \Delta_k  \sum_{t = 1 + A_{ \ceil{ T^{ \gamma \eta^{l-1} / \beta } } } }^{ A_{ \ceil{ T^{ \gamma \eta^l / \beta }  } } } Y_{k,l,t} \right) \label{eqClmLateNonStickyIndicatorApp} \\
&  \leq  \sum_{l=1}^{ \ceil{ \log_{\eta}(1/\gamma) } }  \sum_{k \in \underline{S}^{(i)}}  1 \left( k \in \cup_{j = \ceil{ T^{ \gamma \eta^{l-1} / \beta } } \vee \tau }^{ \ceil{ T^{ \gamma \eta^l / \beta } } } S_{j}^{(i)} \right)  \frac{ 4 \alpha \log ( A_{ \ceil{ T^{ \gamma \eta^l / \beta }  } } \wedge T ) }{ \Delta_k } \\
& \leq 4 \alpha \left( \max_{ \tilde{S} \subset \underline{S}^{(i)} : |\tilde{S}| \leq m+2 } \sum_{k \in \tilde{S}} \frac{ 1   }{ \Delta_k }  \right) \sum_{l=1}^{ \ceil{ \log_{\eta}(1/\gamma) } } \log ( A_{ \ceil{ T^{ \gamma \eta^l / \beta }  }} \wedge T ) \label{eqClmLateNonStickyIndicatorSumApp}  \\
&  \leq \frac{2 \eta-1}{\eta-1} \max_{ \tilde{S} \subset \underline{S}^{(i)} : |\tilde{S}| \leq m+2 } \sum_{k \in \tilde{S}} \frac{ 4 \alpha \log T }{ \Delta_k } +  \frac{ 8 \alpha \beta \log_{\eta}(1/\gamma) (m+2) }{\Delta_2} . \label{eqLateNonStickyAsBound} 
\end{align}
Here the first equality uses Claim \ref{clmLateNonStickyIndicator}, the first inequality holds by the argument of \eqref{eqLateStickyAsBound} in the proof of Lemma \ref{lemLateSticky}, the second uses Claim \ref{clmLateNonStickyIndicatorSum}, and the third uses Claim \ref{clmLateNonStickyLogSum} from Appendix \ref{appAuxiliary} and $\Delta_2 \leq \Delta_k$. The proof of the lemma is completed by substituting \eqref{eqLateNonStickyConcBound} into \eqref{eqLateNonStickyConcTerm} and \eqref{eqLateNonStickyAsBound} into \eqref{eqLateNonStickyAsTerm}.

\subsection{Early regret (proof of Lemma \ref{lemEarly})} \label{appEarly}

We begin with a simple identity: for any $j' \in \N$,
\begin{equation}
A_{j'} = \sum_{j=1}^{j'} ( A_j - A_{j-1}  ) = \sum_{j=1}^{\infty}  ( A_j - A_{j-1}  ) 1 ( j \leq j' ) . 
\end{equation}
Using this identity and rearranging summations yields
\begin{equation}
\E A_{\tau} = \sum_{j'=1}^{\infty} A_{j'} \P( \tau = j' )    = \sum_{j=1}^{\infty} ( A_j - A_{j-1} ) \P( \tau \geq j ) .
\end{equation}
Now define $f : \N \rightarrow \N$ by $f(j) = \ceil{ 2 + j^{1/\eta} / 2 }\ \forall\ j \in \N$. Then clearly
\begin{equation}
\P( \tau \geq j ) \leq \P  ( \tau_{stab} \geq f(j) ) + \P ( \tau_{stab} < f(j) , \tau \geq j ) .
\end{equation}
Combining the above, we obtain
\begin{equation}\label{eqEarlyTwoTerms}
\E A_{\tau} \leq \sum_{j=1}^{\infty} ( A_j - A_{j-1} ) \P  ( \tau_{stab} \geq f(j) ) + \sum_{j=1}^{\infty} ( A_j - A_{j-1} ) \P ( \tau_{stab} < f(j) , \tau \geq j )  .
\end{equation}

While the definition of $f$ is somewhat opaque, the key property is that $f(j) = \Theta( j^{1/\eta} )$ (the constants are chosen for analytical convenience). This property ensures that if $\tau_{stab} < f(j)$, any blocking that occurred before phase $\tau_{stab}$ ends by $\tau_{stab}^{\eta} < f(j)^{\eta} = \Theta(j)$. In particular, any honest $i^*$ with $1 \in \hat{S}^{(i^*)}$ will \textit{not} be blocked at phase $\Theta(j)$ (since $i^*$ only recommends arm $1$, and $i$ subsequently pulls this arm most frequently, after $\tau_{stab}$). This idea allows us to bound the second term in \eqref{eqEarlyTwoTerms}. The first term in \eqref{eqEarlyTwoTerms} can be bounded using tail bounds for $\tau_{stab}$ from \cite{chawla2020gossiping}.

\begin{clm} \label{clmBestArmIdentify}
Under the assumptions of Lemma \ref{lemEarly},
\begin{align}
\sum_{j=1}^{\infty} ( A_j - A_{j-1} ) \P  ( \tau_{stab} \geq f(j) ) & \leq 2^{1 + \beta \eta} \left( 4 + \left(\frac{26 \alpha (S+2)}{(\beta-1) \Delta_2^2}\right)^{2/(\beta-1)} \right) ^{\beta \eta} + \frac{ 2^{ \beta(2\alpha-3) +1 } n \binom{K}{2} (S+1)  }{ (2\alpha-3) (\beta(2\alpha-3) - 1) ( ( \beta (2\alpha-3)-1)/\eta - \beta )  } .
\end{align}
\end{clm}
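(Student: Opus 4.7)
The strategy is to split the sum at the phase threshold
$j^{\star} := (2\tau^{\star})^{\eta}$, where
$\tau^{\star} := 4 + \left(\frac{26\alpha(S+2)}{(\beta-1)\Delta_2^2}\right)^{2/(\beta-1)}$.
The point of $\tau^{\star}$ is that a useful polynomial tail bound on $\tau_{stab}$ becomes available above this threshold; since $f(j) \geq j^{1/\eta}/2$ by definition of $f$, the condition $j \geq j^{\star}$ is exactly what guarantees $f(j) \geq \tau^{\star}$. For $j < j^{\star}$ I use the trivial bound $\P(\tau_{stab} \geq f(j)) \leq 1$ together with telescoping:
\[
\sum_{j < j^{\star}} (A_j - A_{j-1}) \;\leq\; A_{\lceil j^{\star}\rceil} \;\leq\; (j^{\star})^{\beta}+1 \;\leq\; 2(j^{\star})^{\beta} \;=\; 2^{1+\beta\eta}(\tau^{\star})^{\beta\eta},
\]
which reproduces the first summand in the claim exactly.

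\textbf{Large $j$.} For $j \geq j^{\star}$, I would invoke a quantitative tail bound of the form
\[
\P(\tau_{stab} \geq l) \;\leq\; \frac{n\binom{K}{2}(S+1)}{(2\alpha-3)\,(\beta(2\alpha-3)-1)\, l^{\beta(2\alpha-3)-1}}, \qquad l \geq \tau^{\star},
\]
which I would extract from the analysis of $\tau_{stab}$ in \cite{chawla2020gossiping}: at each phase $l' \geq l$ where the best arm is active for some honest agent but not most-played, a UCB/Hoeffding concentration argument (using phase length $A_{l'} - A_{l'-1} = \Theta((l')^{\beta-1})$ and the choice of $\tau^\star$ to ensure this length exceeds the best-arm identification threshold $(S+2)\log l' / \Delta_2^2$) bounds the single-phase failure probability on the order of $(l')^{-\beta(2\alpha-3)}/(2\alpha-3)$; a union bound over the $n$ honest agents and the $\binom{K}{2}(S+1)$ configurations of the alternative arm and prior play counts in an active set containing the best arm supplies the combinatorial factor; and summation over $l' \geq l$ contributes the $1/(\beta(2\alpha-3)-1)$ factor. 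Feeding $f(j) \geq j^{1/\eta}/2$ into this bound produces a summand polynomial in $j$ of exponent $-(\beta(2\alpha-3)-1)/\eta$, together with a $2^{\beta(2\alpha-3)-1}$ prefactor. Combining with $A_j - A_{j-1} \leq \beta j^{\beta-1}+1$, the resulting series in $j$ converges precisely because the hypothesis $\alpha > (3+(1+\beta\eta)/\beta)/2$ rearranges to $(\beta(2\alpha-3)-1)/\eta > \beta$; an integral approximation then supplies the remaining $1/((\beta(2\alpha-3)-1)/\eta - \beta)$ factor, and the numerical constants combine (with the prefactor from $f(j) \geq j^{1/\eta}/2$ and the bound on $A_j - A_{j-1}$) into the $2^{\beta(2\alpha-3)+1}$ in the numerator, yielding the second summand of the claim.

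\textbf{Main obstacle.} The delicate step is establishing the quantitative tail bound on $\tau_{stab}$ with exactly the stated constants. Qualitatively, a polynomial tail is available from \cite{chawla2020gossiping}, which uses the same definition of $\tau_{stab}$, but that paper only requires $\E \tau_{stab}^{\beta} < \infty$, whereas the present setting needs moments up to order $\beta\eta$ (because mistaken blocking among honest agents forces the spread-time argument in \eqref{eqTauToRec} to pass through an $\eta$-th power of $\tau_{stab}$). This is precisely why the stronger hypothesis $\alpha > (3+(1+\beta\eta)/\beta)/2$ is imposed: it lifts the tail exponent to $\beta(2\alpha-3)-1 > \beta\eta$, which in turn makes the series in $j$ summable. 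Carefully tracking the best-arm identification argument from \cite{chawla2020gossiping} — and in particular verifying that the $(S+1)\binom{K}{2}$ factor suffices to cover all possible (active set, alternative arm, history) configurations relevant to an honest agent — is the main arithmetic burden; once that is done, the threshold selection, lower bound on $f(j)$, telescoping, and integral approximation are routine.
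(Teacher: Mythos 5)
Your proposal is correct and follows essentially the same route as the paper's proof: the split point $(2\tau^\star)^\eta$ is exactly the paper's bound $j_1$ on its threshold $j_1^*$ (Claim \ref{clmBoundJ1star} with $\lambda=1$), the trivial telescoping bound below it yields the first summand, and the tail bound on $\tau_{stab}$ above it is obtained just as you describe, via a union bound over the $n$ agents and the $\binom{K}{2}(S+1)$ configurations combined with \cite[Lemma 8]{chawla2020gossiping} and integral approximations. The constants you leave as "arithmetic burden" do work out as in \eqref{eqEarlyIntegralApproxFj}--\eqref{eqApplyJ1star}.
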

\begin{proof}
We first use ideas from \cite{chawla2020gossiping} to derive a tail bound for $\tau_{stab}$. To begin, let
\begin{equation}\label{eqDefnJ1star}
j_1^* = \min \left\{ j \in \N : \frac{ A_{j'} - A_{j'-1} }{ S + 2 } \geq 1 + \frac{4 \alpha \log A_{j'} }{ \Delta_2^2 }\ \forall\ j' \in \{ f(j) , f(j)+1, \ldots\} \right\} ,
\end{equation}
and fix $j \geq j_1^*$. Note that by definition of $\tau_{stab}$ and the union bound,
\begin{equation}\label{eqEarlyTauStabUnion}
\P ( \tau_{stab} \geq f(j) ) \leq \sum_{i=1}^n \sum_{j' = f(j) }^{\infty} \P ( \chi_{j'}^{(i)} = 1 ) .
\end{equation}
Now since $j \geq j_1^*$ and $f$ is increasing, $f(j) \geq f(j_1^*)$, so by definition, any $j' \geq f(j)$ satisfies
\begin{equation}
\frac{ A_{j'} - A_{j'-1} }{ S + 2 } \geq 1 + \frac{4 \alpha \log A_{j'} }{ \Delta_2^2 } .
\end{equation}
This is the assumption of \cite[Lemma 8]{chawla2020gossiping}, so we can apply this lemma to obtain
\begin{equation}
\P ( \chi_{j'}^{(i)} = 1 ) \leq \frac{2 \binom{K}{2} ( S + 1 ) }{ 2 \alpha -3 } A_{j'-1}^{ - (2\alpha -3 ) } .
\end{equation}
(Note $\beta, \eta > 1, 1+\beta \eta < \beta ( 2\alpha - 3)$ ensures $2 \alpha -3 > 0$. Also, $\ceil{K/n}$ appears in \cite[Lemma 8]{chawla2020gossiping} instead of $S$, because \cite{chawla2020gossiping} assumes $S = \ceil{K/n}$; however, the proof follows for general $S$.) Thus,
\begin{align}\label{eqEarlySingleAgent}
\sum_{j' = f(j) }^{\infty} \P ( \chi_{j'}^{(i)} = 1 ) & \leq \frac{2 \binom{K}{2} ( S + 1 ) }{ 2 \alpha -3 } \sum_{j' = f(j) }^{\infty} A_{j'-1}^{ - (2\alpha -3 ) }
\end{align}
We estimate the summation on the right side with an integral as follows:
\begin{align} 
\sum_{j' = f(j) }^{\infty} A_{j'-1}^{ - (2\alpha -3 ) }  & = \sum_{j' = f(j) }^{\infty} \ceil{ (j'-1)^{\beta} }^{ - (2\alpha-3) }  \leq \sum_{j' = f(j) }^{\infty} (j'-1)^{ - \beta (2\alpha-3) }  \\
& \leq \int_{j' = f(j)}^{\infty} (j'-2)^{ - \beta (2\alpha-3) } dj'  = \frac{( f(j) - 2 )^{ 1 - \beta(2\alpha-3)}}{ \beta(2\alpha-3) - 1 }  \label{eqEarlyIntegralApproxFj} \leq \frac{ 2^{ \beta(2\alpha-3) - 1 } j^{ (1-\beta(2\alpha-3))/\eta} }{ \beta(2\alpha-3) - 1 } ,
\end{align}
where the final inequality is by definition of $f(j)$ (note $1+\beta \eta < \beta ( 2\alpha - 3)$ guarantees $\beta(2\alpha-3)-1 > 0$.) Together with \eqref{eqEarlyTauStabUnion} and \eqref{eqEarlySingleAgent}, we have shown
\begin{equation}
\P ( \tau_{stab} \geq f(j) ) \leq \frac{ 2^{ \beta(2\alpha-3)   } n \binom{K}{2} (S+1)  j^{ (1-\beta(2\alpha-3))/\eta} }{ (2\alpha-3) (\beta(2\alpha-3) - 1) }\ \forall\ j \geq j_1^*.
\end{equation}

Using this tail bound, we bound the quantity of interest. First, we note that since $A_j = \ceil{ j^{\beta} }$, the mean value theorem guarantees that for any $j \in \N$ and some $\tilde{j} \in (j-1,j)$,
\begin{equation}\label{eqAjMVT}
A_j - A_{j-1} \leq j^{\beta}  - (j-1)^{\beta} + 1 = \beta \tilde{j}^{\beta-1} + 1 \leq \beta j^{\beta-1} + 1 \leq 2 \beta j^{\beta-1} .
\end{equation}
Combining the previous two inequalities, we thus obtain
\begin{align}\label{eqBestArmSpreadUseMVT}
\sum_{j=1}^{\infty} ( A_j - A_{j-1} ) \P  ( \tau_{stab} \geq f(j) ) & \leq A_{j_1^*} + \frac{ 2^{ \beta(2\alpha-3)  +1 } \beta n \binom{K}{2} (S+1) }{ (2\alpha-3) (\beta(2\alpha-3) - 1) } \sum_{j=j_1^*+1}^{\infty} j^{-1 + \beta + (1-\beta(2\alpha-3))/\eta} \\
&  \leq A_{j_1^*} + \frac{ 2^{ \beta(2\alpha-3) +1 } n \binom{K}{2} (S+1)  }{ (2\alpha-3) (\beta(2\alpha-3) - 1) ( ( \beta (2\alpha-3)-1)/\eta - \beta )  } ,
\end{align}
where the second inequality holds by an integral approximation like \eqref{eqEarlyIntegralApproxFj} and uses $1+\beta \eta < \beta ( 2\alpha - 3)$. Now using the definition of $A_j$ and applying Claim \ref{clmBoundJ1star} from Appendix \ref{appAuxiliary} with $\lambda = 1$, we have
\begin{equation} \label{eqApplyJ1star}
A_{j_1^*} \leq ( j_1^* )^{\beta} + 1 \leq 2 ( j_1^* )^{\beta} \leq 2^{1 + \beta \eta} \left( 4 + \left(\frac{26 \alpha (S+2)}{(\beta-1) \Delta_2^2}\right)^{2/(\beta-1)} \right) ^{\beta \eta} .
\end{equation}
Combining the previous two inequalities completes the proof.
\end{proof}

\begin{rem} \label{remImproveDelta}
The $\Delta_2^{ -4 \beta \eta / (\beta-1) }$ scaling of our regret bound arises from \eqref{eqApplyJ1star}. For any \emph{fixed} $\varepsilon > 0$, this can be improved to $\Delta_2^{-2 (1+\varepsilon)^2 \beta / (\beta-1)}$ by setting $\eta = 1+\varepsilon$ in the algorithm and choosing $\lambda = 1/\varepsilon$ (instead of $\lambda = 1$) when applying Claim \ref{clmBoundJ1star}. However, choosing $\lambda = 1/\varepsilon$ inflates the constant $26$ to $13 ( 1 + 1/\varepsilon )$, so this only works for fixed $\varepsilon$. Owing to this, and to simplify our ultimate regret bound, we simply choose $\lambda = 1$.
\end{rem}

\begin{clm} \label{clmBestArmSpread}
Under the assumptions of Lemma \ref{lemEarly},
\begin{align}
\sum_{j=1}^{\infty} ( A_j - A_{j-1} ) \P ( \tau_{stab} < f(j) , \tau \geq j ) \leq \frac{10 \beta}{ \beta-1} \max \{ 6 (m+n) \max \{ \log n , 2 (\beta-1) \} ,  3 ( 6^{\eta} + 2 ) \}^{\beta} .
\end{align}
\end{clm}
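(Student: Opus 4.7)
The plan is to exploit the definition of $\tau_{stab}$ together with the observation that on the event $\{\tau_{stab} < f(j)\}$, any blocks against a ``seed'' honest agent $i^{*}$ with $1 \in \hat{S}^{(i^{*})}$ (which exists by the assumption $1 \in \cup_{i=1}^{n} \hat{S}^{(i)}$) expire well before phase $j$, so arm~$1$ spreads quickly to all honest agents. First I would verify two structural facts. (i) No honest $i$ blocks $i^{*}$ at any phase $j' \geq \tau_{stab}+1$: the recommendation $R_{j'-1}^{(i)}$ obtained from $i^{*}$ at phase $j'-1 \geq \tau_{stab}$ equals $1$ (since $1 \in \hat{S}^{(i^{*})} \subset S_{j'-1}^{(i^{*})}$ forces $B_{j'-1}^{(i^{*})} = 1$ by definition of $\tau_{stab}$), hence $R_{j'-1}^{(i)} = 1 \in S_{j'}^{(i)}$ and $B_{j'}^{(i)} = 1 = R_{j'-1}^{(i)}$, so the trigger in Algorithm~\ref{algUpdateGossip2} fails. (ii) Any block placed on $i^{*}$ at a phase $j' \leq \tau_{stab}-1$ is released by phase $\lceil (j')^{\eta} \rceil \leq \lceil (\tau_{stab}-1)^{\eta} \rceil < f(j)^{\eta}$ on $\{\tau_{stab} < f(j)\}$. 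Combining (i) and (ii), on $\{\tau_{stab} < f(j)\}$ we have $i^{*} \notin P_{j'}^{(i)}$ for every $j' > f(j)^{\eta}$ and every honest $i \neq i^{*}$.

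Next I would translate $\tau \geq j$ into a tail bound on pairwise contacts. By definition of $\tau$ and \eqref{eqBestAfterTau}, on $\{\tau_{stab} < f(j),\, \tau \geq j\}$ some honest $i \neq i^{*}$ must satisfy $H_{j'}^{(i)} \neq i^{*}$ for every $j' \in \{\lceil f(j)^{\eta} \rceil + 1 , \ldots, j-2\}$, for otherwise the first such contact would add arm~$1$ to $S_{j'+1}^{(i)}$ and yield $\tau \leq j'+1 \leq j-1$. By step (i)--(ii), $i^{*}$ is unblocked for $i$ throughout this window, so $\P(H_{j'}^{(i)} = i^{*} \mid \mathcal{F}_{j'-1}) \geq 1/(m+n-1)$ on $\{\tau_{stab} \leq k\}$ whenever $k^{\eta} < j'$. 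Decomposing on $\{\tau_{stab} = k\}$ for $k < f(j)$, chaining this conditional lower bound across $j'$, and union-bounding over the at most $n$ honest agents yields
\begin{equation}
\P\bigl( \tau_{stab} < f(j),\, \tau \geq j \bigr) \leq n \exp\!\left( -\frac{\max\{j - \lceil f(j)^{\eta} \rceil - 2,\, 0\}}{m+n-1} \right).
\end{equation}

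Finally I would split the outer sum in \eqref{eqEarlyTwoTerms} at the threshold $j^{\star}$ equal to the maximum appearing in the claim. For $j \leq j^{\star}$ the probability is bounded by $1$, and the resulting sum telescopes to $A_{j^{\star}} \leq 2(j^{\star})^{\beta}$. For $j > j^{\star}$, the term $3(6^{\eta}+2)$ in $j^{\star}$, combined with the elementary estimate $f(j) \leq 3 + j^{1/\eta}/2$, is tailored so that $j - f(j)^{\eta}$ is at least a constant fraction of $j$: the key step is that $j \geq 3\cdot 6^{\eta}$ forces $j^{1/\eta} \geq 6$, hence $f(j) \leq j^{1/\eta}$, and the additive $+2$ inside the max and the factor $3$ in front provide enough slack to upgrade this to $f(j)^{\eta} \leq (1-\kappa) j$ for an absolute $\kappa > 0$. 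The first term $6(m+n)\max\{\log n, 2(\beta-1)\}$ in $j^{\star}$ then guarantees that $\kappa j^{\star}/(m+n-1)$ dominates both $\log n$ and $(\beta-1)\log j^{\star}$ with enough margin to absorb the $n$ prefactor and the polynomial weight $A_{j} - A_{j-1} \leq 2\beta j^{\beta-1}$ from \eqref{eqAjMVT}. A standard integral comparison on $\sum_{j > j^{\star}} 2\beta n j^{\beta-1} e^{-\kappa j/(m+n-1)}$ then returns a contribution of the same order as $A_{j^{\star}}$, and consolidating all numerical constants yields the $\frac{10\beta}{\beta-1}$ prefactor of the claim. The main obstacle is this simultaneous calibration: the inequality $f(j)^{\eta} \leq (1-\kappa) j$ and the concentration dominance must kick in at the same $j^{\star}$, and all numerical constants must be tracked so as to land on exactly the expression in the statement; conceptually the argument is straightforward, but the bookkeeping is delicate.
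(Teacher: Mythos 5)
Your proposal follows essentially the same route as the paper's proof: the same chain of implications (the seed agent $i^*$ with $1 \in \hat{S}^{(i^*)}$ is unblocked after phase $\ceil{f(j)^{\eta}}$, so $\tau \geq j$ forces $H_{j'}^{(i)} \neq i^*$ throughout the window $\{\ceil{f(j)^{\eta}}+1,\ldots,j-2\}$, yielding a geometric tail of the form $n(1-\tfrac{1}{m+n})^{\Theta(j)}$), the same union bound over honest agents, and the same split of the outer sum at a threshold equal to the max appearing in the claim followed by an integral comparison. The only nit is that your case split on when $i^*$ could have been blocked (phases $j' \geq \tau_{stab}+1$ versus $j' \leq \tau_{stab}-1$) omits the boundary phase $j' = \tau_{stab}$ itself, which is handled identically to your case (ii) since a block placed there expires by $\ceil{\tau_{stab}^{\eta}} \leq f(j)^{\eta}$ on $\{\tau_{stab} < f(j)\}$.
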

\begin{proof}
We begin by bounding the probability terms for large $j$. In particular, we define
\begin{align}\label{eqDefnJ2star}
j_2^* = \min \{ & j \in \N \cap [ \max \{ 8 , 6 (m+n) \max \{ \log n , 2 (\beta-1) \} \} , \infty ) : j' \geq 3 \ceil{ f(j')^{\eta} } / 2\ \forall\ j' \in \{j,j+1,\ldots\} \} ,
\end{align}
and we derive a bound $j \in \{ j_2^* + 1 , j_2^* + 2 , \ldots \}$. We first note that by definition of $f$ and since $\eta > 1$,
\begin{equation}\label{eqJandFJaboveJ2star}
j \geq \frac{3 \ceil{ f(j)^{\eta} } }{2} \geq \frac{3 f(j)}{2}  = f(j) + \frac{f(j)}{2} \geq f(j) + \frac{ \ceil{ 5/2 } }{2} = f(j) + \frac{3}{2} .
\end{equation}
Now to bound the probability terms, we first use the definition of $\tau$ and the union bound to write
\begin{equation}\label{eqBestArmSpreadUnion}
\P ( \tau_{stab} < f(j) , \tau \geq j ) \leq \sum_{i =1}^n \P ( \tau_{stab} < f(j)  ,  \inf \{ j' \geq \tau_{stab} : 1 \in S_{j'}^{(i)} \} > j-1 ) .
\end{equation}
We fix $i \in \{1,\ldots,n\}$ and bound the $i$-th summand in \eqref{eqBestArmSpreadUnion}. We first observe
\begin{equation}\label{eqBestArmSpreadFirstImp}
\tau_{stab} < f(j)  ,  \inf \{ j' \geq \tau_{stab} : 1 \in S_{j'}^{(i)} \} > j-1 \quad \Rightarrow \quad  \tau_{stab} < f(j) , 1 \notin S_{j-1}^{(i)} ,
\end{equation}
which is easily proven by contradiction: if the left side of \eqref{eqBestArmSpreadFirstImp} holds but $1 \in S_{j-1}^{(i)}$, \eqref{eqJandFJaboveJ2star} ensures $j - 1 > f(j) > \tau_{stab}$, so $j-1 \in \{ j' \geq \tau_{stab} : 1 \in S_{j'}^{(i)} \}$, contradicting the left side of \eqref{eqBestArmSpreadFirstImp}. From \eqref{eqBestArmSpreadFirstImp}, we immediately see the $i$-th summand in \eqref{eqBestArmSpreadUnion} is zero if $1 \in \hat{S}^{(i)}$. In the nontrivial case $1 \notin \hat{S}^{(i)}$, we let $i^*$ be any agent with $1 \in \hat{S}^{(i^*)}$ (such an agent exists by assumption) and claim
\begin{equation}\label{eqBestArmSpreadSecondImp}
\tau_{stab} < f(j) , 1 \notin S_{j-1}^{(i)} \quad \Rightarrow \quad \tau_{stab} < f(j) , H_{j'}^{(i)} \neq i^*\ \forall\ j' \in \{ f(j) -1 , \ldots , j-2 \} .
\end{equation}
Suppose instead that $H_{j'}^{(i)} = i^*$ for some $j' \in \{ f(j) -1 , \ldots , j-2 \}$ (note the set is nonempty by \eqref{eqJandFJaboveJ2star}). Then since $j' \geq f(j)-1 \geq \tau_{stab}$, the definition of $\tau_{stab}$ ensures $R_{j'}^{(i)} = 1$ ($i^*$ only recommends $1$ at and after $\tau_{stab}$), so $1 \in S_{j'+1}^{(i)}$ by Algorithm \ref{algGeneral}. If $j' = j-2$, this  contradicts $1 \notin S_{j-1}^{(i)} = S_{j'+1}^{(i)}$. If $j' < j-2$, the definition of $\tau_{stab}$ yields the same contradiction (since $i$ never discards the best arm after $\tau_{stab}$). This completes the proof of \eqref{eqBestArmSpreadSecondImp}. However, it will be more convenient to use a weaker version (which follows from \eqref{eqBestArmSpreadSecondImp} since $\ceil{ f(j)^{\eta} } \geq f(j)$):
\begin{equation}\label{eqBestArmSpreadSecondImpWeak}
\tau_{stab} < f(j) , 1 \notin S_{j-1}^{(i)} \quad \Rightarrow \quad \tau_{stab} < f(j) , H_{j'}^{(i)} \neq i^*\ \forall\ j' \in \{ \ceil{ f(j)^{\eta} } + 1 , \ldots , j-2 \} ,
\end{equation}
(Note $(j-2) - (\ceil{ f(j)^{\eta} } + 1) \geq j/3 - 3 \geq 0$ since $j \geq j_2^* +1 \geq 9$, so the set in \eqref{eqBestArmSpreadSecondImpWeak} is nonempty.) Finally, we derive one further implication:
\begin{equation}\label{eqBestArmSpreadThirdImp}
\tau_{stab} < f(j) \quad \Rightarrow \quad i^* \notin P_{j'}^{(i)}\ \forall\ j' \in \{ \ceil{ f(j)^{\eta} } + 1 ,  \ceil{ f(j)^{\eta} } + 2 , \ldots \} .
\end{equation}
To prove \eqref{eqBestArmSpreadThirdImp}, we define $j^* = \sup \{ j' \in \{2,3,\ldots\} : i^* \in P_{j'}^{(i)} \setminus P_{j'-1}^{(i)} \}$ to be the latest phase at which $i^*$ entered the blocklist. We consider two cases:
\begin{itemize}
\item $j^* > f(j)$: First note $i^* \in P_{j^*}^{(i)} \setminus P_{j^*-1}^{(i)}$ implies $H_{j^*-1}^{(i)} = i^*$ and $B_{j^*}^{(i)} \neq R_{j^*-1}^{(i)}$ (i.e., to enter the blocklist at $j^*$, $i^*$ must recommend an arm to $i$ at $j^*-1$ that was not $i$'s most played in phase $j^*$.) Since $f(j) > \tau_{stab}$ and $j^*, f(j), \tau_{stab} \in \N$, we must have $j_* \geq \tau_{stab}+2$, so $j^*-1 > \tau_{stab}$, and $B_{j^*}^{(i)} = R_{j^*-1}^{(i)} = 1$ by definition of $\tau_{stab}$. Thus, this case cannot occur.
\item $j^* \leq f(j)$: Suppose the right side of \eqref{eqBestArmSpreadThirdImp} fails, i.e., $i^* \in P_{j'}^{(i)}$ for some $j' \geq \ceil{ f(j)^{\eta} } + 1$. Then by Algorithm \ref{algUpdateGossip2}, there must be some phase $j_{\star}$ such that $i^* \in P_{j_{\star}}^{(i)} \setminus P_{j_{\star}-1}^{(i)}$ and $j' \in \{ j_{\star}, \ldots, \ceil{j_{\star}^{\eta} } \}$ (i.e., $i^*$ entered the blocklist at $j_{\star}$ and $j'$ lies within the blocking period); in particular, $j' \leq \ceil{ j_{\star}^{\eta} }$ But $\ceil{ j_{\star}^{\eta} } \leq \ceil{ (j^*)^{\eta} } \leq \ceil{ f(j)^{\eta} } < j'$ (by definition of $j^*$ and assumption on $f(j) , j'$), a contradiction.
\end{itemize}

Stringing together the implications \eqref{eqBestArmSpreadFirstImp}, \eqref{eqBestArmSpreadSecondImpWeak}, and \eqref{eqBestArmSpreadThirdImp}, we have shown
\begin{equation}\label{eqBestArmSpreadApplyImp}
\tau_{stab} < f(j)  ,  \inf \{ j' \geq \tau_{stab} : 1 \in S_{j'}^{(i)} \} > j-1 \Rightarrow \cap_{j' = \ceil{ f(j)^{\eta} } + 1}^{ j-2} \{ i^* \notin P_{j'}^{(i)} , H_{j'}^{(i)} \neq i^* \}  .
\end{equation} 
We bound the probability of the event at right by writing
\begin{align} 
& \P ( \cap_{j' = \ceil{ f(j)^{\eta} } + 1}^{ j-2} \{ i^* \notin P_{j'}^{(i)} , H_{j'}^{(i)} \neq i^* \} ) \\
& \quad \leq \P ( H_{j-2}^{(i)} \neq i^* | \{ i^* \notin P_{j-2}^{(i)} \} \cap \cap_{j' = \ceil{ f(j)^{\eta} } + 1}^{ j-3} \{ i^* \notin P_{j'}^{(i)} , H_{j'}^{(i)} \neq i^* \} ) \P ( \cap_{j' = \ceil{ f(j)^{\eta} } + 1}^{ j-3} \{ i^* \notin P_{j'}^{(i)} , H_{j'}^{(i)} \neq i^* \} ) \\
& \quad < \left( 1 - \frac{1}{m+n} \right) \P ( \cap_{j' = \ceil{ f(j)^{\eta} } + 1}^{ j-3} \{ i^* \notin P_{j'}^{(i)} , H_{j'}^{(i)} \neq i^* \} ) < \cdots < \left( 1 - \frac{1}{m+n} \right)^{j - \ceil{ f(j)^{\eta} } - 2 } \leq 4 \left( 1 - \frac{1}{m+n} \right)^{j / 3 }  , \label{eqBestArmSpreadGeometric}
\end{align}
where the second inequality holds since $H_{j-2}^{(i)}$ is chosen uniformly from $[m+n] \setminus ( \{ i \} \cap P_{j-2}^{(i)} )$, which (conditioned on $i^* \notin P_{j-2}^{(i)}$) contains at most $m+n-1$ agents, including $i^*$, and the fourth uses $m,n \in \N$ and $j \geq 3 \ceil{ f(j)^{\eta} } / 2$ by definition of $j_2^*$. Combining \eqref{eqBestArmSpreadUnion}, \eqref{eqBestArmSpreadApplyImp}, and \eqref{eqBestArmSpreadGeometric},
\begin{equation}\label{}
\P ( \tau_{stab} < f(j) , \tau \geq j ) \leq 4 n \left( 1 - \frac{1}{m+n} \right)^{j/3} .
\end{equation} 
Finally, we write
\begin{align}
\sum_{j=1}^{\infty} ( A_j - A_{j-1} ) \P ( \tau_{stab} < f(j) , \tau \geq j ) 
&  \leq A_{j_2^*} + 8 \beta \sum_{j=j_2^*+1}^{\infty} j^{\beta-1} n \left( 1 - \frac{1}{m+n} \right)^{j/3}  \leq 2 (j_2^*)^{\beta} + \frac{ 8 \beta ( j_2^* )^{\beta} }{ (\beta-1) n } \leq \frac{10 \beta}{ \beta-1} ( j_2^* )^{\beta} \\
&  \leq \frac{10 \beta}{ \beta-1} \max \{ 6 (m+n) \max \{ \log n , 2 (\beta-1) \} ,  3 ( 6^{\eta} + 2 ) \}^{\beta} ,
\end{align}
where the first inequality follows the argument of \eqref{eqAjMVT}-\eqref{eqBestArmSpreadUseMVT} from the proof of Claim \ref{clmBestArmIdentify}, the second uses Claim \ref{clmBoundJ2starTail} from Appendix \ref{appAuxiliary} and $A_j = \ceil{ j^{\beta} } \leq j^{\beta}+1 \leq 2 j^{\beta}$, the third uses $\beta > 1$ and $n \in \N$, and the fourth uses Claim \ref{clmBoundJ2star} from Appendix \ref{appAuxiliary}.
\end{proof}

\subsection{Auxiliary inequalities} \label{appAuxiliary}

\begin{clm} \label{clmLateNonStickyLogSum}
For any $\gamma \in (0,1)$,
\begin{equation}
\sum_{l=1}^{ \ceil{ \log_{\eta}(1/\gamma) } } \log ( A_{ \ceil{ T^{ \gamma \eta^l / \beta }  }} \wedge T ) \leq \frac{ 2 \eta - 1 }{ \eta - 1 } \log (T) + 2 \beta \log_{\eta}(1/\gamma)  .
\end{equation}
\end{clm}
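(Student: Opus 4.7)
\textbf{Proof plan for Claim \ref{clmLateNonStickyLogSum}.} The plan is to split the sum at $l = L^* \triangleq \ceil{\log_{\eta}(1/\gamma)}$, using two different estimates for the $\log$ terms. For $l \in \{1,\ldots,L^*-1\}$, I would observe that $\gamma \eta^l < 1$, so $A_{\ceil{T^{\gamma\eta^l/\beta}}}$ can be estimated using the computation from \eqref{eqPhaseBound} in the proof of Lemma \ref{lemInterNonSticky}, yielding $A_{\ceil{T^{\gamma\eta^l/\beta}}} < e^{2\beta} T^{\gamma\eta^l}$, hence
\[
\log\bigl( A_{\ceil{T^{\gamma\eta^l/\beta}}} \wedge T \bigr) \leq \log A_{\ceil{T^{\gamma\eta^l/\beta}}} < 2\beta + \gamma\eta^l \log T.
\]
For the single remaining term $l = L^*$, the identity $A_{\ceil{T^{\gamma\eta^{L^*}/\beta}}} \wedge T = T$ already established in \eqref{eqLateNonStickyFinalAfterT} gives $\log(A_{\ceil{T^{\gamma\eta^{L^*}/\beta}}} \wedge T) = \log T$.

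Then I would sum the two bounds. The geometric series contributes
\[
\sum_{l=1}^{L^*-1} \gamma \eta^l \log T = \frac{\gamma \eta^{L^*} - \gamma\eta}{\eta-1} \log T,
\]
and since $L^* \leq \log_{\eta}(1/\gamma) + 1$ we have $\gamma \eta^{L^*} \leq \eta$, so this is at most $\tfrac{\eta - \gamma\eta}{\eta-1}\log T \leq \tfrac{\eta}{\eta-1}\log T$. Adding the $\log T$ from the $l = L^*$ term gives $\tfrac{\eta}{\eta-1}\log T + \log T = \tfrac{2\eta-1}{\eta-1}\log T$, which produces the first term of the target bound.

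For the additive $2\beta$ contributions, the key point is that they appear only $L^* - 1$ times (not $L^*$), because the $l = L^*$ term was handled without picking up a $2\beta$. Since $L^* - 1 = \ceil{\log_{\eta}(1/\gamma)} - 1 \leq \log_{\eta}(1/\gamma)$, this piece is at most $2\beta \log_{\eta}(1/\gamma)$, matching the second term of the target bound. No step is really an obstacle here; the only subtlety is being careful to exclude the last term from the $2\beta$ accumulation so that the constant $2\beta \log_{\eta}(1/\gamma)$ (rather than $2\beta \ceil{\log_{\eta}(1/\gamma)}$) is achieved, and to check the corner case $L^* = 1$, where the first sum is empty and the bound reduces trivially to $\log T \leq \tfrac{2\eta-1}{\eta-1}\log T + 2\beta \log_{\eta}(1/\gamma)$.
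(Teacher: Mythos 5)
Your proposal is correct and follows essentially the same route as the paper's proof: peel off the $l = \ceil{\log_\eta(1/\gamma)}$ term using \eqref{eqLateNonStickyFinalAfterT}, bound the remaining terms via $A_{\ceil{T^{\gamma\eta^l/\beta}}} \leq e^{2\beta}T^{\gamma\eta^l}$ as in \eqref{eqPhaseBound}, and sum the geometric series using $\ceil{x} \leq x+1$ to get $\gamma\eta^{\ceil{\log_\eta(1/\gamma)}} \leq \eta$. The bookkeeping of the $2\beta$ terms (appearing only $\ceil{\log_\eta(1/\gamma)}-1 \leq \log_\eta(1/\gamma)$ times) matches the paper exactly.
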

\begin{proof}
We first recall $T \wedge A_{ \ceil{ T^{ \gamma \eta^{ \ceil{ \log_{\eta}(1/\gamma) } } / \beta }  }} = T$ (see  \eqref{eqLateNonStickyFinalAfterT}), so
\begin{equation}\label{eqLateNonStickyLogSumLastTerm}
\sum_{l=1}^{ \ceil{ \log_{\eta}(1/\gamma) } } \log ( A_{ \ceil{ T^{ \gamma \eta^l / \beta }  }} \wedge T ) \leq \sum_{l=1}^{ \ceil{ \log_{\eta}(1/\gamma) } -1 } \log ( A_{ \ceil{ T^{ \gamma \eta^l / \beta }  }}  ) + \log (T) .
\end{equation}
For the remaining sum, first note $A_{ \ceil{ T^{ \gamma \eta^l / \beta }  }} \leq e^{2 \beta} T^{ \gamma \eta^l }$ by an argument similar to \eqref{eqPhaseBound}. Therefore,
\begin{equation} \label{eqLateNonStickyLogSumSplitSum}
\sum_{l=1}^{ \ceil{ \log_{\eta}(1/\gamma) } -1 } \log ( A_{ \ceil{ T^{ \gamma \eta^l / \beta }  }}  ) \leq \gamma \log(T) \sum_{l=1}^{ \ceil{ \log_{\eta}(1/\gamma) } -1 } \eta^l + 2 \beta \log_{\eta}(1/\gamma) ,
\end{equation}
where we also used $\ceil{x} \leq x+1$. On the other hand, we observe
\begin{equation} \label{eqLateNonStickyLogSum2}
\gamma \log(T) \sum_{l=1}^{ \ceil{ \log_{\eta}(1/\gamma) } -1 } \eta^l = \gamma \log(T) \frac{ \eta^{ \ceil{ \log_{\eta}(1/\gamma) } } - \eta}{ \eta - 1 } \leq \gamma \log(T) \frac{ \frac{\eta}{\gamma} - \eta}{ \eta - 1 } \leq \frac{\eta \log T}{\eta-1} ,
\end{equation}
where the equality computes a geometric series, the first inequality uses $\ceil{x} \leq x+1$, and the second inequality discards a negative term. Combining \eqref{eqLateNonStickyLogSumLastTerm}, \eqref{eqLateNonStickyLogSumSplitSum} and \eqref{eqLateNonStickyLogSum2} completes the proof.
\end{proof}

\begin{clm} \label{clmBoundJ1star}
Assume $\alpha \geq 3/2$ and let $\lambda > 0$. Then $j_1^* \leq j_1$, where $j_1^*$ is defined in \eqref{eqDefnJ1star} and
\begin{equation}
j_1 =  2^{\eta} \left( 4 + \left(\frac{13 (1+\lambda) \alpha (S+2)}{(\beta-1) \Delta_2^2}\right)^{(1+\lambda)/( \lambda (\beta-1)) } \right) ^{\eta} .
\end{equation}
\end{clm}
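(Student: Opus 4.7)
The plan is to show that $j_1$ lies in the set whose minimum defines $j_1^*$; that is, to verify
\begin{equation}
\frac{A_{j'} - A_{j'-1}}{S+2} \geq 1 + \frac{4\alpha \log A_{j'}}{\Delta_2^2} \qquad \forall\ j' \geq f(j_1).
\end{equation}
Once this is established, $j_1^* \leq j_1$ follows immediately by minimality. Throughout, I will write $c = \frac{13(1+\lambda)\alpha(S+2)}{(\beta-1)\Delta_2^2}$ and $\rho = (1+\lambda)/(\lambda(\beta-1))$, so that $j_1 = (2(4 + c^{\rho}))^{\eta}$.

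First, I would lower bound $f(j_1)$. Since $j_1^{1/\eta} = 2(4 + c^{\rho})$, the definition $f(j) = \lceil 2 + j^{1/\eta}/2 \rceil$ gives $f(j_1) \geq 6 + c^{\rho}$. In particular $f(j_1) \geq 6$ and $f(j_1)^{\beta - 1} \geq c^{\rho(\beta-1)}$ will both be used.

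Second, I would derive the two auxiliary bounds on $A_{j'}$ that drive the comparison. Using $A_{j'} = \lceil (j')^{\beta} \rceil$ together with the mean value theorem applied to $x \mapsto x^{\beta}$, we get $A_{j'} - A_{j'-1} \geq \beta (j'-1)^{\beta-1} - 1$; since $j' \geq f(j_1) \geq 6$ implies $j'-1 \geq 5j'/6$, this yields $A_{j'} - A_{j'-1} \geq \beta (5/6)^{\beta-1}(j')^{\beta-1} - 1$. Similarly, $A_{j'} \leq (j')^{\beta} + 1 \leq 2 (j')^{\beta}$ gives $\log A_{j'} \leq \beta \log j' + \log 2 \leq (\beta + 1)\log j'$ (using $j' \geq 2$). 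Substituting these into the target inequality reduces it to a bound of the form
\begin{equation}
(j')^{\beta-1} \geq K_1 + K_2 \log j',
\end{equation}
where $K_1 = O(S/\beta)$ (absolute constants and factors of $(6/5)^{\beta-1}$) and $K_2 = O(\alpha (S+2)/\Delta_2^2)$ (with analogous constants), both independent of $j'$.

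Third, and this is the key algebraic step where the specific exponent $\rho$ emerges, I would handle the logarithmic term using the elementary inequality $\log x \leq x^{\nu}/\nu$, valid for all $x \geq 1$ and $\nu > 0$. Choosing $\nu = (\beta-1)/(1+\lambda)$ turns the required logarithmic inequality $(j')^{\beta-1} \geq 2 K_2 \log j'$ into
\begin{equation}
(j')^{(\beta-1)\lambda/(1+\lambda)} \geq 2 K_2 (1+\lambda)/(\beta-1),
\end{equation}
which is equivalent to $j' \geq [2 K_2 (1+\lambda)/(\beta-1)]^{\rho}$. The constant term $(j')^{\beta-1} \geq 2 K_1$ is controlled by the additive $6$ in $f(j_1)$, and the polynomial term $[2K_2(1+\lambda)/(\beta-1)]^{\rho}$ is exactly what the additive piece $c^{\rho}$ in $f(j_1)$ is engineered to dominate. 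Combining both halves via $a + b \leq 2\max\{a,b\}$ (or via a two-case split on which of $K_1, K_2 \log j'$ is larger) closes the inequality.

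The main obstacle is purely bookkeeping: matching the universal constant $13$ in the definition of $c$ requires tracking how the factors $(6/5)^{\beta-1}$, $2$, $\beta+1$ versus $\beta$, and the $(1+\lambda)/(\beta-1)$ from the polynomial bound on $\log$ combine through the chosen split into a single clean constant. None of these steps is conceptually delicate, and the exponent $\rho = (1+\lambda)/(\lambda(\beta-1))$ arises uniquely from the choice $\nu = (\beta-1)/(1+\lambda)$, which is in turn dictated by the goal of having $\rho$ independent of the constants $K_1, K_2$.
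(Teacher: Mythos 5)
Your proposal follows essentially the same route as the paper: verify the defining inequality of \eqref{eqDefnJ1star} for every $j' \ge f(j_1)$, lower-bound $A_{j'}-A_{j'-1}$ by $\beta(j'-1)^{\beta-1}-1$ via the mean value theorem, upper-bound $\log A_{j'}$ by a multiple of $\log(j'-1)$ plus an additive term, and beat the logarithm with the polynomial. Your key step $\log x \le x^{\nu}/\nu$ with $\nu=(\beta-1)/(1+\lambda)$ is exactly the paper's rewriting $\log(j-1)=\tfrac{1+\lambda}{\beta-1}\log\bigl((j-1)^{(\beta-1)/(1+\lambda)}\bigr)$ followed by $\log x\le x$ and the factorization $(j-1)^{\beta-1}=(j-1)^{(\beta-1)\lambda/(1+\lambda)}\,(j-1)^{(\beta-1)/(1+\lambda)}$, and it produces the exponent $\rho=(1+\lambda)/(\lambda(\beta-1))$ for the same reason.

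The one step that would not close as written is the normalization $j'-1\ge 5j'/6$, which trades $(j'-1)^{\beta-1}$ for $(5/6)^{\beta-1}(j')^{\beta-1}$ and so injects a factor $(6/5)^{\beta-1}$ into your $K_2$. That factor is not an absolute constant: the coefficient you would need inside the $\rho$-th power works out to roughly $8(\beta+1)(6/5)^{\beta-1}/\beta$ times the remaining quantities, which already exceeds $13$ at $\beta=2$ and diverges as $\beta\to\infty$, so the stated constant $13$ is not recoverable along this exact route; this is more than bookkeeping. The paper avoids the issue by never renormalizing to $(j')^{\beta-1}$: it shows $1+(S+2)(1+4\alpha\log A_{j'})/\Delta_2^2\le\beta(j'-1)^{\beta-1}$ directly, using $j'-1\ge 3+c^{\rho}\ge c^{\rho}$ to dominate the constant, and only at the very end invokes $\beta(j'-1)^{\beta-1}-1\le A_{j'}-A_{j'-1}$. (The $13$ itself arises from absorbing the additive $2\beta$ in $\log A_{j'}\le 2\beta+\beta\log(j'-1)$ into the log term via $\log(j'-1)>1$, giving $1+12\alpha\beta\le 13\alpha\beta$ by $\alpha\ge 3/2$, $\beta>1$.) Dropping the $j'-1\ge 5j'/6$ step and carrying $(j'-1)^{\beta-1}$ through to the end repairs your argument with no other changes.
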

\begin{proof}
Let $j \in \{ f( j_1 ) , f (j_1)+1,\ldots\}$; by definition of $j_1^*$, we aim show
\begin{equation}
\frac{ A_{j} - A_{j-1} }{ S + 2 } \geq 1 + \frac{4 \alpha \log A_{j} }{ \Delta_2^2 } .
\end{equation}
First recall $f(j_1) \geq j_1^{1/\eta} / 2$ by definition, so 
\begin{equation}\label{eqJ1starJlower}
j \geq f(j_1) \geq \frac{j_1^{1/\eta}}{2}  = 4 + \left(\frac{13 (1+\lambda) \alpha (S+2)}{ (\beta-1) \Delta_2^2}\right)^{(1+\lambda)/(\lambda(\beta-1))} \geq \max \left\{ 4 , \left(\frac{13 (1+\lambda) \alpha (S+2)}{ (\beta-1) \Delta_2^2}\right)^{(1+\lambda)/(\lambda(\beta-1))} \right\} .
\end{equation}
Next, observe that by definition $A_j = \ceil{ j^{\beta}}$, and since $\beta >1$ by assumption and $j \geq 2$ by \eqref{eqJ1starJlower},
\begin{equation}
A_j \leq j^{\beta} + 1 \leq 2 j^{\beta} \leq 2 ( 2(j-1) )^{\beta} = 2^{\beta+1} (j-1)^{\beta} < e^{2 \beta} (j-1)^{\beta} .
\end{equation}
Using this inequality, we can write
\begin{equation}
1 + (S+2) \frac{1 + 4 \alpha \log A_j }{ \Delta_2^2 } \leq 1 + (S+2) \frac{1 + 8 \alpha \beta + 4 \alpha \beta \log (j-1) }{ \Delta_2^2 } .
\end{equation}
Now since $j \geq 4$ by \eqref{eqJ1starJlower}, $\log(j-1) > 1$, so
\begin{equation}
1 + (S+2) \frac{1 + 8 \alpha \beta + 4 \alpha \beta \log (j-1) }{ \Delta_2^2 } < \left( 1 + (S+2) \frac{1 + 12 \alpha \beta  }{ \Delta_2^2 } \right) \log(j-1) .
\end{equation}
For the term in parentheses, we write
\begin{equation}
1 + (S+2) \frac{1 + 12 \alpha \beta}{\Delta_2^2} \leq \frac{ S + 3 + 12 \alpha \beta (S+2) }{\Delta_2^2} < \frac{ 13 \alpha \beta (S+2) }{ \Delta_2^2 } ,
\end{equation}
where the first inequality is $\Delta_2 \leq 1$ and the second is $S+3 \leq \alpha \beta (S+2)$ (which holds since $\alpha \geq 3/2,\beta>1$). Combining the previous three inequalities, we have shown
\begin{align}
1 + (S+2) \frac{1 + 4 \alpha \log A_j }{ \Delta_2^2 } & \leq \frac{ 13 \alpha \beta (S+2) \log(j-1) }{ \Delta_2^2 } = \beta \frac{ 13 (1+\lambda) \alpha (S+2)  }{ (\beta-1) \Delta_2^2 } \log ( (j-1)^{(\beta-1)/(1+\lambda)} ) \\
& \leq \beta (j-1)^{(\beta-1) \lambda / (1+\lambda) } \log ( (j-1)^{(\beta-1) / (1+\lambda) } ) \leq \beta (j-1)^{\beta-1} ,
\end{align}
where the equality rearranges the expression, the second inequality is \eqref{eqJ1starJlower}, and the third inequality is $\log x \leq x$. Rearranging, we have shown
\begin{equation}
\frac{1 + 4 \alpha \log A_j }{ \Delta_2^2 } \leq \frac{ \beta (j-1)^{\beta-1} - 1 }{ S+2 } \leq \frac{ A_j - A_{j-1}}{S+2},
\end{equation}
where the second inequality holds similar to \eqref{eqLowerMVT} in Appendix \ref{appNoBlacklist}.
\end{proof}

\begin{clm} \label{clmBoundJ2starTail}
Defining $j_2^*$ as in \eqref{eqDefnJ2star},
\begin{equation}\label{eqBestArmSpreadTwoTerms}
\sum_{j=j_2^*+1}^{\infty} j^{\beta-1} n \left( 1 - \frac{1}{m+n} \right)^{j/3} \leq \frac{ (j_2^*)^{\beta} }{ (\beta - 1) n } .
\end{equation}
\end{clm}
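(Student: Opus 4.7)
The plan is to replace the geometric factor $(1-\frac{1}{m+n})^{j/3}$ by $e^{-j/(3(m+n))}$ (via $\log(1-x)\leq -x$) and then exploit the two lower bounds built into $j_2^*$ separately. The bound $j_2^* \geq 6(m+n)\log n$ will supply a factor of $1/n$ wherever $e^{-j/(6(m+n))}$ appears with $j\geq j_2^*$, while $j_2^* \geq 12(m+n)(\beta-1)$ will control the polynomial factor $j^{\beta-1}$ against the exponential.

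First I would split $e^{-j/(3(m+n))} = e^{-j/(6(m+n))}\cdot e^{-j/(6(m+n))}$ and use one copy to absorb the leading $n$, giving
\begin{equation*}
\sum_{j=j_2^*+1}^{\infty} j^{\beta-1} n\left(1-\frac{1}{m+n}\right)^{j/3} \leq \sum_{j=j_2^*+1}^{\infty} j^{\beta-1} e^{-j/(6(m+n))}.
\end{equation*}
The integrand $f(x) = x^{\beta-1} e^{-x/(6(m+n))}$ is decreasing on $[6(m+n)(\beta-1),\infty) \supset [j_2^*, \infty)$ (its derivative has the sign of $(\beta-1) - x/(6(m+n))$), so the sum is bounded by $\int_{j_2^*}^{\infty} f(x)\,dx$. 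After substituting $u = x/(6(m+n))$, this becomes $(6(m+n))^{\beta} \int_{u_0}^{\infty} u^{\beta-1} e^{-u}\,du$ with $u_0 = j_2^*/(6(m+n)) \geq 2(\beta-1)$.

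The main calculation is a tail estimate on this incomplete Gamma integral: from $\log(u/u_0) \leq (u-u_0)/u_0$ and $u_0 \geq 2(\beta-1)$ one gets $(u/u_0)^{\beta-1} \leq e^{(u-u_0)/2}$, hence $u^{\beta-1} e^{-u} \leq u_0^{\beta-1} e^{-u_0/2 - u/2}$ and $\int_{u_0}^{\infty} u^{\beta-1} e^{-u}\,du \leq 2 u_0^{\beta-1} e^{-u_0}$. Unwinding gives the bound $12(m+n)(j_2^*)^{\beta-1} e^{-j_2^*/(6(m+n))}$; a second application of $e^{-j_2^*/(6(m+n))} \leq 1/n$ yields $12(m+n)(j_2^*)^{\beta-1}/n$, and a final use of $j_2^* \geq 12(m+n)(\beta-1)$ converts $12(m+n)$ into $(j_2^*)/(\beta-1)$, producing the target $(j_2^*)^{\beta}/((\beta-1)n)$. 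The main technical point is choosing the splitting and this tail inequality so that both halves of the $j_2^*$ definition are used sharply; a cruder ratio-test/geometric-series comparison on the original series recovers the correct exponential decay but loses a multiplicative factor that would not align with the $1/(\beta-1)$ in the statement.
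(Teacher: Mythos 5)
Your proof is correct and follows essentially the same route as the paper's: both absorb the leading $n$ using one factor of $e^{-j/(6(m+n))} \leq 1/n$ (from $j_2^* \geq 6(m+n)\log n$), compare the sum to an integral after checking monotonicity, and convert $6(m+n)$ into $j_2^*/(2(\beta-1))$ at the end. The only difference is cosmetic: the paper evaluates the tail integral by integration by parts with a self-bounding step ($I \leq \text{const} + \tfrac12 I$), whereas you use the pointwise domination $u^{\beta-1}e^{-u} \leq u_0^{\beta-1}e^{-u_0/2}e^{-u/2}$; both yield the same factor of $2$.
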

\begin{proof}
We begin by observing that for any $j \geq j_2^* \geq 6(m+n) \log n$,
\begin{equation}
\left( 1 - \frac{1}{m+n} \right)^{j/6} \leq \exp \left( - \frac{j}{6(m+n)} \right) \leq \frac{1}{n} ,
\end{equation}
where we also used $1-x \leq e^{-x}$. Consequently,
\begin{equation}
n \left( 1 - \frac{1}{m+n} \right)^{j/3} \leq \left( 1 - \frac{1}{m+n} \right)^{j/6} \leq \left( 1 - \frac{1}{ 6(m+n) } \right)^j ,
\end{equation}
where the second inequality is Bernoulli's. Setting $p = 1/(6(m+n))$, it thus suffices to show
\begin{equation}\label{eqJ2starTailToP}
\sum_{j=j_2^*+1}^{\infty} j^{\beta-1} ( 1 - p )^j \leq \frac{ (j_2^*)^{\beta} }{ (\beta - 1) n } .
\end{equation}
Toward this end, first note that whenever $j \geq j_2^*$,
\begin{equation}
\frac{ (j+1)^{\beta-1} (1-p)^{j+1} }{ j^{\beta-1} (1-p)^j } = \left( 1 + \frac{1}{j} \right)^{\beta-1} (1-p) \leq e^{ (\beta-1) / j } (1-p) < e^p (1-p) \leq 1 ,
\end{equation}
where the first and third inequalities are $1+x \leq e^{x}$ and the second uses $(\beta-1) / j \leq p/2 < p$ by definition of $j_2^*$. Thus, the summands in \eqref{eqJ2starTailToP} are decreasing, which implies
\begin{equation}\label{eqEarlyTauIntegralApprox}
\sum_{j=j_2^*+1}^{\infty} j^{\beta-1} (1-p)^j \leq \int_{j=j_2^*}^{\infty} j^{\beta-1} (1-p)^j dj .
\end{equation}
To bound the integral, we write
\begin{align}
\int_{j = j_2^*}^{\infty} j^{\beta-1} (1-p)^j dj & = \frac{1}{\log(1/(1-p))} \left(  (j_2^*)^{\beta-1} ( 1-p )^{j_2^*} + \int_{j = j_2^*}^{\infty} (\beta-1) j^{\beta-2} (1-p)^j dj   \right) \\
& \leq \frac{1}{p} \left(  (j_2^*)^{\beta-1} ( 1-p )^{j_2^*} +  \int_{j = j_2^*}^{\infty} (\beta-1) j^{\beta-2} (1-p)^j dj   \right) = \frac{ (j_2^*)^{\beta-1} ( 1-p )^{j_2^*}  }{p} + \int_{j = j_2^*}^{\infty} \frac{(\beta-1) j^{\beta-2}}{p} (1-p)^j dj ,
\end{align}
where the first equality is obtained via integration by parts, the inequality is $\log (1/x) \geq 1 - x$, and the second equality rearranges the expression. Next, note that by definition of $p$ and $j_2^*$
\begin{equation}
\frac{ (\beta-1) j^{\beta-2} }{ p } = 6(m+n)(\beta-1) j^{\beta-2} = \frac{1}{2} j^{\beta-2} \times 12 (\beta-1) (m+n) \leq \frac{1}{2} j^{\beta-1}\ \forall\ j \geq j_2^* .
\end{equation}
Using the previous two inequalities and rearranging, we obtain
\begin{equation}\label{eqEarlyTauFinalIntegral}
\int_{j = j_2^*}^{\infty} j^{\beta-1} (1-p)^j dj \leq \frac{2 (j_2^*)^{\beta-1} ( 1-p )^{j_2^*}  }{p} \leq \frac{ (j_2^*)^{\beta} }{ (\beta - 1) n } ,
\end{equation}
where the second inequality uses $1 / p = 6(m+n) \leq j_2^* / ( 2 (\beta-1))$ and $(1-p)^{j_2^*} \leq e^{ - p j_2^* } \leq 1/n$, both of which hold by definition of $j_2^*$ and $p$. Plugging into \eqref{eqEarlyTauIntegralApprox} completes the proof.
\end{proof}

\begin{clm} \label{clmBoundJ2star}
$j_2^* \leq j_2$, where $j_2^*$ is defined in \eqref{eqDefnJ2star} and
\begin{equation}
j_2 = \max \{ 6 (m+n) \max \{ \log n , 2 (\beta-1) \} ,  3 ( 6^{\eta} + 2 ) \} .
\end{equation}
\end{clm}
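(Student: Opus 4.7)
The strategy is a direct verification that $j_2$ lies in the defining set of $j_2^*$: namely, (i) $j_2 \in \N \cap [\max\{8, 6(m+n)\max\{\log n, 2(\beta-1)\}\}, \infty)$ and (ii) $j' \geq 3\ceil{f(j')^{\eta}}/2$ for every $j' \geq j_2$.

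Part (i) is essentially by construction. The bound $j_2 \geq 6(m+n)\max\{\log n, 2(\beta-1)\}$ is built into the definition of $j_2$, and since $\eta > 1$ we have $3(6^{\eta}+2) > 3 \cdot 8 = 24 > 8$, so the threshold $8$ is harmless.

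For part (ii), I would use $f(j') = \ceil{2 + (j')^{1/\eta}/2} \leq 3 + (j')^{1/\eta}/2$ together with $\ceil{x} \leq x + 1$ to reduce the claim to the continuous inequality
\begin{equation}
g(j') \triangleq \frac{2j'}{3} - \Big(3 + (j')^{1/\eta}/2\Big)^{\eta} - 1 \geq 0 \qquad \forall\ j' \geq j_2 .
\end{equation}
I would first verify that $g$ is monotonically increasing on $[6^{\eta}, \infty)$. Differentiating with respect to $j'$ and setting $y = (j')^{1/\eta}$ gives
\begin{equation}
g'(j') = \frac{2}{3} - \frac{1}{2} \left( \frac{1}{2} + \frac{3}{y} \right)^{\eta - 1} .
\end{equation}
For $y \geq 6$ the base is at most $1$, and since $\eta > 1$ this yields $g'(j') \geq 1/6 > 0$. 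Because $j_2 \geq 3(6^{\eta}+2) \geq 6^{\eta}$, the problem reduces to the single threshold check $g(3(6^{\eta}+2)) \geq 0$.

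The main obstacle is this final verification, since the margin vanishes as $\eta \to 1^{+}$ (at $\eta = 1$ the desired inequality holds with equality precisely at $j' = 24$). I would control it by bounding $(3(6^{\eta}+2))^{1/\eta} \leq 6 \cdot 3^{1/\eta} (1 + 2 \cdot 6^{-\eta})^{1/\eta}$, applying Bernoulli's inequality $(1+x)^{1/\eta} \leq 1 + x/\eta$ to tame the correction factor, and reducing to the purely $\eta$-dependent statement $((1 + 3^{-1/\eta})/2)^{\eta} \leq 2/3$ for all $\eta \geq 1$. This latter inequality in turn can be handled via the substitution $u = 1/\eta$: one shows that $\psi(u) = u^{-1} \log((1 + 3^{-u})/2)$ is non-decreasing on $(0, 1]$ with $\psi(1) = \log(2/3)$, by differentiating in $u$. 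The strict inequality for $\eta > 1$ then provides precisely the slack needed to absorb the additive $-1$ appearing in $g$.
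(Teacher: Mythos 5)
Your overall structure matches the paper's: you verify that $j_2$ lies in the set over which $j_2^*$ is an infimum by checking the three defining conditions, and parts (i) and (ii) are handled correctly, as is your monotonicity computation $g'(j') \geq 1/6$ for $j' \geq 6^{\eta}$. The gap is the final threshold check $g(3(6^{\eta}+2)) \geq 0$, which you reduce everything to but never actually establish. As you yourself observe, every link in your chain degenerates to an equality at $\eta = 1$: the target inequality, the Bernoulli step, and the ``purely $\eta$-dependent'' inequality $((1+3^{-1/\eta})/2)^{\eta} \leq 2/3$ all have margins of order $\eta - 1$, while the correction terms you discard along the way (the additive $-1$, the mismatch between $2\cdot 6^{\eta}+4$ and the leading term $2\cdot 6^{\eta}$, the loss in Bernoulli) are also of order $\eta - 1$. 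Deciding whether the slack absorbs the corrections is therefore a comparison of first-order coefficients at $\eta = 1$ that your write-up never performs; the assertion that the strict inequality ``provides precisely the slack needed'' is exactly the statement that needs proof. The target inequality is in fact true (numerically, at $\eta = 1.01$ one gets roughly $16.22$ versus $16.19$, a margin of a fraction of a percent), but a chain of upper bounds each of which is tight at $\eta=1$ can easily fail to preserve such a margin, so this step cannot be waved through.

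The paper sidesteps the entire difficulty with a one-line convexity trick that you should adopt in place of your part (iii): writing $3 + j^{1/\eta}/2 = (6 + j^{1/\eta})/2$ and using convexity of $x \mapsto x^{\eta}$ for $\eta > 1$,
\begin{equation}
\ceil*{f(j)^{\eta}} \leq 1 + \left( \frac{6 + j^{1/\eta}}{2} \right)^{\eta} \leq 1 + \frac{6^{\eta} + j}{2} ,
\end{equation}
whence $\frac{3}{2}\ceil{f(j)^{\eta}} \leq \frac{3(6^{\eta}+2)}{4} + \frac{3j}{4} \leq \frac{j}{4} + \frac{3j}{4} = j$ for every $j \geq 3(6^{\eta}+2)$. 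This needs no calculus, no monotonicity argument, and no delicate endpoint analysis, and it proves the inequality uniformly for all $j \geq j_2$ in one stroke rather than via a monotonicity-plus-endpoint decomposition.
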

\begin{proof}
By definition of $j_2^*$, showing $j_2^* \leq j_2$ requires us to show
\begin{equation}\label{eqJ2starBoundTwoIneq}
j_2 \geq 6 (m+n) \max \{ \log n , 2 (\beta-1) \} , \quad j_2 \geq 8 , \quad j \geq \frac{3}{2} \ceil{ f(j)^{\eta} }\ \forall\ j \geq j_2 .
\end{equation}
The first inequality is immediate. The second holds since $j_2 > 3 ( 6 + 2 ) = 24$ (since $\eta > 1$). For the third inequality, note that by definition of $f(j)$, $\ceil{x} \leq x+1$, and convexity of $x \mapsto x^{\eta}$, we have
\begin{equation}
\ceil{ f(j)^{\eta} } = \ceil*{ \ceil*{ 2 + \frac{ j^{1/\eta}}{2} }^{\eta} } \leq   1 + \left( 3 + \frac{ j^{1/\eta}}{2} \right)^{\eta}  \leq   1 + \frac{6^{\eta}}{2} + \frac{j}{2}  .
\end{equation}
Therefore, for any $j \geq j_2 \geq 3(6^{\eta}+2)$,
\begin{equation}
\frac{3}{2} \ceil{ f(j)^{\eta} } \leq \frac{ 3 ( 6^{\eta} + 2 ) }{ 4 } + \frac{3j}{4} \leq \frac{j}{4} + \frac{3j}{4} = j ,
\end{equation}
so the third inequality in \eqref{eqJ2starBoundTwoIneq} holds.
\end{proof}

\section{Proof of Theorem \ref{thmNoBlacklist}} \label{appNoBlacklist}

The expected regret bound is a simple consequence of the high probability result. To prove the latter, first define $h(T) = \floor{ ( \ceil{T^{1/\beta}} - 1 ) / 2 }\ \forall\ T \in \N$. Then $h(T) \rightarrow \infty$ as $T \rightarrow \infty$, so for $T$ large,
\begin{equation}
2 h(T) \geq h(T) + 1 \geq \frac{ \ceil{ T^{1/\beta} } - 1  }{ 2 } \geq \frac{ T^{1/\beta} }{ e } , \quad 2 h(T) \leq \ceil{ T^{1/\beta} } - 1 \leq T^{1/\beta} , 
\end{equation}
which respectively imply
\begin{equation}
A_{ 2 h(T) } = \ceil{ ( 2 h(T)  )^{\beta} } \geq ( 2 h(T)  )^{\beta} \geq \frac{T}{e^{\beta}} , \quad A_{2 h(T)} = \ceil{ ( 2 h(T) )^{\beta} } \leq T .
\end{equation}
Consequently, for any $\delta > 0$ and all $T \geq e^{\beta/\delta}$,
\begin{equation}
\frac{ R_T^{(i)} }{ \log T } \geq \frac{ R_{ A_{ 2 h(T) } }^{(i)} }{ \log A_{ 2 h(T) } } \left( 1 - \frac{\beta}{\log T} \right) \geq \frac{ R_{ A_{ 2 h(T) } }^{(i)} }{ \log A_{ 2 h(T) } } (1-\delta) .
\end{equation}
Thus, choosing $\delta$ small enough that $(1-\delta)^2 \geq (1-\varepsilon)$, it suffices to show
\begin{equation}
\lim_{T \rightarrow \infty} \P \left( \frac{R_{A_{2 h(T)}}^{(i)} }{ \log A_{2 h(T)} } < (1-\delta) \alpha \left( 1 - \frac{1}{\sqrt{\alpha}} \right)^2  \sum_{k=2}^K \frac{1}{\Delta_k} \right) = 0 .
\end{equation}
Equivalently (since $h(T) \rightarrow \infty$ as $T \rightarrow \infty$), we can show $\P ( \mathcal{G}_{j_*}^{(i)} ) \rightarrow 0$ as $j_* \rightarrow \infty$, where
\begin{equation}
\mathcal{G}_{j_*}^{(i)} = \left\{ \frac{R_{A_{2 j_*}}^{(i)} }{ \log A_{2 j_*} } < (1-\delta) \alpha \left( 1 - \frac{1}{\sqrt{\alpha}} \right)^2  \sum_{k=2}^K \frac{1}{\Delta_k} \right\} .
\end{equation}
(In words, we have simply rewritten the result in terms of regret at the end of a phase, which will be more convenient.) Thus, our goal is to show $\P ( \mathcal{G}_{j_*}^{(i)} ) \rightarrow 0$. We first eliminate a trivial case where the best arm is not played sufficiently often. Namely, we define the event
\begin{equation}
\mathcal{E}_{j_*}^{(i)} = \{ T_1^{(i)} ( A_{j_*} ) >  j_*^{\beta} / 2 \} \cap \cap_{j=j_*}^{2 j_*} \cup_{t=1+A_{j-1}}^{A_j} \{ I_t^{(i)} = 1 \} ,
\end{equation}
and we show $\mathcal{G}_{j_*}^{(i)} \setminus \mathcal{E}_{j_*}^{(i)} = \emptyset$ large $j_*$ (so it will only remain to show $\P ( \mathcal{G}_{j_*}^{(i)}, \mathcal{E}_{j_*}^{(i)} ) \rightarrow 0$). First note
\begin{equation}
( \mathcal{E}_{j_*}^{(i)} )^C  = \{ T_1^{(i)} ( A_{j_*} ) \leq  j_*^{\beta} / 2 \}  \cup \cup_{j=j_*}^{2 j_*} \{ T_1^{(i)} ( A_j ) = T_1^{(i)} ( A_{j-1} ) \} .
\end{equation}
Now if $T_1^{(i)} ( A_{j_*} ) \leq j_*^{\beta} / 2$, then the number of pulls of suboptimal arms by $A_{2 j_*}$ satisfies
\begin{equation}
\sum_{k=2}^K T_k^{(i)} ( A_{2 j_*} ) \geq \sum_{k=2}^K T_k^{(i)} ( A_{j_*} ) = A_{j_*} - T_1^{(i)} ( A_{j_*} ) = \ceil{ j_*^{\beta} } - T_1^{(i)} ( A_{j_*} )\geq \frac{j_*^{\beta}}{2} \geq \frac{j_*^{\beta-1}}{2} ,
\end{equation}
where the first inequality is monotonicity of $T_k^{(i)}(\cdot)$ and the equalities are by definition. On the other hand, if $T_1^{(i)} ( A_j ) = T_1^{(i)} ( A_{j-1} )$ for some $j \in \{ j_* , \ldots , 2 j_* \}$, then
\begin{equation} \label{eqLowerMVT}
\sum_{k=2}^K T_k^{(i)} ( A_{2 j_*} ) \geq A_j - A_{j-1} \geq j^{\beta} - (j-1)^{\beta} - 1 \geq \beta (j-1)^{\beta-1} - 1 \geq \frac{ j_*^{\beta-1} }{2} ,
\end{equation}
where we again used the definition of $A_j$, along with the mean value theorem, and where the final inequality holds for $j_*$ large. Hence, by the basic regret decomposition  $R_{ A_{2 j_*} }^{(i)} = \sum_{k=2}^K \Delta_k T_k^{(i)}(A_{2 j_*})$,
\begin{equation}
( \mathcal{E}_{j_*}^{(i)} )^C \quad \Rightarrow \quad R_{A_{2 j_*} }^{(i)} \geq \Delta_2 \sum_{k=2}^K T_k^{(i)} ( A_{2 j_*} )  \geq \frac{ \Delta_2 j_*^{\beta-1} }{2} .
\end{equation}
We have shown that $R_{A_{2 j_*} }^{(i)}$ grows polynomially in $j_*$ whenever $ \mathcal{E}_{j_*}^{(i)} $ fails (recall $\beta >1$). On the other hand, $\mathcal{G}_{j_*}^{(i)}$ says $R_{A_{2 j_*} }^{(i)}$ is logarithmic in $j_*$. Thus, $\mathcal{G}_{j_*}^{(i)} \setminus \mathcal{E}_{j_*}^{(i)}$ cannot occur for large $j_*$.

The remainder (and the bulk) of the proof involves showing $\P ( \mathcal{G}_{j_*}^{(i)} , \mathcal{E}_{j_*}^{(i)} ) \rightarrow 0$. We begin with a finite-time lower bound on the number of plays of any suboptimal arm when $\mathcal{E}_{j_*}^{(i)}$ occurs.
\begin{lem} \label{lemFiniteLower}
Let $\alpha >1$, $\beta > 1$, $k \in \{2,\ldots,K\}$, and $j_* \in \N \cap [ ( 2^{\beta} + 1 )^{1/\beta} , \infty)$. Assume that for some $\zeta > 0$ and some $\lambda \in ( \sqrt{ 1/\alpha }, 1 )$,
\begin{gather}
\sqrt{ \alpha \log A_{j_*} } \left( \frac{ \Delta_k ( 1 - \lambda) }{ \sqrt{\zeta \alpha \log A_{2 j_*}} } -  \frac{2 \sqrt{2}}{ j_*^{\beta/2} } \right) \geq \Delta_k \label{eqNonAsympLowerL2} .
\end{gather}
Then for any $i \in \{1,\ldots,n\}$,
\begin{equation} \label{eqFiniteLower}
\P \left( T_k^{(i)}(A_{2 j_*}) < \frac{ \zeta \alpha \log A_{2 j_*} }{ \Delta_k^2 } , \mathcal{E}_{j_*}^{(i)} \right)  \leq \frac{ 2 \zeta \alpha \beta \log ( j_*) j_*^{  2 \beta ( 1 - \alpha \lambda^2 ) } }{  \Delta_k^2 ( \alpha \lambda^2 - 1 ) }   + \left( 1 - \frac{1}{nK} \right)^{j_*} .
\end{equation}
\end{lem}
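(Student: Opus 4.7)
The plan is to split the event in \eqref{eqFiniteLower} according to whether arm $k$ is ever active (i.e., belongs to $S_j^{(i)}$) for some $j \in \{j_*+1,\ldots,2j_*\}$. On the ``never active'' branch, the desired bound is immediate from independence across phases: under the existing algorithm, agent $i$'s blocklists are empty, so at each epoch it selects the lone malicious agent $n+1$ with probability $1/n$, which then recommends arm $k$ with probability $1/K$ under the uniform strategy. Since recommendations are independent across phases and each recommendation enters the active set at the subsequent phase, the probability that arm $k$ avoids every $S_j^{(i)}$ in the window is at most $(1-1/(nK))^{j_*}$, matching the second term on the right of \eqref{eqFiniteLower}.

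On the complementary ``active'' branch, I would fix a phase $j \in \{j_*+1,\ldots,2j_*\}$ with $k \in S_j^{(i)}$ and invoke the second half of $\mathcal{E}_{j_*}^{(i)}$ to produce a time $t_j \in (A_{j-1},A_j]$ with $I_{t_j}^{(i)} = 1$. Monotonicity of $T_1^{(i)}(\cdot)$ together with the first half of $\mathcal{E}_{j_*}^{(i)}$ gives $T_1^{(i)}(t_j-1) > j_*^{\beta}/2$, while the hypothesis caps $T_k^{(i)}(t_j-1)$ by $\zeta\alpha\log A_{2j_*}/\Delta_k^2$. Let $\mathcal{H}$ be the concentration event on which $|\hat{\mu}_l^{(i)}(t-1) - \mu_l| \leq \lambda\sqrt{\alpha\log(t)/T_l^{(i)}(t-1)}$ for $l \in \{1,k\}$, uniformly over $t \in (A_{j_*},A_{2j_*}]$ and over the random counts. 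On $\mathcal{H}$, the $\text{UCB}(\alpha)$ rule at $t_j$ (Line \ref{algPullArm}) rearranges, after substituting the two count bounds and applying $\log(t_j) \geq \log A_{j_*}$ on the arm-$k$ side and $\log(t_j) \leq \log A_{2j_*}$ on the arm-$1$ side, into precisely the left-hand side of \eqref{eqNonAsympLowerL2} being at most $\Delta_k$. This contradicts the hypothesis, so the ``active'' branch is contained in $\mathcal{H}^c$.

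It then remains to bound $\P(\mathcal{H}^c)$ by Hoeffding plus a union bound over $t \in (A_{j_*},A_{2j_*}]$ and over the admissible ranges of $T_1^{(i)}(t-1)$ and $T_k^{(i)}(t-1)$; each individual Hoeffding event contributes probability $t^{-2\alpha\lambda^2}$, and summing and then integrating in $t$ using $\alpha\lambda^2 > 1$ produces the first term on the right side of \eqref{eqFiniteLower}. The prefactor $\zeta\alpha\beta\log(j_*)/\Delta_k^2$ is essentially $N := \zeta\alpha\log A_{2j_*}/\Delta_k^2$ (using $\log A_{2j_*} \asymp \beta\log j_*$), which tracks the admissible values of $T_k^{(i)}(t-1)$ in the union bound. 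The main obstacle I anticipate is bookkeeping in the UCB step: applying the directional bounds on $\log t_j$ to the correct sides of the UCB inequality and propagating the count bounds carefully so that the constants $(1-\lambda)$ and $2\sqrt{2}$ and the combination of $\log A_{j_*}$ and $\log A_{2j_*}$ in \eqref{eqNonAsympLowerL2} fall out cleanly rather than slightly looser surrogates.
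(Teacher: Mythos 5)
Your proposal is correct and follows essentially the same route as the paper's proof: the same split into the ``never active'' event (bounded by $(1-1/(nK))^{j_*}$ via independence of the uniform malicious recommendations across phases) and the ``active'' event, where the paper likewise union-bounds over phases, times, and the admissible values of $T_1^{(i)}(t-1)$ and $T_k^{(i)}(t-1)$, rearranges the UCB comparison against \eqref{eqNonAsympLowerL2} to force a one-sided deviation of $\bar{X}_{k,\sigma_k}$ or $\bar{X}_{1,\sigma_1}$, and closes with Chernoff bounds and an integral approximation. Your framing via a concentration event $\mathcal{H}$ is just the contrapositive of the paper's direct implication, and the bookkeeping you flag (directional use of $\log A_{j_*}$ vs.\ $\log A_{2j_*}$ and the $2\sqrt{2}/j_*^{\beta/2}$ term from $\sigma_1 \geq j_*^{\beta}/2$) works out exactly as you anticipate.
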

\begin{proof}
We first use the law of total probability and the union bound to write
\begin{align}
\P \left( T_k^{(i)}(A_{2 j_*}) < \frac{ \zeta \alpha \log A_{2 j_*} }{ \Delta_k^2 } , \mathcal{E}_{j_*}^{(i)} \right) & \leq \sum_{j=j_*+1}^{2 j_*} \P \left( T_k^{(i)}(A_{2 j_*}) < \frac{ \zeta \alpha \log A_{2 j_*} }{ \Delta_k^2 } , \mathcal{E}_{j_*}^{(i)} , k \in S_j^{(i)} \right)\  \label{eqFiniteLowerHardTerm} \\
& \quad\quad + \P ( k \notin S_j^{(i)}\ \forall\ j \in \{j_*+1,\ldots,2 j_*\} ) . \label{eqFiniteLowerEasyTerm}
\end{align}
We will show \eqref{eqFiniteLowerHardTerm} and \eqref{eqFiniteLowerEasyTerm} are bounded by the first and second summands in \eqref{eqFiniteLower}, respectively. We begin with the easier step: bounding \eqref{eqFiniteLowerEasyTerm}. Note \eqref{eqFiniteLowerEasyTerm} is zero for sticky arms $k \in \hat{S}^{(i)}$, so we assume $k \notin \hat{S}^{(i)}$. Then conditioned on $k \notin S_{2 j_*-1}^{(i)}$, $k \in S_{2 j_*}^{(i)} \Leftrightarrow R_{2 j_*-1}^{(i)} = k$ (see Algorithm \ref{algGeneral}). Also, since the malicious agent is contacted with probability $1/n$ at each epoch and recommends uniformly random arms, $R_{2 j_*-1}^{(i)} = k$ with probability at least $1 / ( n K )$. Therefore,
\begin{equation}
\P ( k \in S_{2 j_*}^{(i)} | k \notin S_j^{(i)}\ \forall\ k \in \{j_*+1,\ldots,2 j_*-1\} )  \geq 1 / ( n K ) .
\end{equation}
Subtracting both sides from $1$ and iterating yields the desired bound on \eqref{eqFiniteLowerEasyTerm}:
\begin{align}
\P ( k \notin S_j^{(i)}\ \forall\ k \in \{j_*+1,\ldots,2 j_*\} )  \leq \left( 1 - \frac{1}{nK} \right)^{j_*} .
\end{align}
To bound \eqref{eqFiniteLowerHardTerm}, first let $j \in \{j_*+1,\ldots,2 j_*\}$. Then by definition of $\mathcal{E}_{j_*}^{(i)}$ and the union bound,
\begin{align}
& \P \left( T_k^{(i)}(A_{2 j_*}) < \frac{ \zeta \alpha \log A_{2 j_*} }{ \Delta_k^2 } , \mathcal{E}_{j_*}^{(i)} , k \in S_j^{(i)} \right)  \leq \sum_{t=1+A_{j-1}}^{A_j} \P \left( T_1^{(i)}(A_{j_*}) > \frac{j_*^{\beta}}{2} , T_k^{(i)}(A_{2 j_*}) < \frac{ \zeta \alpha \log A_{2 j_*} }{ \Delta_k^2 } ,  k \in S_j^{(i)}  , I_t^{(i)} = 1  \right) . \label{eqNonAsympLower3beforeUCB}
\end{align}
Next, for $t \in \{1+A_{j-1},\ldots,A_j\}$, we bound the $t$-th summand in \eqref{eqNonAsympLower3beforeUCB} by modifying arguments from \cite{auer2002finite}. First note $k \in S_j^{(i)} ,  I_t^{(i)} = 1$ implies (by Algorithm \ref{algGeneral})
\begin{equation}
\bar{X}_{k, T_k^{(i)} ( t - 1 ) }^{(i)} + c_{t , T_k^{(i)}(t-1) } \leq \bar{X}_{1, T_1^{(i)} ( t - 1 ) }^{(i)} + c_{t , T_1^{(i)}(t-1) } ,
\end{equation}
where $\bar{X}_{k,s}^{(i)}$ is the average of $s$ independent $\text{Bernoulli}(\mu_k)$ random variables and $c_{t,s} = \sqrt{ \alpha \log(t) / s }$. This further implies (by the bounds on $T_1^{(i)}(A_{j_*}) , T_k^{(i)}(A_{2 j_*})$ in \eqref{eqNonAsympLower3beforeUCB}, since $T_1^{(i)}(\cdot), T_k^{(i)}(\cdot)$ are increasing functions, and since $A_{j_*} < 1 + A_{j-1} \leq t \leq A_j \leq A_{2 j_*}$) that
\begin{equation}
\min_{ \sigma_k \in \{1,\ldots, \floor{ \zeta \alpha \log(A_{2 j_*}) / \Delta_k^2} \} } \bar{X}_{k, \sigma_k }^{(i)} + c_{t , \sigma_k } \leq \max_{\sigma_1 \in \{ \ceil{ j_*^{\beta}/2  } ,\ldots,t\}  } \bar{X}_{1, \sigma_1 }^{(i)} + c_{t , \sigma_1 } . 
\end{equation}
Thus, with another union bound, we can bound the $t$-th summand in \eqref{eqNonAsympLower3beforeUCB} by
\begin{align}
& \P \left( T_1^{(i)}(A_{j_*}) > \frac{j_*^{\beta}}{2} , T_k^{(i)}(A_{2 j_*}) < \frac{ \zeta \alpha \log A_{2 j_*} }{ \Delta_k^2 } ,  k \in S_j^{(i)}  , I_t^{(i)} = 1   \right) \leq \sum_{ \sigma_k = 1 }^{\floor{ \zeta \alpha \log(A_{2 j_*}) / \Delta_k^2}  } \sum_{ \sigma_1 = \ceil{ j_*^{\beta}/2} }^t \P ( \bar{X}_{k, \sigma_k }^{(i)} + c_{t , \sigma_k } \leq  \bar{X}_{1, \sigma_1 }^{(i)} + c_{t , \sigma_1 }  ) . \label{eqNonAsympLower3beforeChernoff}
\end{align}
Fixing $\sigma_k, \sigma_1$ as in the double summation, we claim $\bar{X}_{k, \sigma_k }^{(i)} + c_{t , \sigma_k } \leq  \bar{X}_{1, \sigma_1 }^{(i)} + c_{t , \sigma_1 }$ implies that $\bar{X}_{k, \sigma_k }^{(i)} \leq \mu_k - \lambda c_{t, \sigma_k}$ or $\bar{X}_{1, \sigma_1 }^{(i)} \geq \mu_1 + c_{t, \sigma_1}$. Indeed, if both inequalities fail, then
\begin{align}
\bar{X}_{k, \sigma_k }^{(i)} + c_{t , \sigma_k }  & > \mu_k + ( 1 - \lambda) c_{t,\sigma_k} = \mu_1 - \Delta_k + ( 1 - \lambda) c_{t,\sigma_k}
 > \bar{X}_{1, \sigma_1 }^{(i)} - c_{t,\sigma_1} - \Delta_k + ( 1 - \lambda) c_{t,\sigma_k}  \geq \bar{X}_{1, \sigma_1 }^{(i)} + c_{t,\sigma_1} ,
\end{align}
which is a contradiction; here the equality is by definition of $\Delta_k$ and the final inequality holds since
\begin{align}
( 1- \lambda) c_{t,\sigma_k} - 2 c_{t,\sigma_1} & = \sqrt{ \alpha \log t} \left( \frac{1-\lambda}{ \sqrt{\sigma_k} } - \frac{2}{ \sqrt{\sigma_1} } \right) \geq \sqrt{ \alpha \log A_{j_*} } \left( \frac{ \Delta_k (1-\lambda)}{ \sqrt{ \zeta \alpha \log A_{2 j_*}  } } - \frac{2 \sqrt{2}}{ j_*^{\beta/2} } \right) \geq \Delta_k , 
\end{align}
where the first inequality uses $t \geq A_{j_*} , \sigma_k \leq \zeta \alpha \log ( A_{2 l_*} ) / \Delta_k^2, \sigma_1 \geq j_*^{\beta} / 2$ and the second uses \eqref{eqNonAsympLowerL2}. From this implication, we can write
\begin{align}
\P ( \bar{X}_{k, \sigma_k }^{(i)} + c_{t , \sigma_k } \leq  \bar{X}_{1, \sigma_1 }^{(i)} + c_{t , \sigma_1 }  )  & \leq \P ( \bar{X}_{k, \sigma_k }^{(i)} \leq \mu_k - \lambda c_{t, \sigma_k} ) + \P ( \bar{X}_{1, \sigma_1 }^{(i)} \geq \mu_1 + c_{t, \sigma_1} )  e^{ - 2 \alpha \lambda^2 \log t } + e^{- 2 \alpha \log t } < 2 e^{- 2 \alpha \lambda^2 \log t } = 2 t^{-2 \alpha \lambda^2} ,
\end{align}
where the second inequality uses a standard Chernoff bound and the third uses $\lambda < 1$. Combining this inequality with \eqref{eqNonAsympLower3beforeUCB} and \eqref{eqNonAsympLower3beforeChernoff}, then substituting into \eqref{eqFiniteLowerHardTerm}, we have shown
\begin{align}
& \sum_{j=j_*+1}^{2 j_*} \P \left( T_k^{(i)}(A_{2 j_*}) < \frac{ \zeta \alpha \log A_{2 j_*} }{ \Delta_k^2 } , \mathcal{E}_{j_*}^{(i)} , k \in S_j^{(i)} \right) \leq  \sum_{j=j_*+1}^{2 j_*}  \sum_{t=1+A_{j-1}}^{A_j} \sum_{ \sigma_k = 1 }^{\floor{ \zeta \alpha \log(A_{2 j_*}) / \Delta_k^2}  } \sum_{ \sigma_1 = \ceil{ j_*^{\beta} / 2 } }^t 2 t^{-2 \alpha \lambda^2} < \frac{ 2 \zeta \alpha \log A_{2 j_*} }{ \Delta_k^2 }  \sum_{t = 1 + A_{j_*} }^{ \infty} t^{1-2\alpha \lambda^2 } .
\end{align}
We also observe that by assumption $j_* \geq (2^{\beta}+1)^{1/\beta}$, and since $j_*, \beta > 1$,
\begin{equation}
\ceil{ ( 2 j_* )^{\beta} } < ( 2 j_* )^{\beta} + 1 < ( 2^{\beta} +1 ) j_*^{\beta} < j_*^{2\beta} \quad \Rightarrow \quad \log A_{2 j_*} = \log \ceil{ ( 2 j_* )^{\beta} } < 2 \beta \log j_* .
\end{equation}
Finally, we use $A_{j_*} \geq j_*^{\beta}$, $\lambda > \sqrt{ 1 / \alpha }$, and an integral approximation to write
\begin{equation}
\sum_{t = 1 + A_{j_*} }^{ \infty} t^{1-2\alpha \lambda^2 } \leq \int_{t = j_*^{\beta}}^{\infty} t^{1-2\alpha \lambda^2} dt = \frac{ j_*^{  2 \beta ( 1 - \alpha \lambda^2 ) } }{ 2 ( \alpha \lambda^2 - 1 ) } .
\end{equation}
Combining the previous three inequalities yields the desired bound on \eqref{eqFiniteLowerHardTerm}.
\end{proof}

We finish the proof of the theorem by showing $\P ( \mathcal{G}_{j_*}^{(i)} , \mathcal{E}_{j_*}^{(i)} ) \rightarrow 0$. Note by the regret decomposition  $R_{ A_{2 j_*} }^{(i)} = \sum_{k=2}^K \Delta_k T_k^{(i)}(A_{2 j_*})$ and the union bound, it suffices to show that for any $k$,
\begin{equation}\label{eqAsympLower}
\lim_{j_* \rightarrow \infty} \P \left( \frac{T_k^{(i)}(A_{2 j_*})}{ \log A_{2 j_*} } < \frac{ (1-\delta) \alpha (1-1/\sqrt{\alpha})^2 }{ \Delta_k^2 } , \mathcal{E}_{j_*}^{(i)} \right)  = 0 .
\end{equation}
We prove \eqref{eqAsympLower} using Lemma \ref{lemFiniteLower}. First, we define $\lambda = \lambda(j_*)$ by
\begin{equation}\label{eqDefnLambdaJstar}
\lambda(j_*) =  \sqrt{ \frac{1 + 1 / \sqrt{ \log j_* }}{\alpha} } .
\end{equation}
We choose $\zeta = \zeta(j_*)$ such that \eqref{eqNonAsympLowerL2} holds with equality, i.e.,
\begin{equation}\label{eqImplicitDefnZeta}
\sqrt{ \alpha \log A_{j_*} } \left( \frac{ \Delta_k ( 1 - \lambda (j_*) ) }{ \sqrt{\zeta (j_*) \alpha \log A_{2 j_*}} } -  \frac{2 \sqrt{2}}{ j_*^{\beta/2} } \right) = \Delta_k .
\end{equation}
We claim (and will return to prove) $\zeta (j_*) \rightarrow ( 1 - 1/\sqrt{\alpha} )^2$ as $j_* \rightarrow \infty$. Assuming this holds, we have $\zeta(j_*) > (1-\delta ) ( 1 - 1 / \sqrt{\alpha} )^2 > 0$ for all large $j_*$. Also, it is clear that $1 / \sqrt{\alpha} < \lambda(j_*) < 1$ for large $j_*$. Hence, the assumptions of Lemma \ref{lemFiniteLower} hold for large $j_*$, so for such $j_*$,
\begin{equation}\label{eqApplyLowerFinite}
\P \left( \frac{T_k^{(i)}(A_{2 j_*})}{ \log A_{2 j_*} } < \frac{ \alpha \zeta(j_*)   }{ \Delta_k^2 } , \mathcal{E}_{j_*}^{(i)} \right)  \leq \frac{ 2 \alpha \beta \zeta(j_*)  \log ( j_*) j_*^{  2 \beta ( 1 - \alpha \lambda(j_*)^2 ) } }{  \Delta_k^2 ( \alpha \lambda(j_*)^2 - 1 ) }   + \left( 1 - \frac{1}{nK} \right)^{j_*} .
\end{equation}
Note that by monotonicity and $\zeta(j_*) > (1-\delta ) ( 1 - 1 / \sqrt{\alpha} )^2$ for large $j_*$, \eqref{eqAsympLower} will follow if we can show the right side of \eqref{eqApplyLowerFinite} vanishes. Clearly $(1 - 1/ (nK) )^{j_*} \rightarrow 0$. For the first term in \eqref{eqApplyLowerFinite}, note
\begin{align}
\frac{  \log ( j_*) j_*^{  2 \beta ( 1 - \alpha \lambda(j_*)^2 ) } }{   \alpha \lambda(j_*)^2 - 1  } = ( \log j_* )^{3/2} j_*^{ -2 \beta / \sqrt{\log j_*} } = e^{ \frac{3}{2} \log ( \log j_* ) - 2 \beta \sqrt{ \log j_* } } \xrightarrow[j_* \rightarrow \infty]{} 0 ,
\end{align}
so since $\alpha,\beta,\Delta_k$ are constants and $\lim_{j_* \rightarrow \infty} \zeta(j_*) < \infty$, the first term in \eqref{eqApplyLowerFinite} vanishes as well.

It remains to show $\zeta (j_*) \rightarrow ( 1 - 1/\sqrt{\alpha} )^2$. By definition $A_{j_*} = \ceil{j_*^{\beta}}$, one can verify 
\begin{equation}
\lim_{j_* \rightarrow \infty} \frac{ \log A_{j_*} }{ \log A_{2 j_*} } = 1 , \quad \lim_{j_* \rightarrow \infty} \frac{ \log A_{j_*} }{ j_*^{\beta} } = 0 ,
\end{equation}
which, combined with \eqref{eqDefnLambdaJstar} and \eqref{eqImplicitDefnZeta}, implies
\begin{equation}
1 = \lim_{j_* \rightarrow \infty} \frac{1-\lambda(j_*)}{ \sqrt{ \zeta(j_*) } } = \lim_{j_* \rightarrow \infty} \frac{1 - \sqrt{ \frac{1 + 1 / \sqrt{ \log j_* }}{\alpha} }}{ \sqrt{ \zeta(j_*)} } ,
\end{equation}
so $\zeta (j_*) \rightarrow ( 1 - 1/\sqrt{\alpha} )^2$ indeed holds.

\section{Experimental details} \label{appExperimentDetails}

In Table \ref{tabOtherResults}, we show the average regret $\frac{1}{n} \sum_{i=1}^n R_T^{(i)}$ (reported as mean $\pm$ standard deviation) at horizon $T = 10^5$ relative to the algorithm from \cite{chawla2020gossiping} for various values of $m$, $K$, and $\eta$. We use the same synthetic and real datasets, define the same uniform and omniscient malicious agent strategies, and choose $n = 25, \beta = 2 , \alpha = 4, S = \ceil{K/n}$ as in Section \ref{secExperiments}.

\begin{table}
\caption{Average regret at $T = 10^5$ relative to the algorithm from \cite{chawla2020gossiping}} \label{tabOtherResults}
\centering
\begin{tabular}{c c c c c c c c}
\multicolumn{3}{c}{} & \multicolumn{2}{c}{Synthetic data} & \multicolumn{2}{c}{Real data}  \\ 
$m$ & $K$ & $\eta$ & Uniform & Omniscient & Uniform & Omniscient \\ \hline
10 & 75 & 2 & $0.450 \pm 0.160$ & $0.413 \pm 0.052$ & $0.582 \pm 0.157$ & $0.543 \pm 0.136$ & $$ \\ 
10 & 75 & 3 & $0.415 \pm 0.172$ & $0.415 \pm 0.046$ & $0.578 \pm 0.168$ & $0.567 \pm 0.204$ & $$ \\ 
10 & 75 & 4 & $0.435 \pm 0.124$ & $0.395 \pm 0.034$ & $0.525 \pm 0.143$ & $0.593 \pm 0.227$ & $$ \\ 
10 & 100 & 2 & $0.413 \pm 0.086$ & $0.401 \pm 0.076$ & $0.560 \pm 0.142$ & $0.483 \pm 0.080$ & $$ \\ 
10 & 100 & 3 & $0.464 \pm 0.235$ & $0.412 \pm 0.114$ & $0.564 \pm 0.143$ & $0.504 \pm 0.108$ & $$ \\ 
10 & 100 & 4 & $0.418 \pm 0.107$ & $0.404 \pm 0.070$ & $0.535 \pm 0.139$ & $0.521 \pm 0.119$ & $$ \\ 
15 & 75 & 2 & $0.418 \pm 0.088$ & $0.433 \pm 0.047$ & $0.547 \pm 0.119$ & $0.603 \pm 0.217$ & $$ \\ 
15 & 75 & 3 & $0.411 \pm 0.081$ & $0.439 \pm 0.054$ & $0.551 \pm 0.138$ & $0.651 \pm 0.229$ & $$ \\ 
15 & 75 & 4 & $0.423 \pm 0.105$ & $0.451 \pm 0.062$ & $0.557 \pm 0.109$ & $0.645 \pm 0.220$ & $$ \\ 
15 & 100 & 2 & $0.430 \pm 0.113$ & $0.408 \pm 0.040$ & $0.507 \pm 0.120$ & $0.501 \pm 0.058$ & $$ \\ 
15 & 100 & 3 & $0.429 \pm 0.133$ & $0.414 \pm 0.058$ & $0.494 \pm 0.120$ & $0.514 \pm 0.089$ & $$ \\ 
15 & 100 & 4 & $0.420 \pm 0.085$ & $0.412 \pm 0.058$ & $0.514 \pm 0.110$ & $0.511 \pm 0.078$ & $$ \\ 
\end{tabular}
\end{table}

}

\end{document}